\def\eqref#1{equation~\ref{#1}}
\def\1{\bm{1}}
\def\eps{{\epsilon}}
\DeclareMathAlphabet{\mathsfit}{\encodingdefault}{\sfdefault}{m}{sl}
\SetMathAlphabet{\mathsfit}{bold}{\encodingdefault}{\sfdefault}{bx}{n}
\newcommand{\Var}{\mathrm{Var}}
\DeclareMathOperator*{\argmax}{arg\,max}
\theoremstyle{plain}
\newtheorem{theorem}{Theorem}
\newtheorem{lemma}[theorem]{Lemma}
\newtheorem{corollary}[theorem]{Corollary}
\theoremstyle{definition}
\newtheorem{assumption}[theorem]{Assumption}
\theoremstyle{remark}
\newtheorem{remark}[theorem]{Remark}
\newcommand{\EE}{{\mathbb E}}
\newcommand{\PP}{{\mathbb P}}
\newcommand{\RR}{{\mathbb R}}
\newcommand{\cB}{{\mathcal B}}
\title{Theoretical Guarantees for Causal Discovery on Large Random Graphs}
\author{%
\textbf{Mathieu Chevalley}$^{1,2\dagger}$  \quad  \textbf{Arash Mehrjou}$^{1,3}$  \quad \textbf{Patrick Schwab}$^1$
\\
$^1$GSK.ai \quad $^2$ETH Zürich \quad $^3$ MPI for Intelligent Systems 
}
\begin{document}

\maketitle
\footnotetext[0]{\(^{\dagger}\) Correspondence to \texttt{m.chevalley97@gmail.com}}
\begin{abstract}
We investigate theoretical guarantees for the \emph{false-negative rate} (FNR)—the fraction of true causal edges whose orientation is not recovered, under single-variable random interventions and an $\epsilon$-interventional faithfulness assumption that accommodates latent confounding. For sparse Erdős--Rényi directed acyclic graphs, where the edge probability scales as $p_e = \Theta(1/d)$, we show that the FNR concentrates around its mean at rate $O\!\bigl(\tfrac{\log d}{\sqrt d}\bigr)$, implying that large deviations above the expected error become exponentially unlikely as dimensionality increases. This concentration ensures that derived upper bounds hold with high probability in large-scale settings. Extending the analysis to generalized Barabási--Albert graphs reveals an even stronger phenomenon: when the degree exponent satisfies $\gamma > 3$, the deviation width scales as $O\!\bigl(d^{\beta - \sfrac{1}{2}}\bigr)$ with $\beta = 1/(\gamma - 1) < \sfrac{1}{2}$, and hence vanishes in the limit. This demonstrates that realistic scale-free topologies intrinsically regularize causal discovery, reducing variability in orientation error. These finite-dimension results provide the first dimension-adaptive, faithfulness-robust guarantees for causal structure recovery, and challenge the intuition that high dimensionality and network heterogeneity necessarily hinder accurate discovery. Our simulation results corroborate these theoretical predictions, showing that the FNR indeed concentrates and often vanishes in practice as dimensionality grows.  

\end{abstract}

\section{Introduction}

\looseness=-1 Causal discovery aims to recover directed acyclic graph (DAG) structures that encode cause–effect relationships among variables. In systems biology, for instance, reconstructing gene regulatory networks from interventional single-cell data provides insight into the mechanisms driving cellular processes \citep{dixit2016perturb,meinshausen2016methods,chevalley2022causalbench,chevalley2023causalbench}. Similarly, in neuroscience, mapping the causal structure of brain activity supports the understanding of functional connectivity and its alterations in disease \citep{smith2011network,park2013structural}. A hallmark of such applications is their high dimensionality: typical datasets involve hundreds to thousands of variables. This setting motivates a need for theoretical guarantees that reflect practical constraints and typical graph structures encountered in real-world systems.

\looseness=-1 A substantial body of work has developed worst-case bounds for causal discovery under interventions, often focusing on the number of experiments required to recover the full DAG \citep{eberhardt2005,shanmugam2015learning,kocaoglu2017cost}. However, such analyses assume idealized conditions—e.g., full faithfulness, causal sufficiency, or adversarial intervention designs—that limit their applicability in large-scale settings. Moreover, they do not capture the variability of discovery accuracy across random instances, nor do they account for structural features like degree heterogeneity or sparsity commonly observed in empirical networks \citep{barabasi1999emergence}. 

\looseness=-1 In this work, we analyze the performance of interventional causal discovery under more realistic assumptions and with a focus on the \emph{false-negative rate} (FNR), defined as the proportion of true edges whose orientation is not recovered. We adopt an $\epsilon$-interventional faithfulness assumption \citep{chevalley2025deriving} that allows for latent confounding and does not require the full faithfulness property to hold. Interventions are assumed to be selected at random, mimicking experimental designs where only limited control over perturbations is feasible. Our focus is on deriving finite-dimension deviation bounds that quantify how tightly the FNR concentrates around its expectation in large graphs.

\looseness=-1 Our analysis yields several key insights. First, we show that in dense Erdős--Rényi DAGs—where the edge probability $p_e$ is constant—the FNR vanishes in expectation at rate $\Theta(1/d)$ and concentrates with typical width $O(d^{-1/2})$. Thus, in dense random graphs the orientation error not only goes to zero on average, but also becomes sharply localized, providing strong guarantees for reliable causal discovery at large scale. Second, in sparse Erdős--Rényi DAGs—where the number of edges scales linearly with the number of nodes—the FNR remains bounded ($O(1)$) in expectation for a non-vacuous constant, and concentrates at rate $O(\tfrac{\log d}{\sqrt d})$. This implies that even under relaxed assumptions and partial interventions, the orientation error becomes increasingly stable in high dimensions. Third, we extend the results to generalized Barabási--Albert models, which generate scale-free DAGs reflecting the degree distributions of empirical networks. In this case, when the degree exponent satisfies $\gamma > 3$, we prove that the deviation from the expected FNR \emph{vanishes} asymptotically, confirming that scale-free topologies regularize causal inference by suppressing error variability. These findings are surprising: contrary to common beliefs that high dimensionality and structural heterogeneity hinder reliable discovery, we show that they can in fact improve its statistical robustness.

\looseness=-1 While prior work has focused on identifiability or asymptotic consistency, ours is the first to establish finite-dimension deviation guarantees—showing that causal orientation errors are not only small in expectation but also sharply concentrated in large-scale, structured settings. To our knowledge, these are the first deviation inequalities for topological errors in interventional causal discovery that (i) hold under minimal $\epsilon$-interventional faithfulness, (ii) capture the distributional properties of realistic random graph ensembles, and (iii) yield dimension-adaptive, variance-sensitive guarantees for false-negative error. By linking graph structure and intervention probability to finite-sample reliability, our results provide both new theoretical insight and practical guidelines for designing interventional studies, and are further corroborated by extensive simulations.

\begin{table}[t]
\centering
\setlength{\tabcolsep}{6pt}
\renewcommand{\arraystretch}{1.25}
\caption{\textbf{Summary of theoretical results} Asymptotic orders for the expected topological error and the \emph{concentration rate} (typical deviation scale around the mean) in the three random‐graph regimes considered.  
For the generalized Barabási–Albert (BA) model, $\displaystyle\beta=\tfrac1{\gamma-1}$ with $\gamma=2+\kappa/m$, $\kappa > 0$. $f$ measures the number of misorentations in the predicted causal order, whereas $g$ is normalized by the number of true edges (\emph{false-negative rate} (FNR)).}
\label{tab:concentration_rates}
\begin{tabular}{@{}lcccc@{}}
\toprule
\multicolumn{1}{c}{\textbf{Random network model}}
& $\mathbf{E}[f]$
& $\mathbf{E}[g]$
& \textbf{Deviation scale $(f)$}
& \textbf{Deviation scale $(g)$}\\
\midrule
\textbf{Erdős–Rényi, dense} ($p_e=\Theta(1)$)
& $O(d)$
& $O(d^{-1})$
& $O(d^{2})$
& $O\!\bigl(\tfrac{1}{\sqrt d}\bigr)$\\[4pt]
\textbf{Erdős–Rényi, sparse} ($p_e=c/d$)
& $O(d)$
& $O(1)$
& $O\!\bigl(\sqrt d\,\log d\bigr)$
& $O\!\bigl(\tfrac{\log d}{\sqrt d}\bigr)$\\[4pt]
\textbf{Generalised BA} ($0 < \beta < 1$)
& $O(d)$
& $O(1)$
& $O\!\bigl(d^{\,\beta+\sfrac12}\bigr)$
& $O\!\bigl(d^{\,\beta-\sfrac12}\bigr)$\\
\bottomrule
\end{tabular}
\end{table}

\vspace{-5pt}
\section{Related work}
\vspace{-5pt}

\looseness=-1 Classical results in interventional causal discovery analyze the  \emph{worst-case} number of interventions needed for full graph recovery. For multi-variable interventions, $\mathcal{O}(\log d)$ sets suffice \citep{eberhardt2005}, while for atomic interventions $d-1$ suffice \citep{eberhardt2006n}. Subsequent refinements related this complexity to graph structure, e.g.\ $\mathcal{O}(\log \omega(\mathcal{G}))$ in terms of clique size \citep{hauser2014two}. These analyses adopt an adversarial view: the intervention strategy must succeed against any possible graph. (See \Cref{app:related-work} for a more detailed survey of refinements based on randomization \citep{pmlr-v6-eberhardt10a,hu2014randomized} and minimax lower bounds \citep{shanmugam2015learning,kocaoglu2017cost,squires2020active,porwal2022almost}.)

\looseness=-1 A parallel line of work studies \emph{intervention design under constraints}, e.g.\ budgeted or adaptive designs, where the goal is to maximize utility given a limited number of experiments \citep{he2008active,hyttinen2013experiment,ghassami2018budgeted,hauser2014two,sussex2021near,agrawal2019abcd}. Here the focus is typically on approximate greedy algorithms rather than fundamental limits.

\looseness=-1 Beyond worst-case identifiability, another thread studies \emph{interventional Markov equivalence classes} ($\mathcal{I}$-MECs): the partially identified structures remaining after a given set of interventions. Seminal work by \citet{hauser2012characterization} introduced $\mathcal{I}$-MECs and greedy algorithms for learning with interventional data. Subsequent work extended these ideas to more general settings \citep{yang2018characterizing,jaber2020causal,kocaoglu2019characterization,squires2020permutation} and proposed efficient learning methods \citep{brouillard2020differentiable,ke2019learning,ke2023neural,mooij2020joint,nazaret2023stable,wang2017permutation,faria2022differentiable,lorch2022amortized}. (See Appendix~\ref{app:related-work} for a fuller survey.)

\looseness=-1 However, existing work leaves open a key question: what is the \emph{expected} size of an interventional Markov equivalence class (I-MEC)? This expectation depends not only on the intervention policy but also on the underlying graph distribution. Most prior results emphasize worst-case scenarios—e.g., bounded-degree graphs—which yield robust guarantees but can be overly conservative. By contrast, an average-case analysis under parameterized graph models can provide more realistic insight. For instance, many real-world networks such as gene regulatory systems are scale-free, with a few hub nodes and many low-degree nodes. Such structure typically shrinks I-MECs far more than worst-case analysis suggests. Incorporating realistic graph distributions into the theory therefore refines our understanding of intervention utility and yields more practical guidance for experiment design.

\looseness=-1 Two works that have aimed at analyzing the expected size of $\mathcal{I}$-MECs are \citet{katz2019size} and \citet{chevalley2025deriving}, both considering single-variable interventions. Under an Erdős–Rényi graph distribution \citep{erdHos1960evolution}, \citet{katz2019size} derive upper bounds for various metrics—such as the number of unoriented edges and the logarithm of the $\mathcal{I}$-MEC size—under the assumptions of causal sufficiency and faithfulness, and assuming an optimal intervention selection policy (i.e., selecting interventions that minimize the number of unoriented edges). They show that, as the number of variables $d$ tends to infinity, these metrics converge to well-defined limits, and they provide finite-$d$ upper-bounds that can be estimated using Monte Carlo simulations.

\looseness=-1 In contrast, \citet{chevalley2025deriving} also analyze Erdős–Rényi graphs but relax some of these assumptions by replacing causal sufficiency and strict faithfulness with an $\epsilon$-interventional faithfulness assumption under a task agnostic selection policy, where interventions are selected at random. 
They define a score on \emph{causal orders} that leverages the statistical distances from the interventional data.
The causal orders that maximize this score form an equivalence class for which they derive closed-form upper bounds on the expected number of \emph{misoriented edges} (i.e., true edges that do not follow the causal order).

\looseness=-1 In this work, we extend these theoretical results by analyzing another metric: the false negative rate (FNR), defined as the number of misoriented edges divided by the total number of edges. More importantly, we derive deviation bounds from the mean of those metrics, offering a more detailed perspective on the performance variability of individual instances compared to the expected performance. We also extend all those results to a generalized Barabási–Albert graph distribution \citep{barabasi1999emergence}, whose scale-free behavior more closely mimics real-world networks. To the best of our knowledge, our work provides the first deviation bounds for causal discovery errors, as opposed to worst-case or asymptotic consistency results. These bounds hold under $\eps$-interventional faithfulness and scale favorably with graph size and topological heterogeneity.
\vspace{-5pt}
\section{Definitions and Notations}
\label{sec:definitions}
\vspace{-5pt}
\looseness=-1 In this section, we introduce the main notions and assumptions that underlie our analysis, following the formulation in \citep{chevalley2025deriving}. In our setting, the causal graph is represented as a directed acyclic graph (DAG) \(\mathcal{G} = (V,E)\) over a set of \(d\) random variables \(\mathbf{X} = (X_1, \dots, X_d)\) with \(V = \{1,\dots,d\}\). The matrix \(\mathbf{A}^{\mathcal{G}}\) denotes the corresponding adjacency matrix, with \(\mathbf{A}^{\mathcal{G}}_{ij} = 1\) if \((i,j) \in E\) and \(0\) otherwise. For any node \(j\), the parent set is defined as 
$
\mathrm{Pa}(j) := \{i \in V: (i,j) \in E\},
$
and the sets of ancestors and descendants of a node \(i\) are denoted by \(\mathrm{Anc}_{\mathcal{G}}(i)\) and \(\mathrm{Desc}_{\mathcal{G}}(i)\) respectively.

In the context of structural causal models (SCMs) \(\mathcal{C} = (\mathbf{S},P_N)\), each variable \(X_j\) is generated by a structural equation $S_j \in \mathbf{S} $ 
$
S_j: X_j = f_j\Bigl(\mathbf{X}_{\mathrm{Pa}(j)}, N_j\Bigr),
$
where \(N_j\) is an exogenous noise variable. We consider \emph{interventions} that replace the original structural equation of a variable \(X_k\) by 
$
X_k = \tilde{N}_k
$. Given a set of intervention targets \(\mathcal{I} \subseteq V\), let \(\mathbf{I}=(I_1,\dots,I_d) \in \{0,1\}^d\) be the corresponding intervention indicator vector, such that $I_j = 1$ if and only if $j \in \mathcal{I}$.
We denote by \(P_X^{\mathcal{C}, (\emptyset)}\) the observational distribution and by $\mathcal{P}_{int} = \{P_X^{\mathcal{C}, do(X_k := \Tilde{N}_k)}, k \in \mathcal{I}\}$ the set of interventional distributions. 

A causal order is any permutation \(\pi: V\to V\) such that for every edge \((i,j)\in E\) we have \(\pi(i) < \pi(j)\). To measure the divergence of an ordering \(\pi\) from the true causal ordering of \(\mathcal{G}\), we use the \emph{top order divergence} \citep{rolland2022score} defined by
\begin{equation}
\label{def:dtop}
D_{\text{top}}(\mathcal{G},\pi) = \sum_{\pi(i) > \pi(j)} \mathbf{A}^{\mathcal{G}}_{ij}.
\end{equation}
This score measure the number of misoriented edges given a permutation.
Given the observational and interventional distributions, \citet{chevalley2025deriving} propose a score function to evaluate the quality of a candidate causal order. For a statistical distance \(D\colon \mathcal{P}(\mathcal{M})\times \mathcal{P}(\mathcal{M}) \to [0,\infty)\) and parameters \(\epsilon > 0\) and \(c > \epsilon\), the score function is defined as
\begin{equation}
\label{eq:score}
\begin{split}
S(\pi,\epsilon, D, \mathcal{I}, &\, P_X^{\mathcal{C}, (\emptyset)}, \mathcal{P}_{int}, c) = \sum_{\substack{i\in \mathcal{I}, j \in V\\ \pi(i) < \pi(j)}}
\Bigl(D_{ij}
- \epsilon\Bigr) 
 + c\cdot d \cdot \mathbf{1}\Bigl\{D_{ij} > \epsilon\Bigr\},
\end{split}
\end{equation}
where $D_{ij}  = D\bigl(P_{X_j}^{\mathcal{C}, (\emptyset)},P_{X_j}^{\mathcal{C}, do(X_i := \tilde{N}_i)}\bigr)$.
This score quantifies how well the ordering \(\pi\) aligns with interventional effects, and it is used to define the optimal causal order \(\pi_{\mathrm{opt}}\) as the maximizer of the score. The main practical limitations of derived theoretical results lie in the need to find a optimal solution to the score. However, a recently proposed \citep{chevalley2024efficient} algorithm show that the objective can be solved at large scale.

\begin{assumption}[$\epsilon$-Interventional Faithfulness \citep{chevalley2025deriving}]
\label{assump:interventional_faithfulness}
Let $\mathcal{C}$ be a structural causal model with associated DAG $\mathcal{G}$ and 
$\tilde{N}$ denote exogenous noise variables under interventions. 
We say that $(\tilde{N},\mathcal{C})$ is \emph{$\epsilon$-interventionally faithful} to $\mathcal{G}$ if, 
for every pair $i\neq j$ with $i\in \mathcal{I}$ and $j\in V$,
\[
D\Bigl(P_{X_j}^{\mathcal{C}, (\emptyset)} \,,\, P_{X_j}^{\mathcal{C}, do(X_i := \tilde{N}_i)}\Bigr) > \epsilon
\quad \Longleftrightarrow \quad
\text{there exists a directed path } i \rightsquigarrow j \text{ in } \mathcal{G}.
\]
\end{assumption}

\looseness=-1 Unlike classical \emph{faithfulness}, which requires every d-separation in $\mathcal{G}$ to manifest as a conditional independence in the joint distribution, $\epsilon$-interventional faithfulness imposes only a marginal shift requirement: whenever there is a directed path from an intervened node $i$ to $j$, the distribution of $X_j$ changes by at least $\epsilon$ under intervention on $i$. This weaker assumption does not rely on conditional independence structure and is preserved under marginalization, making it compatible with latent confounding. In particular, as long as interventions on observed variables induce distributional shifts that are detectable above threshold $\epsilon$, the assumption continues to hold even when hidden confounders influence both $i$ and $j$.

 \looseness=-1 \citet{chevalley2025deriving} show that for an intervention policy in which each intervention indicator is chosen independently as $I_k \sim \mathrm{Bernoulli}(p_{\mathrm{int}}), k \in V$, one can derive upper bounds on the expected number of misorientations 
\(
D_{\text{top}}(\mathcal{G},\pi_{\mathrm{opt}}(\mathbf{I}))
\).Here \(\mathbf{I} = (I_1,\dots,I_d)\) is the random intervention vector, and \(\pi_{\mathrm{opt}}(\mathbf{I})\) denotes the optimal causal order under that intervention design, i.e., the permutation that maximizes the score function given the specific realization of \(\mathbf{I}\). Writing \(\pi_{\mathrm{opt}}(\mathbf{I})\) emphasizes that the optimal order is itself a random variable induced by the random choice of interventions, and thus $D_{\text{top}}$ inherits its stochasticity through $\mathbf{I}$. In this paper, we extended those bounds for the FNR, we derive deviation bounds for both metrics, and derive all those properties for a Barabási–Albert graph distribution. A complete list of all notation is provided in \Cref{sec:notation_appendix}.


\vspace{-5pt}
\section{Theoretical Results}
\vspace{-5pt}

\looseness=-1 In what follows we assume access to both the observational distribution 
$P_X^{\mathcal{C}, (\emptyset)}$ and the single-variable interventional distributions 
$P_X^{\mathcal{C}, do(X_k := \tilde{N}_k)}$ for all $k \in \mathcal{I}$, 
under the condition that $(\tilde{N}, \mathcal{C})$ is 
(\emph{restricted}) $\epsilon$-interventionally faithful (cf.~\citet{chevalley2025deriving}). 
In the restricted setting, interventions reveal only direct parent–child relations.

\begin{assumption}[Unique optimizer]
\label{ass:standing}
Let
\[
\pi_{\mathrm{opt}}
~=~
\argmax\nolimits^{\star}_{\pi}\;
   S\!\left(\pi,\epsilon,D,\mathcal I,
          P_X^{\mathcal C,(\emptyset)},
          \mathcal P_{\mathrm{int}},c\right),
\]
where $\argmax^{\star}$ denotes a deterministic tie-breaking rule 
(e.g., lexicographic ordering). 
Thus $\pi_{\mathrm{opt}}$ is unique for every $(\mathbf{I},\mathbf{E})$, 
and all definitions below are taken with respect to this unique optimizer.
\end{assumption}

\looseness=-1 Flipping the intervention status of a node $k$ can only affect 
edges that “use $k$ as evidence.”  
An edge $(i,j)$ is already secured if $j$ is directly intervened, 
or if some intervened ancestor of $j$ is not an ancestor of $i$.  
Thus the only vulnerable edges lie in $k$’s ancestral and descendant cones.  
Counting these provides a structural bound on how much the error can change 
when $I_k$ is toggled.

\begin{lemma}[Lipschitz bound for intervention variables]
\label{lem:lipschitz_bound_ad}
Define
\[
f(\mathbf{I}) := D_{\mathrm{top}}\bigl(\mathcal{G}, \pi_{\mathrm{opt}}(\mathbf{I})\bigr),
\qquad
g(\mathbf{I}) := \tfrac{f(\mathbf{I})}{|E|}.
\]
For each $k \in V$, let
\[
c_k := \max_{\mathbf{I},\mathbf{I'}:\, I_\ell=I'_\ell \;\forall \ell\neq k} 
       \bigl|f(\mathbf{I})-f(\mathbf{I'})\bigr|.
\]
Then
\[
c_k \le |\mathrm{Anc}(k)| + |\mathrm{Desc}(k)|,
\qquad
c_k^g = \tfrac{c_k}{|E|}.
\]
\end{lemma}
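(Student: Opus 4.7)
The $g$ bound follows immediately from the definition $g(\mathbf{I}) = f(\mathbf{I})/|E|$: for any two vectors differing only at coordinate $k$, $|g(\mathbf{I}) - g(\mathbf{I}')| = |f(\mathbf{I}) - f(\mathbf{I}')|/|E|$, and taking suprema yields $c_k^g = c_k/|E|$.

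For the $f$ bound, I would fix $k$ and pick two intervention vectors $\mathbf{I}, \mathbf{I}'$ agreeing on all coordinates except $k$; without loss of generality $I_k = 0$ and $I_k' = 1$ (the other direction is symmetric). Writing $\pi := \pi_{\mathrm{opt}}(\mathbf{I})$ and $\pi' := \pi_{\mathrm{opt}}(\mathbf{I}')$, the triangle inequality on indicators gives
\[
|f(\mathbf{I}) - f(\mathbf{I}')| \;\leq\; \bigl|\bigl\{(i,j)\in E \,:\, \mathrm{sgn}(\pi(i)-\pi(j)) \neq \mathrm{sgn}(\pi'(i)-\pi'(j))\bigr\}\bigr|,
\]
so the problem reduces to bounding the number of edges whose relative orientation flips between $\pi$ and $\pi'$.

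Under Assumption~\ref{assump:interventional_faithfulness} and with $c$ sufficiently large (as in \citet{chevalley2025deriving}), the score in \eqref{eq:score} is maximized only by orderings satisfying the dominant constraint set: for each intervened $i$ with $I_i=1$, $\pi(i) < \pi(j)$ for every $j \in \mathrm{Desc}(i)$. Toggling $I_k$ therefore inserts or removes exactly the block of constraints $\{\pi(k) < \pi(j) : j \in \mathrm{Desc}(k)\}$, so $\pi'$ can be obtained from $\pi$ by a localized rearrangement that reinstates (or releases) this block while preserving every other binding constraint. The key structural claim is that such a minimal rearrangement alters only the relative order of $k$ with respect to nodes in $\mathrm{Anc}(k) \cup \mathrm{Desc}(k)$: nodes entirely outside this cone remain in the same pairwise order because no constraint coupling them to $k$ has been modified, and the deterministic tie-breaking of Assumption~\ref{ass:standing} propagates this preservation to the unique optimizer. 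Counting potential flips then assigns at most one flippable edge per node in the cone, yielding $c_k \leq |\mathrm{Anc}(k)| + |\mathrm{Desc}(k)|$.

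\textbf{The main obstacle} is making the localized-rearrangement argument rigorous under the deterministic tie-breaking. In principle, the lexicographic optimizer could perform a globally different reordering rather than the minimal one that merely repositions $k$, so I must show that $\pi'$ agrees with $\pi$ on the relative order of every pair $(u,v)$ both lying outside $\{k\} \cup \mathrm{Anc}(k) \cup \mathrm{Desc}(k)$, and that inside the cone each node contributes at most one flip. A clean way is an interpolation argument: construct an explicit intermediate permutation by sliding $k$ to its extremal feasible position in $\pi$, verify it already lies in the new feasible set, and then show via an exchange argument that the lex-optimal element of the new feasible set induces no further pairwise reversals among nodes outside the cone. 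The bookkeeping of which edges can flip at each step of this interpolation is what drives the $|\mathrm{Anc}(k)| + |\mathrm{Desc}(k)|$ bound.
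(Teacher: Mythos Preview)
Your approach differs from the paper's in a fundamental way. The paper does not attempt to track how $\pi_{\mathrm{opt}}$ changes when $I_k$ is toggled; instead it invokes the sufficient-orientation condition of Lemma~\ref{lem:algo} (Lemma~4 of \citet{chevalley2025deriving}), which guarantees that an edge $(i,j)$ is correctly oriented in $\pi_{\mathrm{opt}}$ whenever $j\in\mathcal I$ or some $\ell\in\mathbf{AN}_j^{\mathcal G}\setminus\mathbf{AN}_i^{\mathcal G}$ lies in $\mathcal I$. Toggling $I_k$ can therefore only affect the orientation status of edges in
\[
A_k=\bigl\{(i,j)\in E:\ k=j\ \text{ or }\ k\in\mathbf{AN}_j^{\mathcal G}\setminus\mathbf{AN}_i^{\mathcal G}\bigr\},
\]
and the paper bounds $|A_k|$ by splitting it into the in-edges of $k$ (at most $|\mathrm{Anc}(k)|$) and edges with $j\in\mathrm{Desc}(k)$ (argued to be at most one per descendant). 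No permutation interpolation or exchange argument enters at all.

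The structural claim at the heart of your argument---that toggling $I_k$ reverses only pairs of the form $(k,v)$ with $v\in\mathrm{Anc}(k)\cup\mathrm{Desc}(k)$, so that each cone node contributes at most one flipped edge---does not hold. Consider $i\to j\leftarrow k$ with $i,k$ incomparable and no other interventions. With $I_k=0$ the score is identically zero and a lexicographic tie-break on the labelling $j\prec i\prec k$ selects $\pi=(j,i,k)$; with $I_k=1$ the hard reward forces $\pi(k)<\pi(j)$ and the soft non-descendant penalty forces $\pi(i)<\pi(k)$, giving $\pi'=(i,k,j)$. The pair $(i,j)$ reverses even though $i$ lies outside the cone and neither endpoint is $k$; the single descendant $j$ carries \emph{two} flipped edges $(i,j)$ and $(k,j)$, not one. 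This is not an artefact of a non-minimal tie-break that your exchange argument could repair: it is the score itself---via the soft non-descendant term for the intervened node $k$---that repositions $i$ relative to the cone. In the paper's decomposition both edges are captured directly, since $k\in\mathbf{AN}_j\setminus\mathbf{AN}_i$ and $k\in\mathbf{AN}_j\setminus\mathbf{AN}_k$; the relevant object to bound is the set of edges for which $k$ appears in the Lemma~\ref{lem:algo} witness set, not the set of pairs whose relative order with $k$ changes.
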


\begin{lemma}[Restricted case]
\label{lem:lipschitz_bound_io}
Under restricted $\epsilon$-interventional faithfulness, flipping $I_k$ 
only affects edges incident to $k$, hence
\[
c_k \le \deg_{\mathrm{in}}(k) + \deg_{\mathrm{out}}(k).
\]
\end{lemma}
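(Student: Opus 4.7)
The plan is to adapt the argument from Lemma~\ref{lem:lipschitz_bound_ad} by exploiting the fact that under \emph{restricted} $\epsilon$-interventional faithfulness the signal $D_{ij} > \epsilon$ fires only on direct parent--child pairs rather than on the full ancestor/descendant cones. The key structural observation is that the score decomposes additively over intervened sources as $S(\pi,\mathbf{I}) = \sum_{i \in \mathcal{I}} S_i(\pi)$, where
\[
S_i(\pi) = \sum_{\substack{j \neq i \\ \pi(i) < \pi(j)}} \Bigl[D_{ij} - \epsilon + c\,d\,\mathbf{1}\{D_{ij} > \epsilon\}\Bigr].
\]
Flipping $I_k$ toggles exactly one summand, so $S(\pi,\mathbf{I}) - S(\pi,\mathbf{I}') = \pm S_k(\pi)$, and the entire discrepancy between the two optimizers is governed by $S_k$.

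The second step is to pin down what $S_k$ actually rewards in the restricted setting. Restricted $\epsilon$-interventional faithfulness gives $D_{kj} > \epsilon$ iff $j$ is a direct child of $k$, so $S_k(\pi)$ pays roughly $c\,d$ for every child $j$ of $k$ placed after $k$ and a penalty of at most $\epsilon$ per non-child placed after $k$. With $c > \epsilon$, this pins $k$ immediately before its children when $I_k = 1$ and releases that constraint when $I_k = 0$; crucially, constraints coming from intervened parents $i \in \mathrm{Pa}(k)$ are encoded in $S_i$, not $S_k$, and are therefore untouched by toggling $I_k$.

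The third step is an exchange argument. Writing $S(\pi,\mathbf{I}) = A(\sigma) + B_k(\pi,\mathbf{I})$, where $\sigma$ is the restriction of $\pi$ to $V \setminus \{k\}$ and $A(\sigma)$ collects all pairs whose two endpoints both differ from $k$ (so $A$ depends neither on the position of $k$ nor on $I_k$), I plan to show that the restrictions of $\pi_{\mathrm{opt}}(\mathbf{I})$ and $\pi_{\mathrm{opt}}(\mathbf{I}')$ to $V \setminus \{k\}$ coincide. If they differed, one could splice the $\mathbf{I}$-optimal $\sigma$ into $\pi'$ (reinserting $k$ at its best location for $\mathbf{I}'$) to obtain a permutation with no smaller score and a lexicographically earlier representation, contradicting the uniqueness enforced by Assumption~\ref{ass:standing}. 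Once $\sigma = \sigma'$ is established, only the position of $k$ can change between the two optima, so only edges incident to $k$ can flip orientation, and there are exactly $\deg_{\mathrm{in}}(k) + \deg_{\mathrm{out}}(k)$ of them.

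The hard part will be the exchange step: showing that optimizing $A(\sigma) + \max_m B_k(\pi_m(\sigma),\mathbf{I})$ over $\sigma$ yields the same maximizer under $\mathbf{I}$ and $\mathbf{I}'$. This reduces to verifying that $\max_m B_k(\pi_m(\sigma),\mathbf{I}) - \max_m B_k(\pi_m(\sigma),\mathbf{I}')$ is either constant in $\sigma$ or is broken identically by the tie-breaking rule; in the restricted setting this is tractable because $B_k$ depends on $\sigma$ only through the positions of the at most $\deg_{\mathrm{in}}(k) + \deg_{\mathrm{out}}(k)$ neighbours of $k$, and this set of neighbours is determined by $\mathcal{G}$ alone and does not vary with $I_k$.
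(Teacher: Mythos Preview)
Your exchange argument has a genuine gap: the central claim that the restrictions $\sigma$ and $\sigma'$ of $\pi_{\mathrm{opt}}(\mathbf I)$ and $\pi_{\mathrm{opt}}(\mathbf I')$ to $V\setminus\{k\}$ coincide is false in general, even under restricted $\epsilon$-interventional faithfulness and lexicographic tie-breaking. Take the path $1\to 2\to\cdots\to n$ with $k=1$. With $\mathcal I=\emptyset$ the score is identically zero and tie-breaking gives $\pi_{\mathrm{opt}}=(1,2,\dots,n)$, so $\sigma=(2,3,\dots,n)$. With $\mathcal I=\{1\}$ the big term forces $\pi(1)<\pi(2)$, but the \emph{secondary} terms $D_{1j}-\epsilon<0$ for every non-child $j\ge3$ strictly reward placing $1$ after all of $3,\dots,n$; the maximizer is $(3,4,\dots,n,1,2)$ and hence $\sigma'=(3,\dots,n,2)\ne\sigma$. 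In fact the edge whose orientation flips here is $(2,3)$, which is \emph{not} incident to $k=1$, so your downstream conclusion ``only edges incident to $k$ can flip orientation'' is itself false at the level of individual edges. The same example pinpoints the error in your last reduction: $B_k(\pi_m(\sigma),\cdot)$ depends on $\sigma$ through the positions of \emph{all} nodes (via the small terms $D_{kj}-\epsilon$ for non-children $j$ and $D_{ik}-\epsilon$ for intervened non-parents $i$), not merely through the $\deg(k)$ neighbours of $k$, so there is no reason for $\max_m B_k(\pi_m(\sigma),\mathbf I)-\max_m B_k(\pi_m(\sigma),\mathbf I')$ to be constant in $\sigma$.

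The paper avoids comparing the two optimizers altogether. It works directly from the sufficient condition of Lemma~\ref{lem:pa_opt}: an edge $(i,j)$ is guaranteed to be correctly oriented whenever $j\in\mathcal I$ or some $\ell\in\mathbf{Pa}_j\setminus\mathbf{Pa}_i$ lies in $\mathcal I$. Flipping $I_k$ can alter this guarantee only for edges in $B_k:=\{(i,j)\in E:\,j=k\}\cup\{(i,j)\in E:\,k\in\mathbf{Pa}_j\setminus\mathbf{Pa}_i\}$, and one then bounds $|B_k|$ by $\deg_{\mathrm{in}}(k)+\deg_{\mathrm{out}}(k)$ (the first set consists of in-edges of $k$; the second requires $j$ to be a child of $k$). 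Your additive decomposition of the score is correct and is a clean way to see that toggling $I_k$ only adds or removes $S_k$, but the exchange step you flag as ``the hard part'' does not go through; you should replace it by the paper's direct counting via the parent-based sufficient condition.
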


\begin{lemma}[Edge variables]
\label{lem:lipschitz_edge}
Let
\[
f(\mathbf I,\mathbf E):=
D_{\mathrm{top}}\bigl(\mathcal{G}(\mathbf E),\pi_{\mathrm{opt}}(\mathbf I,\mathbf E)\bigr).
\]
If only a single edge indicator $E_{ij}$ is flipped, the set of affected edges is
\[
\mathcal A_{ij} := \{(k,j)\in E:\, i\notin \mathrm{Pa}_{\mathcal G}(k)\},
\]
and the Lipschitz constant satisfies
\[
c_{ij} = |\mathcal A_{ij}| \le \deg_{\mathrm{in}}(j).
\]
\end{lemma}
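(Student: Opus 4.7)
The plan is to bound the change in $f$ when a single edge indicator $E_{ij}$ is flipped. Fix $\mathbf I$ and let $\mathbf E,\mathbf E'$ differ only at coordinate $(i,j)$, with resulting graphs $\mathcal G,\mathcal G'$ and optimal orders $\pi=\pi_{\mathrm{opt}}(\mathbf I,\mathbf E)$, $\pi'=\pi_{\mathrm{opt}}(\mathbf I,\mathbf E')$. The first step is the decomposition
\[
f(\mathbf I,\mathbf E')-f(\mathbf I,\mathbf E)
= \bigl[\mathbf{1}\{(i,j)\in E'\}-\mathbf{1}\{(i,j)\in E\}\bigr]
+ \sum_{(k,l)\in E\cap E'}\bigl[\mathbf{1}\{\pi'(k)>\pi'(l)\}-\mathbf{1}\{\pi(k)>\pi(l)\}\bigr],
\]
reducing the problem to identifying edges in $E\cap E'$ whose orientation status can flip between $\pi$ and $\pi'$.

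The second step exploits restricted $\epsilon$-interventional faithfulness, under which $\mathbf{1}\{D_{pq}>\epsilon\}=\mathbf{1}\{(p,q)\in E\}$. The score $S(\pi)$ thus depends on $\mathbf E$ only through the hard-constraint indicators at intervention-incident pairs $\{(p,q)\in E:\,p\in\mathcal I\}$, with the $(D_{pq}-\epsilon)$ terms serving as low-magnitude tiebreakers dominated by the $c\cdot d$ bonus. When $i\notin\mathcal I$ the flip is invisible to the score and $\pi'=\pi$, leaving only the explicit edge $(i,j)$ to contribute. When $i\in\mathcal I$, the flip adds or removes a single hard constraint $\pi(i)<\pi(j)$ on the intervention-incident partial order; $\pi'$ can then be taken as a minimal re-extension of the updated partial order, differing from $\pi$ only by rearranging nodes whose position relative to $j$ is not already pinned by unchanged hard constraints.

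The third step establishes the structural claim $\mathcal A_{ij}\subseteq\{(k,j)\in E:\,i\notin\mathrm{Pa}_{\mathcal G}(k)\}$. Any edge $(k,l)\in E\cap E'$ with $l\neq j$ has its orientation determined by hard constraints and tiebreakers that do not involve the flipped pair, so its relative order is preserved. For $l=j$ with $i\in\mathrm{Pa}_{\mathcal G}(k)$, the pre-existing edge $(i,k)\in E$ (with $i\in\mathcal I$) already imposes $\pi(i)<\pi(k)$, so adding or removing the direct constraint $\pi(i)<\pi(j)$ is redundant for ordering $k$ relative to $j$: every linear extension of the "without $(i,j)$" partial order remains a linear extension of the "with $(i,j)$" one via the implied chain $i\to k\to j$, and conversely. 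The hardest step will be ruling out indirect cascades through the tiebreaker terms $(D_{pq}-\epsilon)$: one must argue via a swap argument on linear extensions that $c$-dominance prevents any continuous reordering from propagating to a pair already pinned by a hard chain. Taking $\mathcal A_{ij}\subseteq\{(k,j)\in E\}$ yields $c_{ij}\le\deg_{\mathrm{in}}(j)$, as claimed.
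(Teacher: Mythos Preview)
Your high-level strategy matches the paper's: fix $\mathbf I$, split on whether $i\in\mathcal I$, and then identify which edges into $j$ can have their orientation status change. Your Step~1 decomposition is in fact more careful than the paper's, which simply asserts $f(\mathbf I,\mathbf E)=f(\mathbf I,\mathbf E')$ in the non-intervened case without isolating the explicit $\pm 1$ from the toggled edge itself. The paper also separates a third easy case ($i\in\mathcal I$ and $j\in\mathcal I$), where the child being intervened already forces every incoming edge; you do not isolate this, but it folds into your argument.

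The substantive divergence is in Step~3, and your chain argument has a genuine gap. You claim that when $i\in\mathrm{Pa}_{\mathcal G}(k)$, the constraint $\pi(i)<\pi(j)$ is ``redundant for ordering $k$ relative to $j$'' via the chain $i\to k\to j$, and that the linear extensions of the hard-constraint partial order coincide with and without $(i,j)$. But the edge $(k,j)$ only contributes a hard constraint when $k\in\mathcal I$, so the chain is generically broken. Concretely, take $i\in\mathcal I$, $k,j\notin\mathcal I$, with edges $(i,k)$ and $(k,j)$: the only hard constraint is $\pi(i)<\pi(k)$, and adding $(i,j)$ imposes the genuinely new constraint $\pi(i)<\pi(j)$ (e.g.\ the extension $j\prec i\prec k$ is eliminated). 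So the two constraint systems are not equivalent, and your argument does not establish that the relative order of $k$ and $j$ is preserved.

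The paper's route is different and avoids reasoning about linear extensions of the score's constraint system. It invokes Lemma~\ref{lem:pa_opt}: edge $(k,j)$ is correctly oriented whenever some intervened node lies in $\mathrm{Pa}_j\setminus\mathrm{Pa}_k$. Flipping $E_{ij}$ changes $\mathrm{Pa}_j$ only by adding or removing $i$; when $i\in\mathrm{Pa}_k$, the element $i$ is subtracted out of $\mathrm{Pa}_j\setminus\mathrm{Pa}_k$ regardless, so the witness set is literally identical in $\mathcal G$ and $\mathcal G'$. That is the invariant you should be using. Note that this only shows the \emph{sufficient condition} is unchanged, not that the realized orientation under $\pi_{\mathrm{opt}}$ is; the paper implicitly treats invariance of the sufficient condition as enough, so your acknowledged ``cascade'' worry is a looseness shared with the paper rather than a defect unique to your proposal.
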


\begin{remark}
Without Assumption~\ref{ass:standing}, ties between distinct 
maximizers could cause $\pi_{\mathrm{opt}}$ to jump discontinuously, 
leading to $c_k$ or $c_{ij}$ as large as $|E|$ 
and voiding any bounded-difference concentration.
\end{remark}

\looseness=-1 The Lipschitz constants derived above translate the structural sensitivity of the error functional into a uniform control on how much any single random input (intervention or edge) can affect it. This functional property is precisely what allows us to invoke deviation inequalities such as McDiarmid’s:  if a function of independent random variables changes only slightly when one coordinate is altered, then the function is concentrated around its mean.  Thus, the Lipschitz analysis provides the bridge from graph-theoretic properties of $\mathcal{G}$ to probabilistic concentration results for the error metrics, which form the core theoretical guarantees of this paper. In the next section, we leverage this connection to derive deviation 
bounds for different random graph models.  
Complementary results that establish upper bounds on the mean error 
are provided in \Cref{app:add_theorems}.

 \vspace{-5pt}
\subsection{Setting I: Fixed Graph, Random Interventions} 
\vspace{-5pt}
In this first setting, we consider the graph to be fixed, and the randomness comes only from the random choice of intervention targets.

Once we know how much a single intervention flip can change the top-order error—captured by the Lipschitz constants  from the previous lemmas, McDiarmid’s inequality immediately yields concentration. Normalizing by $|E|$ simply scales all constants down, so the normalized error concentrates even more tightly.

\begin{theorem}[Deviation bounds for topological errors]
\label{thm:deviation_bounds}
    Let $c := \max_{k \in V} c_k.$
By McDiarmid’s inequality and \Cref{lem:lipschitz_bound_ad}, for any $t > 0$, we have the following deviation bounds:
    
    \textbf{Unnormalized error:}
        \begin{align*}
        P\left(|f(\mathbf{I}) - \mathbb{E}[f(\mathbf{I})]| \ge t\right) &\le 2 \exp\left(-\frac{2t^2}{\sum_{k=1}^{|V|} c_k^2}\right) \le 2 \exp\left(-\frac{2t^2}{|V|c^2}\right).
        \end{align*}

       \textbf{Normalized error:}
        \begin{align*}
        P\left(|g(\mathbf{I}) - \mathbb{E}[g(\mathbf{I})]| \ge t\right) &\le 2 \exp\left(-\frac{2t^2|E|^2}{\sum_{k=1}^{|V|} c_k^2}\right) \le 2 \exp\left(-\frac{2|E|^2 t^2}{|V| c^2}\right).
        \end{align*}

\end{theorem}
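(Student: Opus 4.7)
The plan is to invoke McDiarmid's bounded differences inequality directly. The intervention indicators $I_1,\dots,I_{|V|}$ are independent Bernoulli random variables, and Assumption~\ref{ass:standing} guarantees that $f(\mathbf{I}) = D_{\text{top}}(\mathcal{G},\pi_{\mathrm{opt}}(\mathbf{I}))$ is a deterministic function of $\mathbf{I}$. Hence $f$ is a function of independent coordinates, which is exactly the setting in which McDiarmid's inequality applies.

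First, I would recall McDiarmid's inequality: if $f:\{0,1\}^{|V|}\to\mathbb{R}$ satisfies $|f(\mathbf{I})-f(\mathbf{I}')|\le c_k$ whenever $\mathbf{I}$ and $\mathbf{I}'$ differ only in coordinate $k$, then for any $t>0$,
\[
P\bigl(|f(\mathbf{I})-\mathbb{E}[f(\mathbf{I})]|\ge t\bigr)\le 2\exp\!\left(-\frac{2t^2}{\sum_{k=1}^{|V|}c_k^2}\right).
\]
Second, \Cref{lem:lipschitz_bound_ad} supplies precisely the bounded-difference constants $c_k\le |\mathrm{Anc}(k)|+|\mathrm{Desc}(k)|$, so the first unnormalized bound follows immediately. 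The second inequality then comes from the uniform bound $\sum_{k=1}^{|V|}c_k^2\le |V|\,(\max_k c_k)^2=|V|c^2$.

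For the normalized error $g(\mathbf{I})=f(\mathbf{I})/|E|$, I would simply note that scaling $f$ by the deterministic constant $1/|E|$ rescales each bounded-difference constant to $c_k/|E|$, so the sum of squared increments becomes $\sum_k c_k^2/|E|^2$. Substituting into McDiarmid yields the factor $|E|^2$ in the numerator of the exponent, and the coarser bound follows again from $\sum_k c_k^2\le |V|c^2$.

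The main subtlety, and arguably the only non-routine point, is ensuring that $f$ is genuinely a well-defined deterministic function of $\mathbf{I}$, since otherwise the bounded-difference constants could blow up to $|E|$ as flagged in the remark after \Cref{lem:lipschitz_edge}. This is taken care of by Assumption~\ref{ass:standing}, which fixes a deterministic tie-breaking rule for $\pi_{\mathrm{opt}}$ so that no discontinuous jumps occur when a single $I_k$ is toggled. With this measurability point handled, everything reduces to a direct application of the Lipschitz lemma together with McDiarmid, and no additional probabilistic machinery is needed.
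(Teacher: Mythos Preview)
Your proposal is correct and follows essentially the same route as the paper: verify that $\mathbf{I}$ has independent coordinates, invoke the Lipschitz constants from \Cref{lem:lipschitz_bound_ad}, apply McDiarmid directly to $f$, and then rescale by $1/|E|$ for $g$. Your explicit remark about Assumption~\ref{ass:standing} ensuring well-definedness of $\pi_{\mathrm{opt}}$ is a nice touch that the paper's proof leaves implicit.
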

\vspace{-5pt}
\subsection{Setting II: Erdős–Rényi Graphs}
\vspace{-5pt}
\looseness -1 In addition to the previous assumptions, we now take the underlying graph itself to be random.  
Specifically, we generate an Erdős–Rényi random DAG on $d$ nodes by first sampling a 
uniform random topological ordering $\sigma \in S_d$, and then including each forward edge 
$(\sigma(i), \sigma(j))$ with independent probability $p_e$ whenever $i<j$.  
This construction ensures acyclicity by orienting edges consistently with $\sigma$.  

Formally, let
$\mathbf{E}
~:=~
\{\,E_{ij} : 1\le i<j\le d\}$ 
be the set of independent edge indicators, where $E_{ij}=1$, with i.i.d distribution $\mathrm{Bernoulli}(p_e)$, denotes that the directed edge 
$(i,j)$ is present.  
The total number of edges is $
|\mathbf{E}| = \sum_{1\le i<j\le d} E_{ij},
\quad 
\mathbb{E}[|\mathbf{E}|] = p_e \tfrac{d(d-1)}{2}.
$

We adopt the \emph{parents-only local influence} rule, i.e.\ flipping an intervention $I_k$ can only affect orientations of edges directly incident to $k$ (as assumed in Section~7 of \citet{chevalley2025deriving}).  

\looseness=-1 Since both the intervention vector $\mathbf{I}$ and the edge set $\mathbf{E}$ are now random, the optimal ordering itself becomes random: $\pi_{\mathrm{opt}}(\mathbf{I},\mathbf{E})$.This notation emphasizes that the optimizer is determined jointly by the realized interventions and by the sampled random graph, and hence the topological error $f(\mathbf{I},\mathbf{E}) = D_{\text{top}}(\mathcal{G}(\mathbf{E}),\pi_{\mathrm{opt}}(\mathbf{I},\mathbf{E}))$ and its normalized form $g(\mathbf{I},\mathbf{E}) = f(\mathbf{I},\mathbf{E})/|\mathbf{E}|$ are random variables depending on both sources of randomness.  

\looseness=-1 We now state a theorem that characterizes the deviation from the mean for the misorientation count $f$ and the false–negative rate $g=f/|E|$.  The first part uses the classical bounded–differences inequality, while the second part invokes a variance bound via Bhatia--Davis together with Chebyshev’s inequality, using an upper-bound on the mean derived in \Cref{lem:vanishing_fraction_edges} in \Cref{app:add_theorems}.

\begin{theorem}[Deviation bounds for \texorpdfstring{$f$}{f} and \texorpdfstring{$g$}{g} in ER graphs]
\label{thm:dev_bounds_er}
Fix \(d\ge2\) and let the deterministic tie–break described in
\Cref{ass:standing} select a deterministic
\(\pi_{\mathrm{opt}}(\mathbf I,\mathbf E)\) in the equivalence class.
\vspace{-0.75ex}
\begin{enumerate}
\item  \textbf{Unnormalised error.}
      There exists \(C_1>0\) such that for every \(t>0\)
      \[
        P\!\bigl(\bigl|\,
 f(\mathbf{I}, \mathbf{E}) 
- \mathbb{E}\bigl[f(\mathbf{I}, \mathbf{E})\bigr]\bigr|\ge t\bigr)
          \;\le\;
          2\exp\!\Bigl(-\frac{2t^{2}}{C_1\,d^{4}}\Bigr).
      \]
      Hence \(f\) is concentrated around its mean on the
      \(O(d^{2})\) scale.
\item  \textbf{Normalised error.}  
      For any $t>0$ and any choice of $\delta_d\in(0,1)$,
\[P\left (\! g\bigl(\mathbf{I}, \mathbf{E} \right )
- \mathbb{E}\bigl[g(\mathbf{I}, \mathbf{E})\bigr]\bigr|\ge t\bigr)
~\le~
\frac{1}{t^2}\left[
\frac{2(1-p_{int})^2}{(1-\delta_d)\,p_e\,p_{int}}\cdot\frac{1}{d-1}
\;+\;
\exp\!\Bigl(-\tfrac{\mu_d\,\delta_d^{\,2}}{2}\Bigr)
\right],\]

where $\mu_d=\EE[|\mathbf{E}|]$. Hence $g(\mathbf{I}, \mathbf{E})$ is concentrated at rate $O\!\bigl(\tfrac{1}{\sqrt d}\bigr)$ around its mean.
\end{enumerate}
\end{theorem}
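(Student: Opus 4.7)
\looseness=-1 The plan is to attack the two parts with different tools: part (1) uses McDiarmid's bounded-differences inequality applied to the joint random vector $(\mathbf I,\mathbf E)$, while part (2) combines the Bhatia--Davis variance bound with a Chernoff splitting argument before invoking Chebyshev.

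For the unnormalised error, I would view $f(\mathbf I,\mathbf E)$ as a function of the $d+\binom{d}{2}=O(d^{2})$ independent Bernoulli coordinates coming from the intervention indicators $I_k$ and the edge indicators $E_{ij}$. \Cref{ass:standing} guarantees that $\pi_{\mathrm{opt}}$, and hence $f$, is a well-defined measurable function of those inputs. \Cref{lem:lipschitz_bound_io} and \Cref{lem:lipschitz_edge} then supply per-coordinate Lipschitz constants uniformly bounded by $d-1$: $c_k\le\deg_{\mathrm{in}}(k)+\deg_{\mathrm{out}}(k)$ and $c_{ij}\le\deg_{\mathrm{in}}(j)$. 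Summing squares gives $\sum_{k}c_k^{2}+\sum_{i<j}c_{ij}^{2}=O(d^{4})$, and a direct application of McDiarmid's inequality yields the stated bound with some absolute constant $C_1$.

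For the normalised error I would avoid applying McDiarmid directly to $g=f/|\mathbf E|$, because an atypically small $|\mathbf E|$ inflates the effective Lipschitz constants. Instead, note that $g\in[0,1]$, so Bhatia--Davis gives $\Var(g)\le\EE[g]\bigl(1-\EE[g]\bigr)\le\EE[g]$, reducing the deviation problem to an upper bound on $\EE[g]$. For any $\delta_d\in(0,1)$ I would split on the event $A=\{|\mathbf E|\ge(1-\delta_d)\mu_d\}$, with $\mu_d=\EE[|\mathbf E|]=p_e d(d-1)/2$. A standard multiplicative Chernoff bound for binomial sums gives $P(A^{c})\le\exp(-\mu_d\delta_d^{\,2}/2)$, while on $A$ the deterministic bound $g\le f/((1-\delta_d)\mu_d)$ holds. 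Hence
\[
\EE[g]~=~\EE[g\,\mathbf 1_A]+\EE[g\,\mathbf 1_{A^{c}}]~\le~\frac{\EE[f]}{(1-\delta_d)\,\mu_d}+P(A^{c}).
\]
Substituting the mean-error bound $\EE[f]=O\!\bigl((1-p_{\mathrm{int}})^{2}\,d/p_{\mathrm{int}}\bigr)$ from the complementary \Cref{lem:vanishing_fraction_edges} in the appendix produces, up to the constant factor, exactly the first bracketed term of the target inequality, and the Chernoff contribution matches the second. Chebyshev's inequality $P(|g-\EE g|\ge t)\le\Var(g)/t^{2}$ then delivers the stated bound.

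The main obstacle is calibrating $\delta_d$ so that both terms decay at the desired rate. When $p_e$ is bounded below by a constant (the dense regime) $\mu_d=\Theta(d^{2})$ and any constant $\delta_d$ drives the Chernoff tail to zero super-polynomially, leaving the $1/(d-1)$ term dominant; in the sparse regime $p_e=c/d$ we have $\mu_d=\Theta(d)$, and I would take $\delta_d^{\,2}\gg\log d/d$, keeping the Chernoff contribution below $1/d$ and preserving the $O(1/\sqrt d)$ concentration rate advertised in the theorem. A minor technical point is verifying measurability of the deterministic tie-break introduced in \Cref{ass:standing}, so that $f$ and $g$ are bona fide random variables on the product probability space; a lexicographic rule suffices.
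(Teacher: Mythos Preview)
Your proposal is correct and follows essentially the same route as the paper: McDiarmid on the $\Theta(d^2)$ joint coordinates with per-coordinate sensitivity $O(d)$ for part~(1), and Bhatia--Davis combined with Chebyshev and the expectation bound of \Cref{lem:vanishing_fraction_edges} for part~(2). The only cosmetic difference is that you re-derive the Chernoff splitting for $\EE[g]$ inline, whereas the paper simply cites the lemma; note also that the linear bound on $\EE[f]$ you invoke is really Theorem~4 of \citet{chevalley2025deriving} (used inside that lemma's proof), not the lemma itself.
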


\looseness=-1 The theorem shows that both the raw error $f$ and the normalized error $g$ are tightly concentrated in Erdős–Rényi graphs. For $f$, deviations around the mean are exponentially unlikely once they exceed the natural $\Theta(d^2)$ scale of the number of edges. After normalization, $g$ has mean $O(1/d)$ and variance of the same order, so its fluctuations occur on the typical width $d^{-1/2}$.  In other words, as the graph grows, the expectation and fluctuation of the FNR both vanishes, providing a strong guarantee of reliability for large-scale causal discovery.

 In addition to the previous Erdős–Rényi setting, we now consider the sparse regime, where the edge probability scales inversely with 
the number of nodes,  
$
p_e := \frac{c}{d}, \qquad c>0.
$ 
In this regime, the expected number of edges is $\mathbb{E}[|E|]=\Theta(d)$, 
so each node has constant expected degree.  
Thus the graph remains sparse even as $d$ grows, in contrast to the 
dense case above. This leads to an upper-bound for the mean that is constant in $d$ (\Cref{lem:scale_free_expectation}). As a result, variance-based arguments such as Chebyshev are not sufficient, and establishing deviation bounds requires stronger concentration tools.

\looseness=-1 We next state a theorem that characterizes the deviation of the misorientation count $f$ and the normalized error $g=f/|E|$ from their expectations in this sparse regime.  The proof relies on the \emph{typical bounded differences theorem} for functions of $0$–$1$ variables \citep{warnke2016method} (see \Cref{thm:warnke-0-1} in \cref{app:bounded}), which sharpens concentration once atypical high-degree configurations are excluded.  Specifically, standard results show that the maximum degree of an Erdős–Rényi DAG in this regime is $O(\log d)$ with high probability, which provides the required Lipschitz control for our analysis.

\begin{theorem}[Deviation bounds in the sparse regime $p_e=c/d$]
\label{thm:sparse-er-deviation}
Fix $c>0$ and let $K>0$.  
For all $d\ge2$ the following hold:
\begin{enumerate}
\item  \textbf{Unnormalised topological error.} For every $t>0$,
      \[
        \;
        P\bigl(\bigl|\,
 f(\mathbf{I}, \mathbf{E}) 
- \mathbb{E}\bigl[f(\mathbf{I}, \mathbf{E})\bigr]\bigr|\ge t\bigr)
           \;\le\;
           \underbrace{d^{-K}}_{\text{bounding \#edges and $\deg_{\max}$}}
           + 2\exp\!\Bigl(
                -\frac{t^{2}}
                       {2c_1\,d\log^{2}d+\tfrac{2}{3}c_2t\log d}
                \Bigr)
        \;
      \]
      with universal constants $c_1, c_2>0$.  
      Consequently $f(\mathbf{I}, \mathbf{E})$ is concentrated at rate $O\!\bigl(\sqrt d\,\log d\bigr)$ around its mean.

\item  \textbf{Normalised topological error.} For every $t>0$,
      \[
        \;
        P\bigl(\bigl|\,
 g(\mathbf{I}, \mathbf{E})
- \mathbb{E}\bigl[g(\mathbf{I}, \mathbf{E})\bigr]\bigr|\ge t\bigr)
           \;\le\;
           \underbrace{d^{-K}}_{\text{bounding \#edges and $\deg_{\max}$}}
           + 2\exp\!\Bigl(
                -\frac{d\,t^{2}}
                       {c_3\log^{2}d+\tfrac{2}{3}c_4 t\log d}
                \Bigr)
        \;
      \]
      with universal constants $c_3, c_4>0$.  Hence $g(\mathbf{I}, \mathbf{E})$ is concentrated at rate $O\!\bigl(\tfrac{\log d}{\sqrt d}\bigr)$ around its mean.

\end{enumerate}
\end{theorem}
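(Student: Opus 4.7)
The plan is to invoke the typical bounded differences inequality of \citet{warnke2016method} (\Cref{thm:warnke-0-1}) on the joint $\{0,1\}$-vector $(\mathbf I,\mathbf E)$ of intervention and edge indicators, after restricting attention to a high-probability ``nice'' event on which the Lipschitz constants supplied by \Cref{lem:lipschitz_bound_io} and \Cref{lem:lipschitz_edge} are only $O(\log d)$ rather than their worst-case $\Theta(d)$. The two parts then follow by the same argument, with the denominator $|\mathbf E|\asymp d$ accounting for the extra factor of $d$ in part 2.

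First I would define the typical event $\mathcal T$ by $\deg_{\mathrm{in}}(v)+\deg_{\mathrm{out}}(v)\le c_0\log d$ for every $v\in V$ together with $|\mathbf E|\in[\rho_1 d,\rho_2 d]$ for fixed $0<\rho_1<c/2<\rho_2$. Each degree is $\mathrm{Binomial}(d-1,c/d)$ with mean $\Theta(1)$, so a union bound combined with a multiplicative Chernoff tail gives $P(\deg_{\max}>c_0\log d)\le d^{-K}$ once $c_0=c_0(c,K)$ is chosen large enough, and a further Chernoff bound pins $|\mathbf E|=\Theta(d)$ with exponentially small failure probability. Hence $P(\mathcal T^c)\le d^{-K}$, which furnishes the additive $d^{-K}$ term appearing in both bounds.

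On $\mathcal T$, \Cref{lem:lipschitz_bound_io} and \Cref{lem:lipschitz_edge} give intervention-coordinate Lipschitz constants $c_k\le 2c_0\log d$ and edge-coordinate Lipschitz constants $c_{ij}\le c_0\log d$ for $f$; using $f\le|\mathbf E|$ together with $|\mathbf E|\ge\rho_1 d$ shows that every coordinate flip changes $g=f/|\mathbf E|$ by at most $O(\log d/d)$ (the $1/|\mathbf E|$ correction from edge flips is absorbed in this rate). I would then apply the Bernstein form of typical bounded differences with bad set $\mathcal T^c$: the variance proxy $V=\sum_i\mathrm{Var}(X_i)c_i^2$ splits into $d\cdot p_{\mathrm{int}}(1-p_{\mathrm{int}})\cdot O(\log^2 d)$ from interventions and $\binom{d}{2}(c/d)(1-c/d)\cdot O(\log^2 d)$ from edges, totalling $O(d\log^2 d)$, and the maximum typical increment is $M=O(\log d)$. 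Substituting into Warnke's inequality yields part 1. Rerunning the computation with the $g$-Lipschitz constants of order $\log d/d$ gives $V=O(\log^2 d/d)$ and $M=O(\log d/d)$, which reproduces part 2 with the extra $d$ in the exponent.

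The decisive technical point, and the main obstacle, is that the edge probabilities $p_e=c/d$ must enter the variance proxy via the Bernstein-type weighting $\mathrm{Var}(X_i)c_i^2$: it is precisely this weighting that tames the $\binom{d}{2}$ edge contribution from an unusable $\Theta(d^{2}\log^{2}d)$ (which a plain McDiarmid form $\sum c_i^2$ would yield, producing only rate $O(d\log d)$) down to the required $\Theta(d\log^{2}d)$. A secondary but routine care point is the random denominator in $g$, which I handle by folding $|\mathbf E|\asymp d$ into $\mathcal T$ so that $g$ inherits clean Lipschitz constants on the good event rather than requiring a separate ratio argument.
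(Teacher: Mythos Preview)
Your proposal is correct and follows essentially the same route as the paper: Warnke's $0$--$1$ typical bounded differences inequality applied jointly to $(\mathbf I,\mathbf E)$, with the good event defined by $\deg_{\max}=O(\log d)$ and $|\mathbf E|=\Theta(d)$, typical Lipschitz constants $O(\log d)$ from \Cref{lem:lipschitz_bound_io} and \Cref{lem:lipschitz_edge}, and the Bernstein variance proxy $V=\sum_i p_i(1-p_i)c_i^2=O(d\log^2 d)$ (resp.\ $O(\log^2 d/d)$ for $g$). Your explicit identification of why the Bernstein weighting, rather than $\sum c_i^2$, is essential for the edge block is exactly the crux, and your inclusion of a \emph{lower} bound on $|\mathbf E|$ in the good event for part~2 is in fact slightly more careful than the paper's own write-up, which only states an upper bound there.
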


\looseness=-1 Compared to the dense Erdős--Rényi case (where $p_e=\Theta(1)$ and $g$ concentrates with typical width $O(d^{-1/2})$ around a vanishing mean $\Theta(1/d)$), the sparse regime $p_e=c/d$ behaves differently: the mean of $g$ does not vanish (\Cref{lem:scale_free_expectation}), yet $g$ still concentrates with typical width $O\!\bigl(\tfrac{\log d}{\sqrt d}\bigr)$. This is weaker than the dense case by only a logarithmic factor. The $\log d$ penalty arises from controlling the maximum degree ($\deg_{\max}=O(\log d)$ w.h.p.); once such rare high-degree events are excluded, Warnke’s typical bounded–differences inequality yields the stated sub-Gaussian tail. For the unnormalized error $f$, fluctuations are $O(\sqrt d\,\log d)$ around an $O(d)$ mean, again reflecting the $\log d$ overhead due to maximum–degree control in the sparse regime.

\vspace{-5pt}
\subsection{Setting III: Generalized Barabási–Albert Graphs}
\vspace{-5pt}
\looseness=-1 Many real-world networks---from biological systems to social interactions---exhibit heterogeneous connectivity patterns, often characterized by a small number of highly connected hubs and many nodes with few connections. To capture such structures, we extend our analysis to a generalized version of the Barabási–Albert (BA) \citep{barabasi1999emergence} model that incorporates an \emph{initial attractiveness} parameter $\kappa > 0$ \citep{dorogovtsev2000structure}. 

In this generalized model, one begins with a small seed network of $m_0$ nodes. New nodes are introduced sequentially, and each new node attaches to $m$ existing nodes. The probability that a new node connects to an existing node $i$ is given by
\[
P(i) = \frac{k_i + \kappa}{\sum_{j} \left(k_j + \kappa\right)},
\]
where $k_i$ is the current degree of node $i$. This leads to a power-law degree distribution \citep{bollobas2001degree,dorogovtsev2000structure,buckley2004popularity} of the form
\[
P(k) \sim k^{-\gamma},
\]
with an exponent $\gamma$ that depends on both the number of links $m$ per new node and the attractiveness parameter $\kappa$. By tuning $\kappa$, the model can generate networks with a range of exponents, thereby accommodating different levels of heterogeneity observed in empirical data.

\looseness=-1 Since our analysis concerns causal discovery in directed acyclic graphs (DAGs), we must ensure that the network generated by this model is both directed and acyclic. To achieve this, we impose a natural time-ordering on the nodes based on their arrival sequence. Specifically, we direct all edges from older nodes (those that appeared earlier in the growth process) to younger nodes (those added later). This rule guarantees that no cycles can form, as it is impossible to have a directed edge pointing from a newer node back to an older one. With this construction, the generalized BA model produces a directed acyclic graph (DAG) that retains the desired scale-free properties.

\looseness=-1 Crucially, the emergence of hub nodes can have a pronounced effect on the concentration of topological errors in causal discovery. In the sections that follow, we derive deviation bounds for these errors in the context of generalized BA graphs and discuss how varying $\gamma$ impacts the robustness of causal inference methods.

If we start our generating process from an empty graph, after $d$ steps, we will obtain a DAG of $d$ nodes and $|E| = d \cdot m$ edges. 

\begin{lemma}[High-Probability Bound on Node Degrees]
\label{lem:max_deg_gba}
Let $\mathcal{G}$ be a generalized Barabási–Albert DAG with $d$ nodes constructed as described above. Then there exist a constant $C > 0$ such that, with high probability, every node $i\in V$ satisfies
\[
\text{in-deg}(i) + \text{out-deg}(i) \le m + C\, d^{\beta},
\]
where $\beta = \frac{1}{\gamma - 1}$ and $\gamma = 2 + \frac{\kappa}{m}$. We denote this event $\mathcal{E}_{BA}$, with support $E_{BA}$
\end{lemma}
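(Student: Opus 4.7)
The plan is to decompose each node's total degree into its in-degree and out-degree, bound the former trivially, and control the latter with high probability via a standard preferential-attachment martingale combined with a union bound over all $d$ nodes. Under the older-to-younger edge orientation, every node arriving after the initial seed makes exactly $m$ outgoing attachments at birth, which become its $m$ in-edges; the $O(1)$ seed nodes contribute corrections that are absorbed into the constant $C$. Hence $\text{in-deg}(i)\le m$ holds deterministically, and the content of the lemma reduces to proving $\max_{i\in V}\text{out-deg}(i)=O(d^{\beta})$ with high probability, where $\beta=1/(\gamma-1)=m/(m+\kappa)$.

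Fix a node $i$ with arrival step $s_i$ and let $X_i(t)$ denote its out-degree at time $t\ge s_i$. Under the attachment kernel $P(\text{pick }j)\propto k_j+\kappa$, the total weight at time $t$, $Z(t):=\sum_j(k_j(t)+\kappa)$, is a deterministic function of $t$ because each arrival deposits fixed amounts into the pool; in the regime $\gamma=2+\kappa/m$ this gives $Z(t)=(m+\kappa)t+O(1)$. The conditional drift of $X_i$ is therefore
\[
\EE\bigl[X_i(t+1)-X_i(t)\mid\mathcal{F}_t\bigr]
=\frac{m\bigl(X_i(t)+\kappa\bigr)}{Z(t)}
=\frac{\beta\bigl(X_i(t)+\kappa\bigr)}{t}+O(t^{-2}).
\]
Defining the deterministic normalizer $a_t:=\prod_{r=s_i}^{t-1}(1+\beta/r)\asymp(t/s_i)^{\beta}$, the rescaled process $M_i(t):=(X_i(t)+\kappa)/a_t$ is a non-negative martingale with per-step increment bounded by $1/a_{t+1}$.

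I would then apply Freedman's inequality (or plain Azuma--Hoeffding with the above increment bound) to $M_i$ up to $t=d$, yielding $|M_i(d)-M_i(s_i)|=O(\sqrt{\log d})$ with probability at least $1-d^{-K-1}$. Unscaling gives $X_i(d)\le C'(d/s_i)^{\beta}\sqrt{\log d}$, which is worst for the earliest node, hence $\text{out-deg}(i)=O(d^{\beta}\sqrt{\log d})$. A union bound over the $d$ nodes degrades the failure probability to $d^{-K}$, and the spurious $\sqrt{\log d}$ factor can either be absorbed into $C$ or removed by invoking Doob's $L^{2}$ maximal inequality applied to $M_i^{2}$, in the spirit of Móri's classical maximum-degree result for preferential attachment graphs.

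The main obstacle I anticipate is the dependence among the $m$ attachments made at a single step. If these are drawn without replacement (the standard convention), the per-step increment of $X_i$ is hypergeometric rather than a sum of independent Bernoullis, so the conditional drift and variance must be computed carefully; either a negative-association argument or an explicit coupling to a with-replacement Pólya-urn variant of the model is needed to justify the martingale inequality with the stated constants. A secondary, bookkeeping-level nuisance is ensuring that the implicit constants in $a_t\asymp(t/s_i)^{\beta}$ are uniform in $s_i$, so that the union bound does not silently inflate $C$ by a factor of $d$.
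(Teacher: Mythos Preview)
Your approach is sound and is essentially the classical martingale argument of M\'ori that underlies the references the paper cites. The paper itself does not prove the lemma at all: it simply invokes Theorem~8.14 of van der Hofstad's \emph{Random Graphs and Complex Networks}, which states that $d^{-\beta}\Delta_d$ converges almost surely to a finite non-degenerate limit, and observes that almost-sure convergence immediately yields the with-high-probability bound with some constant $C$. So your proposal is a from-scratch derivation of what the paper treats as a black box; this buys self-containment at the cost of the technical bookkeeping the paper sidesteps entirely.

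Two small points on your details. First, your normalization $Z(t)=(m+\kappa)t$ presumes attachment proportional to \emph{out-degree} plus $\kappa$; if $k_i$ denotes total degree (as the paper's wording ``current degree of node $i$'' suggests), then $Z(t)=(2m+\kappa)t+O(1)$ and the drift exponent becomes $m/(2m+\kappa)$ rather than $m/(m+\kappa)$. Reconciling this with the paper's stated $\gamma=2+\kappa/m$ requires pinning down which convention is in force, but the argument's structure is unaffected. Second, the $\sqrt{\log d}$ factor from Azuma cannot be ``absorbed into $C$'' as you write, since $C$ must be a genuine constant independent of $d$; only your Doob/$L^2$-maximal alternative, which is exactly M\'ori's route, delivers the clean $Cd^{\beta}$ bound the statement claims. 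Your identification of the without-replacement dependence among the $m$ simultaneous attachments as the main nuisance is apt and is handled in the cited literature precisely by the P\'olya-urn coupling you mention.
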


\Cref{lem:max_deg_gba} is a direct corollary of several well-known
results on initial–attractiveness preferential-attachment
processes.  
\citet[Theorem 8.14]{van2024random} prove that for any fixed
$(m,\kappa)$ the rescaled maximum degree
$d^{-\beta}\Delta_d$ converges almost surely to a non–degenerate random
limit; almost-sure convergence immediately implies the stated
\emph{with-high-probability} bound. See also seminal results such as \citep{mori2005maximum,buckley2004popularity,dorogovtsev2000structure,bollobas2003mathematical,bollobas2001degree}.

\looseness=-1 Preferential-attachment graphs have heavy-tailed degrees with a well-understood maximum degree. Under the parents-only assumption, flipping one intervention can only affect edges incident to that node, so the per-coordinate sensitivity is at most $m + C\,d^{\beta}$. Applying McDiarmid to the intervention vector then gives the deviations.

\begin{theorem}[Deviation Bounds in Generalized BA Graphs]\label{thm:ba_deviation_bound}
Let $\mathcal{G}$ be a generalized Barabási--Albert DAG on $d$ nodes, with parameters $m>0$, $\kappa>0$, and $\gamma = 2 + \frac{\kappa}{m}$. 
Under the natural time-ordering of edges and with $|E|=m\,d$ (starting from an empty seed),
there exist a constants $C >0$ such that, with high probability, flipping one node’s intervention indicator changes $f(\mathbf{I})$ by at most $m + C\,d^{\beta}$,

Consequently, with high probability, by McDiarmid’s inequality:

\noindent
\textbf{Unnormalized error.} 
For any $t>0$, $\mathbf{E} \in E_{BA}$,
\[
P\!\Bigl(\,\bigl|f(\mathbf{I}) - \mathbb{E}[f(\mathbf{I}) | \mathbf{E}]\bigr|\;\ge t\Bigr | \mathbf{E}\Bigr)
~\le~
2\,\exp\!\Bigl(
   -\,\frac{2\,t^2}{\,d\,\bigl(m + C\,d^{\beta}\bigr)^2\,}
\Bigr).
\]
In other words, $f(\mathbf{I})$ concentrates around its conditional mean up to deviations on the scale $d^{\,\beta + \tfrac12}$ for any fixed graph in the high probability event.

\smallskip
\noindent
\textbf{Normalized error.} 
For $g(\mathbf{I})=f(\mathbf{I})/\!\bigl|E\bigr|=f(\mathbf{I})/(m\,d)$, flipping an intervention of any single node changes $g(\mathbf{I})$ by at most $(m + C\,d^{\beta})/(m\,d)$.  Hence, letting $t>0$, $\forall \mathbf{E} \in E_{BA}$,
\[
P\!\Bigl(\,\bigl|g(\mathbf{I}) - \mathbb{E}[g(\mathbf{I})| \mathbf{E}]\bigr|\;\ge t\Bigr| \mathbf{E}\Bigr)
~\le~
2\,\exp\!\Bigl(
   -\,\frac{2\,t^2}{\,\sum_{i=1}^d\bigl(\tfrac{m + C\,d^{\beta}}{m\,d}\bigr)^2\,}
\Bigr).
\]
Since $\sum_{i=1}^d \bigl(\tfrac{m + C\,d^{\beta}}{m\,d}\bigr)^2 = O\bigl(d^{\,2\beta -1}\bigr)$, the exponent scales like $-\tfrac{2\,t^2}{\,d^{\,2\beta -1}\,}$. Consequently, $g(\mathbf{I})$ concentrates around $\mathbb{E}[g(\mathbf{I})| \mathbf{E}]$ at the $d^{\,\beta - \tfrac12}$ scale if $2\beta>1$, or even faster when $\gamma>3$ (i.e.\ $\beta<\tfrac12$), for any fixed graph in the high probability event $\mathcal{E}_{BA}$.
\end{theorem}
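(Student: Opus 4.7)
The plan is to condition on the high-probability event $\mathcal{E}_{BA}$ from \Cref{lem:max_deg_gba}, freeze the random graph $\mathcal{G}(\mathbf{E})$, and then treat $f(\mathbf{I})$ as a function of the independent Bernoulli intervention indicators $I_1,\dots,I_d$, to which McDiarmid's bounded-differences inequality can be applied coordinatewise in $\mathbf{I}$.

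First, I would invoke \Cref{lem:max_deg_gba} to obtain that, uniformly over all $k\in V$ and with probability at least $1-o(1)$,
\[
\deg_{\mathrm{in}}(k)+\deg_{\mathrm{out}}(k)\;\le\;m+C\,d^{\beta}.
\]
Combining this with the restricted Lipschitz estimate of \Cref{lem:lipschitz_bound_io}, which gives $c_k\le \deg_{\mathrm{in}}(k)+\deg_{\mathrm{out}}(k)$ under the parents-only local influence rule, yields the per-coordinate sensitivity $c_k\le m+C\,d^{\beta}$ on $\mathcal{E}_{BA}$, valid simultaneously for every $k$. This is the structural input that turns graph-theoretic heterogeneity (the heavy-tailed degree distribution of BA) into a uniform Lipschitz constant.

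Next, I would fix any realization $\mathbf{E}\in E_{BA}$. Because $\pi_{\mathrm{opt}}(\mathbf{I},\mathbf{E})$ is uniquely defined by \Cref{ass:standing} and the $I_k$ are mutually independent Bernoulli$(p_{\mathrm{int}})$ random variables, the map $\mathbf{I}\mapsto f(\mathbf{I})$ is a bounded-differences functional of $d$ independent inputs with constants $c_k\le m+C\,d^{\beta}$. Applying McDiarmid conditional on $\mathbf{E}$ gives
\[
P\bigl(|f(\mathbf{I})-\EE[f(\mathbf{I})\mid\mathbf{E}]|\ge t\mid\mathbf{E}\bigr)
\le 2\exp\!\Bigl(-\tfrac{2t^{2}}{\sum_{k=1}^{d}c_k^{2}}\Bigr)
\le 2\exp\!\Bigl(-\tfrac{2t^{2}}{d(m+C\,d^{\beta})^{2}}\Bigr),
\]
which yields the $d^{\beta+1/2}$ concentration scale. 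For the normalised error $g=f/(md)$ the Lipschitz constants rescale to $c_k^{g}\le (m+C\,d^{\beta})/(md)$, so $\sum_k (c_k^{g})^{2}\le (m+C\,d^{\beta})^{2}/(m^{2}d)=O(d^{2\beta-1})$, and a second application of McDiarmid delivers the stated $d^{\beta-1/2}$ rate, which vanishes precisely when $\gamma>3$ (equivalently $\beta<1/2$).

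The main obstacle is ensuring that the per-coordinate sensitivity transfers through the discrete and potentially discontinuous dependence of $\pi_{\mathrm{opt}}$ on $\mathbf{I}$: absent the deterministic tie-breaking rule of \Cref{ass:standing}, a single flip of $I_k$ could send the argmax to a distant permutation, inflating $c_k$ up to $|E|=md$ and nullifying McDiarmid. The restricted $\epsilon$-interventional faithfulness assumption plays the complementary role of localising the effect of such a flip to edges incident to $k$; together with \Cref{lem:max_deg_gba} this localises the effect further to a set of size $O(d^{\beta})$. A small additional subtlety is that the bounds are stated conditional on $\mathbf{E}\in E_{BA}$, which is precisely why the conclusion is phrased as holding with high probability over the graph draw, rather than unconditionally.
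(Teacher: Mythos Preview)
Your proposal is correct and follows essentially the same approach as the paper: condition on the high-probability degree event $\mathcal{E}_{BA}$ from \Cref{lem:max_deg_gba}, use the restricted Lipschitz bound $c_k\le\deg_{\mathrm{in}}(k)+\deg_{\mathrm{out}}(k)\le m+C\,d^{\beta}$ from \Cref{lem:lipschitz_bound_io}, and apply McDiarmid to the independent Bernoulli coordinates $I_1,\dots,I_d$ conditional on a fixed $\mathbf{E}\in E_{BA}$, first for $f$ and then for $g=f/(md)$. Your discussion of why \Cref{ass:standing} is essential and why the result is stated conditionally on $\mathbf{E}\in E_{BA}$ matches the paper's reasoning exactly.
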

\section{Empirical Illustration of Deviation Concentration}
\label{sec:experiments}

\looseness=-1 Our theoretical results establish non-asymptotic deviation bounds for the normalized topological error 
\(
g(\mathbf{I},\mathbf{E}) = D_{\text{top}}(\mathcal{G}(\mathbf{E}),\pi_{\mathrm{opt}}(\mathbf{I},\mathbf{E}))/|E|
\).
These guarantees concern the \emph{optimal solution} to the Intersort score, for which no polynomial-time algorithm is known. As such, any practical simulation must rely on an approximation algorithm, which may introduce bias. We therefore emphasize that the following experiments are provided only as an illustration of deviation concentration trends, and not as a direct test of our finite-sample bounds.  We employ the differentiable relaxation \textsc{DiffIntersort}~\citep{chevalley2024efficient}, which approximates the maximizer of the score using Sinkhorn-based optimization. While DiffIntersort is not guaranteed to recover the true optimum, prior work shows that it achieves close approximations in practice and is scalable up to 2000 variables, making it suitable for empirical illustration of the scaling behavior. We generated synthetic random DAGs from three graph families:
(i) Erdős–Rényi (ER) with edge probability \(p_e \in \{0.2,0.4,0.6\}\),
(ii) scale-free DAGs via preferential attachment with hub parameter \(\kappa \in \{1.0,3.0,9.0\}\) and $m = 3$ edges per variable,
and (iii)  scale-free ER graphs with average degree per node $c \in \{2,3,5\}$.Graph sizes ranged from \(d = 30\) up to \(d = 2000\). For each graph, we sampled single-variable intervention vectors with probability \(p_{\mathrm{int}} \in \{0.25,0.5,0.75\}\). For each configuration we repeated 10 runs and report the mean and deviation width of \(g(\mathbf{I},\mathbf{E})\). To match the theoretical focus on concentration, we report the \emph{typical deviation width} of the empirical mean of the normalized topological, measured by the interquartile range (IQR). 
Standard deviations are reported in the appendix (\Cref{fig:empirical_deviation_std}).

\paragraph{Results.}
\looseness=-1  \Cref{fig:empirical_deviation_iqr} reports the empirical deviation width of $g(\mathbf{I},\mathbf{E})$ as a function of $d$ for representative parameter settings. Across all graph families, variability decreases with dimension, in line with our concentration theorems.  For ER and sparse ER graphs, the simulations clearly confirm the theory, showing vanishing deviation as $d$ grows.  For scale-free BA graphs, the theoretical predictions are more nuanced: when $\kappa=9.0$, the deviation should vanish; for $\kappa=3.0$, it should remain constant; and for $\kappa=1.0$, it should grow at rate $O(d^{1/4})$. Empirically, the deviation shrinks for $\kappa=3.0$ and $\kappa=9.0$, while for $\kappa=1.0$ it remains less stable and fluctuates more strongly, consistent with the weaker theoretical control in this regime. Overall, the simulations provide strong support for the theoretical results.

\begin{figure}[t]
    \centering
    \subfigure[ER]{
        \includegraphics[width=0.31\columnwidth]{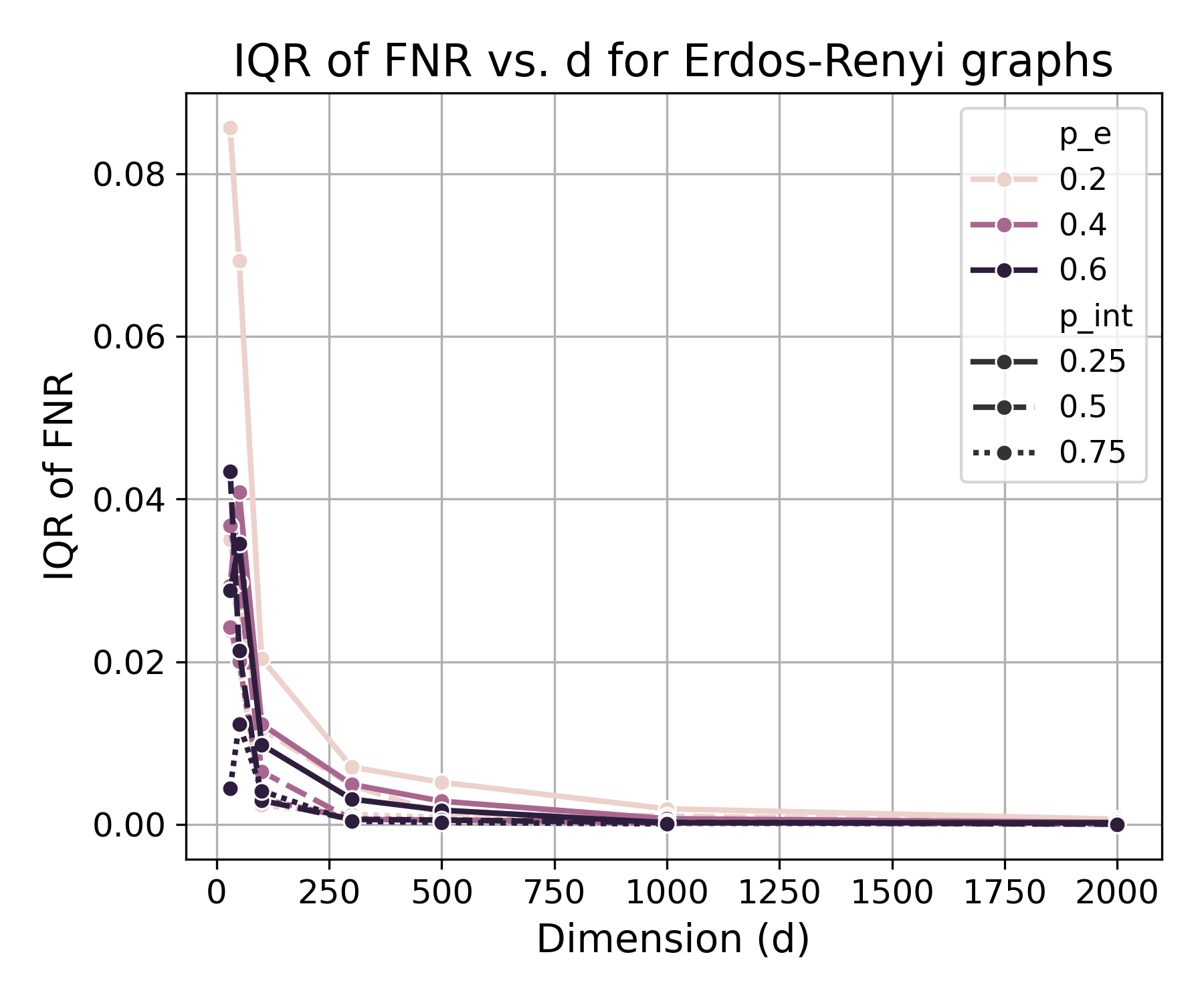}
    }
    \subfigure[Scale-free ER]{
        \includegraphics[width=0.31\columnwidth]{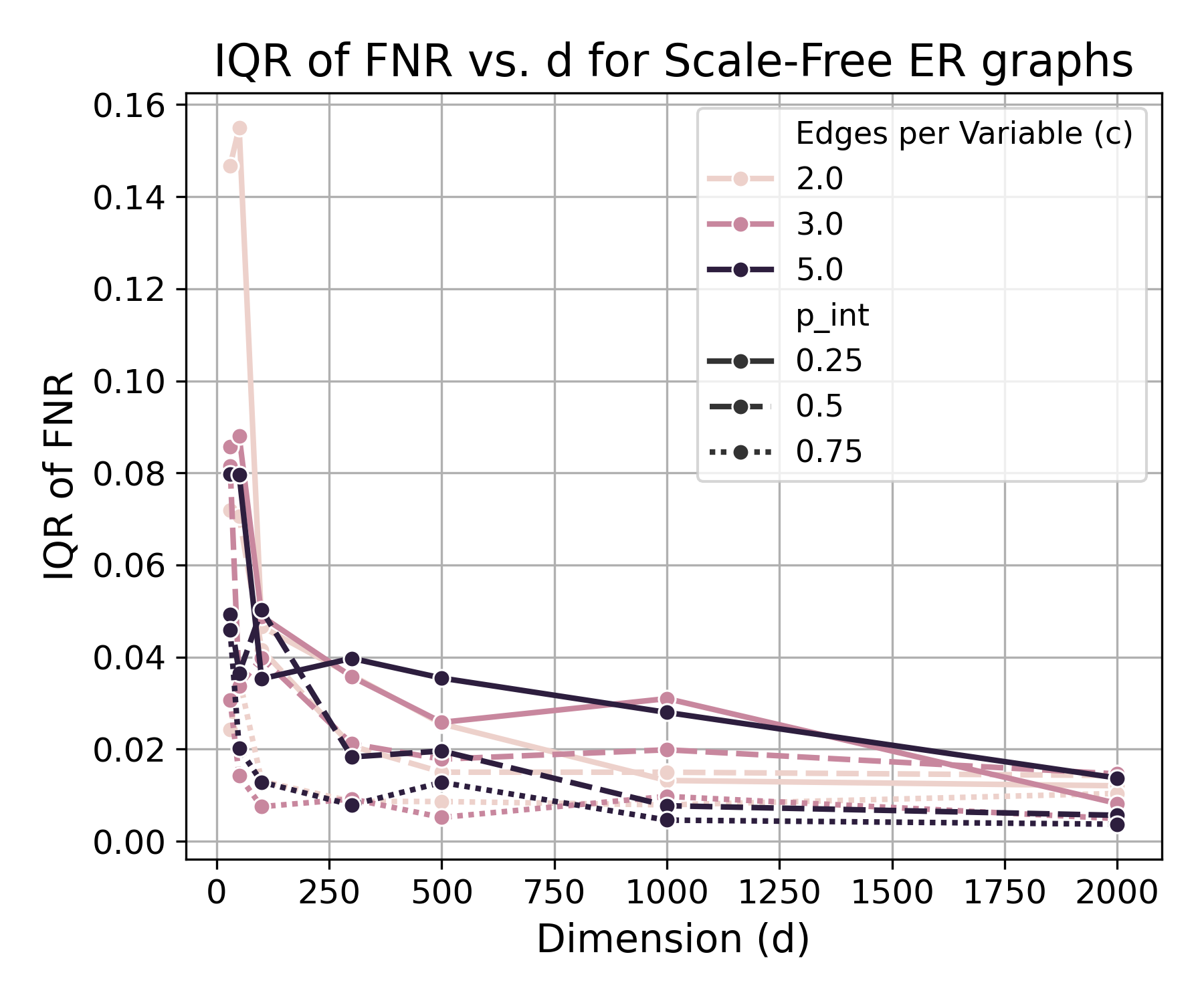}
    }
    \subfigure[Scale-free BA]{
        \includegraphics[width=0.31\columnwidth]{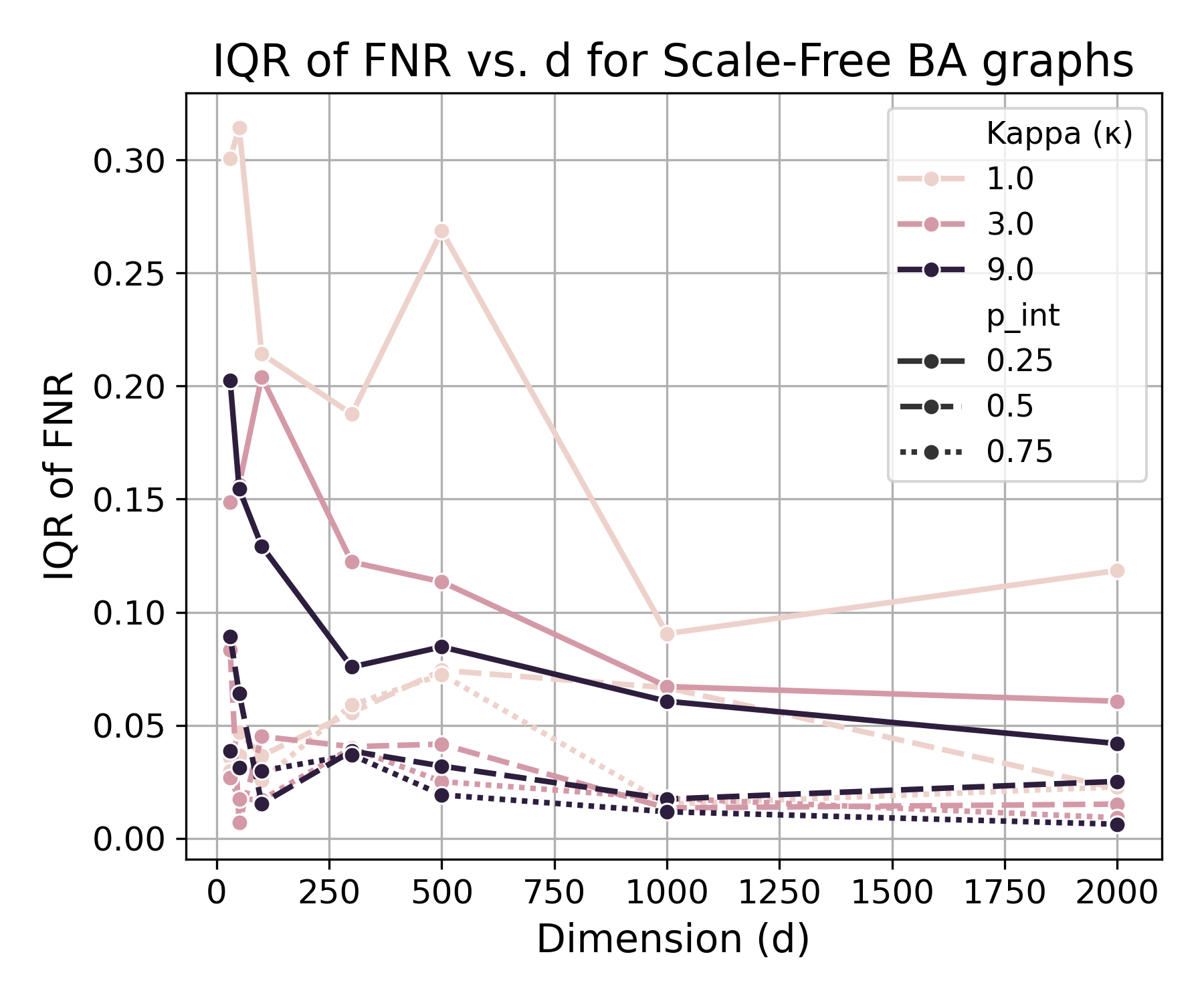}
    }
    \caption{Interquartile range (IQR) of the FNR as a function of graph size $d$. For each graph family, results are shown across three density parameters and three values of intervention coverage $p_{\mathrm{int}}$. The IQR decreases with $d$, demonstrating vanishing variability as predicted by our theoretical results, except for scale-free BA graphs with $\kappa=1$, which correspond to the heavy-tailed regime with exponent $\gamma=\tfrac{7}{3}<3$.}
    \label{fig:empirical_deviation_iqr}
\end{figure}
\vspace{-5pt}
\section{Limitations}
\vspace{-5pt}
\looseness=-1 Our analysis relies on several simplifying assumptions. $\epsilon$-interventional faithfulness still requires that interventions induce measurable shifts in downstream distributions, which may fail in high-noise systems or with very weak effects. The random graph models we study are stylized abstractions and do not capture domain-specific structure such as modularity or hierarchy. We also assume that causal systems are acyclic, excluding feedback loops that arise in many dynamical settings. Finally, our empirical evaluation uses DiffIntersort as a proxy search algorithm. Our theoretical results concern the optimum of the score and not the performance of any specific approximation algorithm; improving practical algorithms for large-scale settings is complementary future work.
\vspace{-5pt}
\section{Conclusion}
\vspace{-5pt}
\looseness=-1 We presented finite-dimension deviation bounds on the false-negative rate of causal discovery under single-variable random interventions and an $\epsilon$-interventional faithfulness assumption that accommodates latent confounding. Our results demonstrate that large-scale and structurally realistic graphs exhibit desirable statistical regularity: the false-negative rate not only remains bounded but concentrates tightly, and in many regimes, vanishes asymptotically. These findings challenge worst-case intuitions and suggest that high-dimensional structure learning can, under plausible conditions, be more stable than previously expected. Our results offer a principled way to reason about the trade-off between intervention budget and error, providing guidance for designing large-scale interventional studies with controlled orientation error rates. Future work may extend these guarantees to adaptive intervention policies, alternative error metrics, and more general graph distributions, paving the way for robust and efficient causal inference in large scale complex systems.



\bibliography{example_paper}
\bibliographystyle{iclr2026_conference}

\appendix

\section{Extended related work}
\label{app:related-work}

Theoretical guarantees in interventional causal discovery often focus on deriving upper and lower bounds on the number of interventions required to fully recover a causal DAG. Given the the observational MEC, the goal is to use targeted interventions to oriented the unoriented edges in the MEC. Intervention sets can be atomic, where only one variable can be intervened on per experiments, or multiple, where the set of intervened variables can be up to $d$. Early work showed that for multiple intervention sets, $\mathcal{O}(\log d)$ intervention sets are sufficient, and in the worst-case necessary to identify the true graph \citep{eberhardt2005}, and that for atomic sets, $d - 1$ experiments are sufficient \citep{eberhardt2006n}. Those result were later refined to show that $\mathcal{O}(\log \omega(\mathcal{G}))$ \citep{hauser2014two}, where $\omega(\mathcal{G})$ is the maximum clique size of the graph, are actually sufficient for multiple interventions, as was conjecture by \citet{Eberhardt2008}. These worst-case analyses take an adversarial view of the problem, where the goal is then to find an experiment selection that is robust to any adversary. \citet{pmlr-v6-eberhardt10a} expands on this game-theoretic view and proposes to use randomization to improve the expected worst case, showing that for atomic intervention $\Theta(d)$ experiments are sufficient. \citet{hu2014randomized} used this idea to show that for multiple interventions, a randomized algorithm needs $\mathcal{O} (\log \log d)$ experiment in the worse case in expectation. Similar mini-max bounds were derived under various settings \citep{shanmugam2015learning,kocaoglu2017cost}. Lower bounds on the minimum number of needed experiments to orient any DAG in a MEC has also been analyzed \citep{squires2020active,porwal2022almost}. Relaxing the causal sufficiency assumption, \citet{addanki2020efficient,kocaoglu2017experimental} derive sufficient set of interventions to recover the graph, as well has its latent variables. Regarding "average case" scenarios, for example by considering a parameterized graph distribution, the literature is more limited. For example, \citet{hu2014randomized,katz2019size} show that for an Erdős-Rényi graph distribution, the sufficient number of interventions to recover the graph is constant in expectation. Average case analysis may provide more practical insights for real-world causal discovery compared to worse-case analysis. Existing literature is also limited in the sense that faithfulness or knowledge of the observational MEC, as well as causal sufficiency, are often assumed. 

An adjacent line of research considers the problem of intervention design under budget constraints (e.g. limited number of interventions), or adaptive sequential selection of interventions \citep{he2008active,hyttinen2013experiment,ghassami2018budgeted,hauser2014two,sussex2021near,agrawal2019abcd}. Here, the goal is to maximize the utility of the limited set of performed interventions. Given that finding the optimal set is in general not tractable, the focus is around finding provably approximate (greedy) algorithms. 

The characterization of necessary intervention sets to \emph{fully} identify causal graphs also does not answer the question of partial identification under a limited number of interventions. In seminal work, \citet{hauser2012characterization} characterize an interventional Markov equivalence class ($\mathcal{I}$-MEC) given multiple interventions and propose a greedy algorithm for structure learning with interventional data.  $\mathcal{I}$-MECs can only be smaller than the corresponding observational MEC, and thus interventions improve identifiability. Many work followed and refined on this idea, both around characterizing equivalences classes under various assumptions \citep{yang2018characterizing,jaber2020causal,kocaoglu2019characterization,squires2020permutation} as well as proposing more efficient learning algorithms \citep{brouillard2020differentiable,ke2019learning,ke2023neural,mooij2020joint,nazaret2023stable,wang2017permutation,faria2022differentiable,lorch2022amortized}.

\section{List of Notations}
\label{sec:notation_appendix}

\begin{description}
    \item[\(V\)] The set of nodes (or random variables), with \(V = \{1,2,\dots,d\}\).
    
    \item[\(d\)] The number of nodes (or dimensionality) in the causal graph.
    
    \item[\(\mathcal{G} = (V,E)\)] A directed acyclic graph (DAG) representing the causal structure, where \(E \subseteq V\times V\) is the set of directed edges.
    
    \item[\(\mathbf{A}^{\mathcal{G}}\)] The adjacency matrix of \(\mathcal{G}\), where 
    \[
    \mathbf{A}^{\mathcal{G}}_{ij} = \begin{cases} 
    1, & \text{if } (i,j) \in E, \\ 
    0, & \text{otherwise.} 
    \end{cases}
    \]
    
    \item[\(\mathrm{Pa}(j)\)] The set of parents of node \(j\), i.e., \(\mathrm{Pa}(j) = \{i \in V \mid (i,j) \in E\}\).
    
    \item[\(\mathrm{An}_{\mathcal{G}}(j)\)] The set of ancestors of node \(j\) (all nodes with directed paths leading to \(j\)).
    
    \item[\(\mathrm{De}_{\mathcal{G}}(j)\)] The set of descendants of node \(j\) (all nodes reachable by directed paths from \(j\)).
    
    \item[\(\mathcal{C} = (\mathbf{S}, P_N)\)] A structural causal model (SCM), where \(\mathbf{S}\) is the set of structural equations and \(P_N\) is the joint distribution over the exogenous noise variables \(N = (N_1, \dots, N_d)\).
    
    \item[\(X_j\)] The \(j\)-th random variable in the SCM, with structural equation
    \[
    X_j = f_j\bigl(\mathbf{X}_{\mathrm{Pa}(j)}, N_j\bigr).
    \]
    
    \item[\(N_j\)] The exogenous noise variable associated with \(X_j\).
    
    \item[\(\tilde{N}_k\)] The new exogenous noise variable used in an intervention on \(X_k\).
    
    \item[\(\mathcal{I}\)] The set of intervention targets, where \(\mathcal{I} \subseteq V\).
    
    \item[\(\mathbf{I} = (I_1,\dots,I_d)\)] The intervention indicator vector, where
    \[
    I_k = \begin{cases}
    1, & \text{if } k \in \mathcal{I}, \\
    0, & \text{otherwise.}
    \end{cases}
    \]
    
    \item[\(P_X^{\mathcal{C}, (\emptyset)}\)] The observational distribution of \(X\) corresponding to the SCM \(\mathcal{C}\) (i.e., with no interventions).
    
    \item[\(P_X^{\mathcal{C}, do(X_k := \tilde{N}_k)}\)] The interventional distribution of \(X\) when the structural equation for \(X_k\) is replaced by \(X_k = \tilde{N}_k\).
    
    \item[\(\pi\)] A permutation of \(V\) representing a candidate causal order.
    
    \item[\(\Pi^*\)] The set of all causal orders (permutations) consistent with \(\mathcal{G}\).
    
    \item[\(D_{\text{top}}(\mathcal{G},\pi)\)] The \emph{topological error} of ordering \(\pi\) with respect to \(\mathcal{G}\), defined as
    \[
    D_{\text{top}}(\mathcal{G},\pi) = \sum_{\pi(i)>\pi(j)} \mathbf{A}^{\mathcal{G}}_{ij}.
    \]

    \item[\(D_{\text{top}}(\mathcal{G},\pi_{\mathrm{opt}}(\mathbf{I}))\)] 
The topological error of the optimal order under a fixed graph $\mathcal{G}$ when the intervention 
vector $\mathbf{I}$ is random. 
Here $\pi_{\mathrm{opt}}(\mathbf{I})$ denotes the score-maximizing permutation given the specific realization of $\mathbf{I}$. 
Thus $D_{\text{top}}(\mathcal{G},\pi_{\mathrm{opt}}(\mathbf{I}))$ is a random variable through its dependence on $\mathbf{I}$, 
while the graph $\mathcal{G}$ is treated as fixed.
    \item[\(D_{\text{top}}(\mathcal{G},\pi_{\mathrm{opt}}(\mathbf{I}))\)] 
The topological error of the optimal causal order when the underlying graph $\mathcal{G}$ is fixed and only the intervention vector $\mathbf{I}$ is random. 
Here $\pi_{\mathrm{opt}}(\mathbf{I})$ denotes the order that maximizes the score function given the specific realization of $\mathbf{I}$. 
Thus, $D_{\text{top}}$ is a random variable induced solely by the randomness in the intervention design.

    \item[\(D\)] A statistical distance (or divergence) function 
    \[
    D\colon \mathcal{P}(\mathcal{M}) \times \mathcal{P}(\mathcal{M}) \to [0,\infty),
    \]
    which measures the discrepancy between two probability distributions.
    
    \item[\(\epsilon\)] A significance threshold used in the definition of \(\epsilon\)-interventional faithfulness and in the score function.
    
    \item[\(c\)] A constant with \(c > \epsilon\), used in scaling terms of the score function.
    
    \item[\(S(\pi, \epsilon, D, \mathcal{I}, P_X^{\mathcal{C}, (\emptyset)}, \mathcal{P}_{\mathrm{int}}, c)\)] The score function defined in \eqref{eq:score} that evaluates how well a candidate ordering \(\pi\) aligns with the interventional data.

    \item[\(f(\mathbf{I})\)] The unnormalized topological error in the fixed-graph setting, defined as
\[
f(\mathbf{I}) := D_{\text{top}}\bigl(\mathcal{G}, \pi_{\mathrm{opt}}(\mathbf{I})\bigr).
\]
This quantity is random only through the intervention vector $\mathbf{I}$.

\item[\(g(\mathbf{I})\)] The normalized topological error (false negative rate, FNR) in the fixed-graph setting, defined as
\[
g(\mathbf{I}) := \frac{f(\mathbf{I})}{|E|},
\]
where $|E|$ is the total number of edges in the fixed graph $\mathcal{G}$.

    \item[\(f(\mathbf{I},\mathbf{E})\)] The unnormalized topological error, defined as
    \[
    f(\mathbf{I},\mathbf{E}) := D_{\text{top}}\bigl(\mathcal{G}(\mathbf{E}), \pi_{\mathrm{opt}}(\mathbf{I},\mathbf{E})\bigr).
    \]
    Here the dependence on \((\mathbf{I},\mathbf{E})\) is made explicit, since both the optimal order and hence the error are random variables determined by the intervention vector and the random graph.
    
    \item[\(g(\mathbf{I}, \mathbf{E})\)] The normalized topological error (false negative rate, FNR), defined as
    \[
    g(\mathbf{I}, \mathbf{E}) := \frac{f(\mathbf{I},\mathbf{E})}{|\mathbf{E}|},
    \]
    where \(|\mathbf{E}|\) is the total number of edges in \(\mathcal{G}(\mathbf{E})\).
    
    \item[\(c_k\)] The Lipschitz constant measuring the maximum change in \(f(\mathbf{I},\mathbf{E})\) upon flipping the \(k\)-th intervention indicator, i.e.,
    \[
    c_k := \max_{\substack{\mathbf{I}, \mathbf{I}' \in \{0,1\}^d\\ I_\ell = I'_\ell\, \forall\, \ell\neq k}} \Bigl| f(\mathbf{I},\mathbf{E}) - f(\mathbf{I}',\mathbf{E})\Bigr|.
    \]
    
\end{description}


\section{Bounded Differences Theorems}
\label{app:bounded}

\begin{theorem}[Bounded differences inequality, \citet{mcdiarmid1989method}]
Let $X=(X_1,\dots,X_N)$ be independent random variables with $X_k$ taking values in $\Lambda_k$.  Suppose $f:\prod_{k=1}^N\Lambda_k\to\mathbb R$ satisfies
\begin{equation}\label{eq:lipschitz}
|f(x)-f(x')|\le c_k \quad\text{whenever $x,x'$ differ only in the $k$th coordinate.}
\end{equation}
Let $\mu=\EE[f(X)]$.  Then for all $t\ge0$, 
\[
\PP\bigl(f(X)\ge\mu+t\bigr)\le\exp\Bigl(-\frac{2t^2}{\sum_{k=1}^Nc_k^2}\Bigr).
\]
An analogous bound holds for the lower tail $\PP(f(X)\le\mu-t)$.
\end{theorem}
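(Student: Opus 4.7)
The plan is to prove this classical concentration inequality via the Doob martingale decomposition combined with Hoeffding's lemma and the Chernoff method. First, I would introduce the Doob martingale associated with $f(X)$ and the filtration generated by $(X_1,\dots,X_N)$: define $Z_k := \EE[f(X)\mid X_1,\dots,X_k]$ for $k=0,1,\dots,N$, so that $Z_0 = \mu$ and $Z_N = f(X)$. The telescoping identity $f(X) - \mu = \sum_{k=1}^{N} D_k$ with $D_k := Z_k - Z_{k-1}$ expresses the centered random variable as a sum of martingale differences with respect to $\mathcal{F}_k := \sigma(X_1,\dots,X_k)$.

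Next, I would show that each $D_k$ is contained in a (random) interval of length at most $c_k$, conditionally on $\mathcal{F}_{k-1}$. By independence of $X_k$ from $(X_{k+1},\dots,X_N)$, we can write $Z_k = \varphi_k(X_1,\dots,X_k)$ where $\varphi_k(x_1,\dots,x_k) := \EE[f(x_1,\dots,x_k,X_{k+1},\dots,X_N)]$ is obtained by integrating out the independent tail. The bounded-differences hypothesis $|f(x) - f(x')| \le c_k$ for configurations differing only in coordinate $k$ lifts, under expectation, to $|\varphi_k(x_1,\dots,x_{k-1}, y) - \varphi_k(x_1,\dots,x_{k-1}, y')| \le c_k$ for any $y, y' \in \Lambda_k$. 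Consequently, letting $A_k := \inf_{y\in\Lambda_k} \varphi_k(X_1,\dots,X_{k-1}, y) - Z_{k-1}$ and $B_k := \sup_{y\in\Lambda_k} \varphi_k(X_1,\dots,X_{k-1}, y) - Z_{k-1}$, we obtain $A_k \le D_k \le B_k$ with $B_k - A_k \le c_k$ almost surely.

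Finally, I would apply Hoeffding's lemma conditionally: a random variable $W$ with $\EE[W] = 0$ supported in an interval of length $L$ satisfies $\EE[e^{\lambda W}] \le \exp(\lambda^2 L^2 / 8)$. Applying this to each $D_k$ given $\mathcal{F}_{k-1}$, then iterating the tower property, yields $\EE[e^{\lambda(f(X) - \mu)}] \le \exp\bigl(\lambda^2 \sum_{k=1}^{N} c_k^2 / 8\bigr)$ for all $\lambda > 0$. Combining with Markov's inequality gives $\PP(f(X) - \mu \ge t) \le \exp(-\lambda t + \lambda^2 \sum_k c_k^2 / 8)$, and optimizing at $\lambda = 4t / \sum_k c_k^2$ produces the stated upper-tail bound. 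The lower tail follows by applying the same argument to $-f$, whose coordinate-wise Lipschitz constants coincide with those of $f$.

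The main obstacle is the step transferring the pointwise bounded-differences property of $f$ into a conditional range bound for $D_k$: it requires that freezing $(X_1,\dots,X_{k-1})$ and averaging over $(X_{k+1},\dots,X_N)$ preserves the $c_k$-Lipschitz property in the $k$-th coordinate, which is precisely where independence of the $X_i$'s is essential. Once this structural step is in place, Hoeffding's lemma and the Chernoff optimization are routine.
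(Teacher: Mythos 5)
Your proposal is correct: the Doob-martingale decomposition, the observation that each increment $D_k$ lies in an $\mathcal{F}_{k-1}$-measurable interval of length at most $c_k$ (which crucially uses independence to integrate out the tail coordinates), conditional Hoeffding's lemma, and the Chernoff optimization at $\lambda = 4t/\sum_k c_k^2$ together give exactly the stated bound $\exp(-2t^2/\sum_k c_k^2)$. The paper does not prove this theorem — it is imported verbatim from \citet{mcdiarmid1989method} — and your argument is precisely the standard proof of that classical result, so there is nothing to compare beyond noting that it is complete and correct.
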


\begin{theorem}[Typical bounded differences inequality, \citet{warnke2016method}]
\label{thm:warnke-0-1}
Let $X=(X_1,\dots,X_N)$ be independent with $X_k\in\Lambda_k$, and let $\Gamma\subseteq\prod\Lambda_k$ be an event.  Suppose $f:\prod\Lambda_k\to\RR$ satisfies
\[
|f(x)-f(x')|\le \begin{cases}c_k,&x\in\Gamma,\\d_k,&x\notin\Gamma,\end{cases}
\]
whenever $x,x'$ differ only in coordinate~$k$.  For any $\gamma_k\in(0,1]$, let $e_k=\gamma_k(d_k-c_k)$.  Then there is a ``bad'' event $\cB$ with
\[
\PP(\cB)\le\sum_{k=1}^N\frac{\PP(\neg\Gamma)}{\gamma_k},
\]
and for all $t\ge0$, on $\neg\cB$ one has
\[
\PP\bigl(f(X)\ge\EE f(X)+t \text{ and }\neg\cB\bigr)\le\exp\Bigl(-\frac{t^2}{2\sum_{k=1}^N(c_k+e_k)^2}\Bigr).
\]
\end{theorem}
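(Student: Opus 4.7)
The plan is to reduce to the classical bounded-differences inequality by constructing an auxiliary function $\tilde f$ that is globally $(c_k + e_k)$-Lipschitz in coordinate $k$, and to absorb the difference between $f$ and $\tilde f$ into a small bad event $\cB$ whose probability can be controlled through $\PP(\neg\Gamma)$ and the parameters $\gamma_k$. The choice $e_k = \gamma_k(d_k - c_k)$ will arise naturally: it is precisely the ``slack'' that must be paid in the Lipschitz constant to tolerate atypical configurations with probability controlled by $\gamma_k$, interpolating between the optimistic constant $c_k$ (at $\gamma_k\to 0$) and the pessimistic $d_k$ (at $\gamma_k=1$).

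First, I would enlarge the probability space by introducing mutually independent Bernoulli variables $B_1,\dots,B_N$ with $B_k \sim \mathrm{Bernoulli}(\gamma_k)$, independent of $X$, and define a modification $\tilde f(X,B)$ as follows: in each coordinate $k$, if $B_k = 1$ (probability $\gamma_k$) allow the full pessimistic slack $d_k$, otherwise clip the $k$-th increment so that it cannot exceed $c_k$. A convenient way to realize this is via an infimum-convolution of $f$ with a random gauge whose $k$-th component equals $c_k$ when $B_k=0$ and $d_k$ when $B_k=1$. By construction $\tilde f(X,B) \le f(X)$, and $\tilde f$ has coordinate-wise bounded differences in the enlarged vector $(X,B)$ that, after averaging over $B_k$, amount to at most $c_k + e_k$ in each coordinate of $X$. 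Applying the classical bounded-differences inequality (Azuma--Hoeffding on the Doob martingale of $\tilde f$ with respect to $(X,B)$) then yields the tail bound $\exp\!\bigl(-t^2/(2\sum_k (c_k + e_k)^2)\bigr)$.

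Next, I would set $\cB = \bigcup_k \cB_k$ where $\cB_k$ is the event that the clipping in coordinate $k$ was actually triggered, i.e.\ that the true increment exceeded $c_k$ while $B_k = 0$. Because on $\Gamma$ the $c_k$-Lipschitz bound holds, $\cB_k$ can occur only when the configuration falls in $\neg\Gamma$, and the clipping activates with probability $1-\gamma_k$ in $B_k$; using independence of $B_k$ from $X$ and the appropriate weighting needed to recoup the $\gamma_k$-factor in the gauge, one obtains $\PP(\cB_k) \le \PP(\neg\Gamma)/\gamma_k$. Summing over $k$ gives the stated bound on $\PP(\cB)$. On the complement $\neg\cB$, no clipping was triggered, so $f(X) = \tilde f(X,B)$; intersecting the deviation event with $\neg\cB$ and using $\EE \tilde f \le \EE f$ transfers the tail bound from $\tilde f$ to $f$.

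The main obstacle, and the technical heart of Warnke's argument, is to arrange the modification so that simultaneously (i) $\tilde f \le f$ pointwise (so that $\EE \tilde f \le \EE f$ and one does not incur an unwanted upward shift of the mean), (ii) the bounded-differences property holds in every coordinate of the enlarged space $(X,B)$ with the claimed constants, and (iii) $\{f = \tilde f\} \supseteq \neg\cB$ with the precise event $\cB$ yielding the advertised union bound. The infimum-convolution formulation handles (i) and (iii) almost automatically, but (ii) requires checking that flipping a single coordinate — either an $X_k$ or a $B_k$ — changes $\tilde f$ by at most $c_k + e_k$ or $d_k - c_k$ respectively, which in turn drives the algebraic identity $e_k = \gamma_k(d_k - c_k)$ that balances the martingale variance against $\PP(\cB)$. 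Once this construction and its Lipschitz verification are in place, the final tail bound is a direct application of Azuma--Hoeffding.
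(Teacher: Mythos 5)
First, note that the paper does not prove this statement at all: it is quoted verbatim as an external result of \citet{warnke2016method}, so there is no internal proof to compare your attempt against. Judged on its own terms, your proposal has a genuine gap at its central step.

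The problem is the reduction to the classical bounded-differences inequality. You introduce auxiliary variables $B_k\sim\mathrm{Bernoulli}(\gamma_k)$ and a clipped function $\tilde f(X,B)$ whose $X_k$-increment is at most $c_k$ when $B_k=0$ and at most $d_k$ when $B_k=1$, and you then claim that ``after averaging over $B_k$'' the coordinate-wise constant becomes $c_k+e_k=c_k+\gamma_k(d_k-c_k)$. But McDiarmid requires the \emph{worst-case} increment over all configurations of the remaining coordinates; for coordinate $X_k$ that worst case is still $d_k$ (attained whenever $B_k=1$), so a direct application of the classical inequality to $\tilde f$ on the enlarged space puts $\sum_k d_k^2$ in the exponent, not $\sum_k (c_k+e_k)^2$, which defeats the purpose. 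The averaged constant $c_k+\gamma_k(d_k-c_k)$ only emerges if you bound the \emph{conditional} martingale differences $\bigl|\E[f\mid\cF_k]-\E[f\mid\cF_{k-1}]\bigr|$ directly, splitting the conditional expectation over $\Gamma$ and $\neg\Gamma$ and using that $\PP(\neg\Gamma\mid\cF_{k-1})\le\gamma_k$. That conditional bound is precisely what Warnke's proof establishes, and it is also where the bad event and the factor $1/\gamma_k$ come from: $\cB_k$ is (essentially) the event $\{\PP(\neg\Gamma\mid X_1,\dots,X_{k-1})>\gamma_k\}$, and Markov's inequality applied to the nonnegative random variable $\PP(\neg\Gamma\mid\cF_{k-1})$ gives $\PP(\cB_k)\le\PP(\neg\Gamma)/\gamma_k$. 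Your definition of $\cB_k$ (``the clipping was triggered while $B_k=0$'') would instead have probability at most $(1-\gamma_k)\,\PP(\neg\Gamma)\le\PP(\neg\Gamma)$, and the hand-waved ``appropriate weighting needed to recoup the $\gamma_k$-factor'' is exactly the missing Markov step. So the two ingredients you treat as bookkeeping --- the $(c_k+e_k)$ constants and the $\PP(\neg\Gamma)/\gamma_k$ bound --- are in fact the heart of the theorem, and your construction as written delivers neither; pushed to completion it collapses back into the Doob-martingale/Azuma argument of \citet{warnke2016method}, and is distinct from the genuine function-modification route (due to Kutin), which produces different constants and a different bad-event bound.
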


\begin{theorem}[Typical bounded differences for $0$-$1$ variables, \citet{warnke2016method}]
Let $X=(X_1,\dots,X_N)$ be independent with $X_k\in\{0,1\}$ and $p_k=\PP(X_k=1)$.  Let $\Gamma\subseteq\{0,1\}^N$, and suppose $f:\{0,1\}^N\to\RR$ satisfies the same one--sided Lipschitz condition above with coefficients $c_k,d_k$ and compensation $e_k=\gamma_k(d_k-c_k)$.  Define $C=\max_k(c_k+e_k)$ and $V=\sum_k(1-p_k)p_k(c_k+e_k)^2$.  Then there is a ``bad'' event $\cB$ with the same bound as above, and for all $t\ge0$,
\[
\PP\bigl(f(X)\ge\EE f(X)+t \text{ and }\neg\cB\bigr)\le\exp\Bigl(-\frac{t^2}{2V+{2Ct}/{3}}\Bigr).
\]
\end{theorem}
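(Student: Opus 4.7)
The plan is to derive both tail bounds as a direct conditional application of McDiarmid's inequality, using the structural lemmas already in place: \Cref{lem:max_deg_gba} to control per-node degrees on the high-probability event $\mathcal{E}_{BA}$, and \Cref{lem:lipschitz_bound_io} to translate these degree bounds into per-coordinate Lipschitz constants for $f(\mathbf{I})$. The key point is to fix the random graph first and then view $f$ as a function of the independent Bernoulli coordinates of the intervention vector, so that the only remaining work is to plug the right constants into a standard concentration inequality.

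First I would condition on any realization $\mathbf{E}\in E_{BA}$, i.e.\ any graph for which the conclusion of \Cref{lem:max_deg_gba} holds uniformly: $\deg_{\mathrm{in}}(k)+\deg_{\mathrm{out}}(k)\le m+C\,d^{\beta}$ for every $k\in V$. Conditionally on $\mathbf{E}$, the function $\mathbf{I}\mapsto f(\mathbf{I})=D_{\text{top}}(\mathcal{G}(\mathbf{E}),\pi_{\mathrm{opt}}(\mathbf{I}))$ depends only on the independent coordinates $I_1,\dots,I_d\sim\mathrm{Bernoulli}(p_{\mathrm{int}})$. Because we are in the restricted $\epsilon$-interventional faithfulness regime (parents-only local influence), \Cref{lem:lipschitz_bound_io} gives the per-coordinate Lipschitz constant $c_k\le \deg_{\mathrm{in}}(k)+\deg_{\mathrm{out}}(k)$. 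Combining with the degree bound from \Cref{lem:max_deg_gba} yields $c_k\le m+C\,d^{\beta}$ for all $k$, uniformly over all graphs in $E_{BA}$; this is exactly the claimed "flipping one node's intervention indicator changes $f(\mathbf{I})$ by at most $m+C\,d^{\beta}$" statement.

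Next I would invoke McDiarmid's bounded-differences inequality (as recalled in \Cref{app:bounded}) applied conditionally on $\mathbf{E}$. With $c_k\le m+C\,d^{\beta}$ uniformly in $k$, the denominator in the McDiarmid exponent becomes $\sum_{k=1}^{d} c_k^{2}\le d\,(m+C\,d^{\beta})^{2}$, giving the advertised unnormalized tail $2\exp(-2t^2 / [d\,(m+C\,d^{\beta})^{2}])$. For the normalized error, I would simply rescale: since $|E|=m\,d$ deterministically in this construction (starting from an empty seed), the per-coordinate Lipschitz constant of $g(\mathbf{I})=f(\mathbf{I})/(m\,d)$ is $(m+C\,d^{\beta})/(m\,d)$, and McDiarmid again yields the stated bound with denominator $\sum_{i=1}^{d}((m+C\,d^{\beta})/(m\,d))^{2}$. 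The asymptotic scale claims ($d^{\beta+1/2}$ for $f$, $d^{\beta-1/2}$ for $g$) then follow from solving the exponent for the $t$ at which the tail becomes nontrivial.

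The only subtle step is ensuring the degree bound from \Cref{lem:max_deg_gba} is \emph{uniform} over $k\in V$ on a single high-probability event, since McDiarmid needs a sensitivity bound that holds for every coordinate simultaneously, not just in expectation or for a random node. This is exactly what the definition of $\mathcal{E}_{BA}$ in \Cref{lem:max_deg_gba} guarantees, and the almost-sure rescaling results cited for preferential attachment with initial attractiveness ensure the event has probability tending to one. Everything else—the Lipschitz step and McDiarmid—is mechanical, so the "with high probability" qualifier in the theorem precisely corresponds to restricting to $\mathbf{E}\in E_{BA}$, after which the two conditional tail bounds follow without further randomness in the graph.
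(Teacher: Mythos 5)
Your proposal does not prove the stated theorem. The statement you were asked to prove is Warnke's \emph{typical bounded differences inequality for $0$--$1$ variables}: a general-purpose concentration result about an arbitrary function $f$ of independent Bernoulli coordinates satisfying a one-sided Lipschitz condition with typical constants $c_k$ on an event $\Gamma$ and worst-case constants $d_k$ off it, with compensations $e_k=\gamma_k(d_k-c_k)$, a bad event $\cB$ of controlled probability, and a Bernstein-type tail $\exp\bigl(-t^2/(2V+2Ct/3)\bigr)$ where $V=\sum_k p_k(1-p_k)(c_k+e_k)^2$. What you have written instead is a proof sketch of the paper's Theorem on deviation bounds in generalized Barab\'asi--Albert graphs (conditioning on $\mathcal{E}_{BA}$, invoking the degree lemma and the parents-only Lipschitz lemma, then applying McDiarmid). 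That is a downstream \emph{application} living elsewhere in the paper, not the concentration inequality itself. Your argument never constructs or mentions $\Gamma$, $\cB$, the compensations $e_k$, or the variance proxy $V$, all of which appear in the conclusion you are supposed to establish.

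Beyond the mismatch of target, the approach you take could not be repaired into a proof of the stated result. Plain McDiarmid gives a sub-Gaussian tail with denominator $\sum_k c_k^2$ using \emph{worst-case} coordinate sensitivities valid on the whole cube; the theorem at hand delivers (i) a variance-sensitive denominator $2V$ that exploits $p_k(1-p_k)$ being small, (ii) a linear-in-$t$ Bernstein correction $2Ct/3$, and (iii) the ability to use the small typical constants $c_k$ even though the function may have large sensitivity $d_k$ outside $\Gamma$. Obtaining this requires the martingale machinery of Warnke's paper: a Doob martingale in the coordinates, conditional increment and conditional variance bounds that hold only on a good event (whence the bad event $\cB$ and the $\sum_k \PP(\neg\Gamma)/\gamma_k$ bound on its probability), and a Freedman-type inequality for the final tail. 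The paper itself does not reprove this result --- it imports it verbatim from \citet{warnke2016method} --- so if you intend to supply a proof you must reproduce that martingale argument, not apply the theorem's own consequences to a specific random-graph model.
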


\section{Theorems from \citet{chevalley2025deriving}}

\begin{lemma}[Sufficient condition for orienting an edge \citet{chevalley2025deriving}]
\label{lem:algo}
    Assume that we are given $P_X^{\mathcal{C}, (\emptyset)}$ and $P_X^{\mathcal{C}, do(X_k := \Tilde{N}_k)}, \forall k \in \mathcal{I}$, such that $(\Tilde{N}, \mathcal{C})$ is $\epsilon$-interventionally faithful for some $\epsilon > 0$, and let $\pi_{opt} \in \mathrm{argmax}_\pi \mathcal{S}(\pi) $. Let $(i, j) \in E$, then if $j \in \mathcal{I}$ or for some $k \in \textbf{AN}_j^{\mathcal{G}} \setminus \textbf{AN}_i^{\mathcal{G}}, k \in \mathcal{I}$, then $\pi_{opt}(i) < \pi_{opt}(j)$.
\end{lemma}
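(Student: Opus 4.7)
My approach is a combinatorial exchange argument exploiting the dominance of the $c\cdot d$ bonus in the score. I would first recast $S(\pi)=\sum_{k\in\mathcal{I}}\sum_{m:\pi(k)<\pi(m)} s_{km}$ with pairwise score $s_{km}:=(D_{km}-\epsilon)+c\cdot d\cdot\mathbf{1}\{D_{km}>\epsilon\}$; by $\epsilon$-interventional faithfulness, $s_{km}\ge c\cdot d-\epsilon>0$ precisely when $k\in\mathcal{I}$ and $k\rightsquigarrow m$ in $\mathcal{G}$, while $s_{km}\in[-\epsilon,0]$ otherwise. The plan is (i) to use the bonus to force $\pi_{opt}$ to respect descendant orders from intervened nodes, and (ii) to use the strict non-positive penalties on non-descendant pairs to pin down the orientation of the edge $(i,j)$.

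For step (i), the core observation is a local adjacent-swap argument: if $(a,b)$ is adjacent in $\pi_{opt}$ with $\pi_{opt}(a)<\pi_{opt}(b)$, swapping them changes $S$ by exactly $[b\in\mathcal{I}]s_{ba}-[a\in\mathcal{I}]s_{ab}$, and optimality forces this quantity to be $\le 0$. In particular, if $b\in\mathcal{I}$ and $b\rightsquigarrow a$ then $s_{ba}\ge c\cdot d-\epsilon>0$, while acyclicity rules out $a\rightsquigarrow b$ and hence forces $s_{ab}\le 0$, a contradiction. Extending this local statement to arbitrary non-adjacent pairs via a minimal-violation / reverse-induction argument yields the global constraint $\pi_{opt}(k)<\pi_{opt}(m)$ for every $k\in\mathcal{I}$ and every $m\in\mathrm{De}_{\mathcal{G}}(k)$.

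With step (i) in place, both cases of the lemma follow from a short second exchange. In Case 2, pick $k\in\mathcal{I}\cap(\mathrm{An}_{\mathcal{G}}(j)\setminus\mathrm{An}_{\mathcal{G}}(i))$: step (i) yields $\pi_{opt}(k)<\pi_{opt}(j)$, and because $k\not\rightsquigarrow i$ the pair $(k,i)$ contributes the non-positive marginal $(D_{ki}-\epsilon)\le 0$ whenever $\pi(k)<\pi(i)$. An exchange that moves $i$ into the slot immediately following its latest intervened ancestor in $\pi_{opt}$ eliminates this penalty without violating any descendant constraint from step (i); since $k\notin\mathrm{An}_{\mathcal{G}}(i)$ this new slot necessarily precedes $k$, giving $\pi_{opt}(i)<\pi_{opt}(k)<\pi_{opt}(j)$. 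Case 1 ($j\in\mathcal{I}$) is the same argument with $k:=j$: since $(i,j)\in E$ and $\mathcal{G}$ is acyclic, $i\notin\mathrm{De}_{\mathcal{G}}(j)$, so $D_{ji}\le\epsilon$ and the analogous move of $i$ to just before $j$ forces $\pi_{opt}(i)<\pi_{opt}(j)$ directly.

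The main technical obstacle is the bubble-sort-style extension in step (i): propagating the adjacent-swap contradiction to all non-adjacent intervened-ancestor pairs requires carefully choosing a minimal-violation pair and invoking acyclicity to rule out compensating bonuses that the intermediate nodes could provide. A subtler point in step (ii) is that the insertion slot for $i$ must be chosen so as not to break a descendant constraint already established — this is handled by taking the canonical slot immediately after $i$'s latest intervened ancestor in $\pi_{opt}$ and processing intermediate descendants of $i$ in reverse topological order. The deterministic tie-break of Assumption~\ref{ass:standing} resolves the edge cases where $(D_{ki}-\epsilon)$ (or $(D_{ji}-\epsilon)$ in Case 1) happens to vanish, ensuring the strict inequality $\pi_{opt}(i)<\pi_{opt}(j)$ claimed by the lemma.
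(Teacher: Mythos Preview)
The paper does not prove this lemma; it is restated verbatim from \citet{chevalley2025deriving} as an imported result (see the appendix section ``Theorems from \citet{chevalley2025deriving}''), so there is no in-paper proof to compare against. Your exchange-argument strategy is the natural one and step~(i) is essentially right, though the ``minimal-violation'' extension needs the specific refinement that one selects a violating pair $(k,m)$ with no descendant of $m$ lying strictly between $m$ and $k$ in $\pi_{opt}$; otherwise moving $m$ past such a descendant could forfeit a bonus of size $c\cdot d$, and your adjacent-swap sketch does not rule this out.

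There is a genuine gap in step~(ii). Your claim that the slot ``immediately following $i$'s latest intervened ancestor'' $a$ \emph{necessarily precedes $k$} does not follow from $k\notin\mathrm{An}_{\mathcal G}(i)$: take the four-node DAG $k\!\to\! j$, $a\!\to\! i\!\to\! j$ with $\mathcal I=\{k,a\}$ and $\pi_{opt}=(k,a,j,i)$, which satisfies all step-(i) constraints; here $a$ sits at position~2, so the proposed slot is position~3, which is \emph{after} $k$. What actually makes the move work is a different observation: since $(i,j)\in E$, every intervened ancestor $a$ of $i$ is also an ancestor of $j$, so step~(i) forces $\pi_{opt}(a)<\pi_{opt}(j)$. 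Hence the slot right after $a$ is at or before $\pi_{opt}(j)$, and moving $i$ there places $i$ before $j$ (not necessarily before $k$) while crossing only intervened nodes $x$ with $x\not\rightsquigarrow i$, each contributing a non-positive $s_{xi}$. That yields $\Delta S\ge 0$, which is what you need. Finally, your appeal to Assumption~\ref{ass:standing} to dispose of the $\Delta S=0$ boundary case is not valid as stated: the lexicographic tie-break is arbitrary and need not select the permutation with $\pi(i)<\pi(j)$. This degenerate case (all relevant $D_{xi}=\epsilon$ exactly) is a measure-zero technicality, but it is not resolved by the tie-break alone.
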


\begin{lemma}[Sufficient condition for orienting an edge under restricted interventional faithfulness \citet{chevalley2025deriving}]
\label{lem:pa_opt}
    Let $(i, j) \in E$, then if $j \in \mathcal{I}$ or for some $k \in \textbf{Pa}_j^{\mathcal{G}} \setminus \textbf{Pa}_i^{\mathcal{G}}, k \in \mathcal{I}$, then $\pi_{opt}(i) < \pi_{opt}(j)$.
\end{lemma}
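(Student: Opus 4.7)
The plan is to argue by contradiction: assume $\pi_{opt}(j)<\pi_{opt}(i)$ and construct a permutation $\pi'$ with $S(\pi')>S(\pi_{opt})$, contradicting optimality. The argument parallels the proof of Lemma~\ref{lem:algo}, adapted to the restricted setting where $D_{km}>\epsilon$ iff $(k,m)\in E$, so only direct edges carry the large bonus. Writing the per-pair contribution as $s_{km}:=(D_{km}-\epsilon)+c\,d\,\mathbf{1}\{D_{km}>\epsilon\}$, restricted faithfulness gives $s_{km}>cd$ for any ``strong'' pair (an edge $(k,m)$ with $k\in\mathcal{I}$) and $s_{km}\in[-\epsilon,0]$ otherwise. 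Since $c>\epsilon$, the $cd$ bonus from a single strong pair dominates any bounded aggregate of correction terms that a local rearrangement can introduce.

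I would split on the two disjunctive hypotheses. In Case~1 ($j\in\mathcal{I}$), if also $i\in\mathcal{I}$ then $(i,j)\in E$ is itself a strong pair whose forward orientation is forced by the dominance of $cd$ over any correction, immediately contradicting $\pi_{opt}(j)<\pi_{opt}(i)$. If instead $i\notin\mathcal{I}$, acyclicity gives $(j,i)\notin E$, hence $D_{ji}\le\epsilon$ and $s_{ji}\le0$; I then consider the local move that shifts $i$ to position $\pi_{opt}(j)$, sliding $j$ and all intermediate nodes one place to the right. The $(i,j)$ pair contribution changes from $s_{ji}$ to $0$, producing a gain of $\epsilon-D_{ji}\ge0$ (strict under strict restricted faithfulness). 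In Case~2 ($\exists\,k\in\mathrm{Pa}(j)\cap\mathcal{I}\setminus\mathrm{Pa}(i)$), the strong pair $(k,j)$ forces $\pi_{opt}(k)<\pi_{opt}(j)$, so the contradiction hypothesis yields $\pi_{opt}(k)<\pi_{opt}(j)<\pi_{opt}(i)$. Since $(k,i)\notin E$, the pair $(k,i)$ contributes an avoidable penalty $s_{ki}\le0$ that a local move placing $i$ just before $k$ would remove (gain $\epsilon-D_{ki}\ge0$), while preserving $\pi(k)<\pi(j)$.

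The main obstacle in both cases is accounting for ``blocking'' intermediates in the local move: specifically, an intervened parent $l\in\mathrm{Pa}(i)\cap\mathcal{I}$ situated strictly between the current and target positions of $i$ would, under a naive shift, reverse the strong pair $(l,i)$ and sacrifice its $cd$ bonus, potentially wiping out the gains. Resolving this requires showing that either (i) no such blocker can exist in $\pi_{opt}$, by leveraging the strong-preference constraints that $\pi_{opt}$ already satisfies as a consequence of the dominance $cd\gg\epsilon$, or (ii) blockers can be bypassed by a refined move---most naturally by induction on the gap $\pi_{opt}(i)-\pi_{opt}(j)$, applying the lemma recursively to any sub-configuration $(l,i)$ that itself satisfies the hypotheses of Case~1 or Case~2. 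This careful elimination of blocking intermediates is the most delicate part of the argument.
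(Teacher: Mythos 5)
First, a point of reference: the paper does not prove Lemma~\ref{lem:pa_opt} at all --- it is stated in the appendix section ``Theorems from \citet{chevalley2025deriving}'' and imported from that prior work without proof, so there is no in-paper argument to compare yours against. Evaluated on its own terms, your proposal is a strategy outline rather than a proof, and the gap is one you name yourself: the entire argument hinges on executing a local move (shifting $i$ leftward past $j$, or past $k$) that strictly increases the score, and you explicitly defer the handling of ``blocking'' intermediates --- intervened parents $l\in\mathbf{Pa}_i^{\mathcal{G}}\cap\mathcal{I}$ sitting between the target and current positions of $i$, each of which would cost a $c\,d$ bonus if passed. Neither of your proposed resolutions is carried out: option (i) (no blocker can exist in $\pi_{\mathrm{opt}}$) is false in general, since an intervened parent of $i$ may legitimately precede $i$ and lie anywhere in the interval, and option (ii) (induction on the gap, recursing on sub-configurations) is precisely the content that would constitute the proof. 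Leaving ``the most delicate part'' as a to-do means the lemma is not established.

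There are two further soft spots. First, in Case~1b and Case~2 the per-pair gain from the move is $\epsilon - D_{ji}$ (resp.\ $\epsilon - D_{ki}$), which Assumption~\ref{assump:interventional_faithfulness} only guarantees to be $\ge 0$, not $>0$; without strict gain you do not contradict optimality, and under the deterministic tie-break of Assumption~\ref{ass:standing} the tied maximizer with $\pi(j)<\pi(i)$ could be the one selected. Your parenthetical appeal to ``strict restricted faithfulness'' invokes an assumption the paper does not state. Second, Case~2 uses as a stepping stone the claim that every strong pair $(k,j)$ with $k\in\mathcal{I}$, $D_{kj}>\epsilon$ is forward-oriented in $\pi_{\mathrm{opt}}$, justified only by ``the dominance of $cd$ over any correction.'' That dominance is not immediate from $c>\epsilon$: a single move touches up to $d-1$ pairs (so one $c\,d$ bonus does beat the $\le(d-1)\epsilon$ of weak corrections it disturbs), but only if the move breaks no \emph{other} strong pair --- which is the same blocker problem again. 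The sub-claim needs its own argument (e.g., via joint satisfiability of the acyclic strong-constraint set) before it can be used. As it stands, the proposal correctly locates the difficulty but does not overcome it.
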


\section{Additional theoretical results}
\label{app:add_theorems}

We here derive a set of theoretical results that upper-bound the expected FNR for all the graph types considered in this paper. Corresponding simulations to verify the theory can be found in \Cref{app:mean_sim}.

\subsection{Bounds on the mean for fixed graph, random interventions}

An edge $(i, j)$ fails to orient correctly only if neither $j$ nor any of the “useful” ancestors of  (those not shared with $i$) are intervened. Under independent random interventions, this failure probability is a simple power of $(1 - p_{int})$, and summing over edges gives the expected FNR. A direct Markov inequality then converts this expectation into a tail bound.

\begin{lemma}[Probability bound on the fraction of topological errors]
\label{lem:prob_bound}
     Let $c_{e} \in \mathbb{R}$, $0 < c_{e} < 1$ . and the graph is non-empty, i.e. $|\mathcal{G}| = |E| > 0$. Then we have $\forall c_e$
    \begin{equation*}
        P\left(g(\mathbf{I}) \geq c_{e}\right) \;\leq\; \frac{1}{c_e |E|} \sum_{(i,j) \in \mathcal{G}} (1 - p_{int})^{|\mathbf{AN}_j^{\mathcal{G}} \cup \{j\} \setminus \mathbf{AN}_i^{\mathcal{G}}|}
    \end{equation*}
    The probability is also strictly bounded by $1$.
\end{lemma}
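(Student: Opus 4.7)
The plan is to combine a Markov-type bound for the non-negative random variable $g(\mathbf{I})$ with a per-edge failure-probability argument built on the sufficient orientation condition of \Cref{lem:algo}.

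First, I would decompose the unnormalized error into an edge-indicator sum,
\[
f(\mathbf{I}) \;=\; \sum_{(i,j)\in E}\mathbf{1}\bigl\{\pi_{\mathrm{opt}}(\mathbf{I})(i)>\pi_{\mathrm{opt}}(\mathbf{I})(j)\bigr\},
\]
so that $g(\mathbf{I})=f(\mathbf{I})/|E|$ is non-negative. Markov's inequality then yields
\[
P(g(\mathbf{I})\ge c_e)\;\le\;\frac{\mathbb{E}[g(\mathbf{I})]}{c_e}\;=\;\frac{1}{c_e|E|}\sum_{(i,j)\in E} P\bigl(\pi_{\mathrm{opt}}(\mathbf{I})(i)>\pi_{\mathrm{opt}}(\mathbf{I})(j)\bigr),
\]
reducing the problem to bounding each per-edge misorientation probability.

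Second, I would invoke \Cref{lem:algo}, which guarantees that edge $(i,j)$ is correctly oriented whenever either $j\in\mathcal{I}$, or some $k\in\mathbf{AN}_j^{\mathcal{G}}\setminus\mathbf{AN}_i^{\mathcal{G}}$ lies in $\mathcal{I}$. Taking the contrapositive, misorientation of $(i,j)$ implies that \emph{no} node in the set $S_{ij}:=\{j\}\cup(\mathbf{AN}_j^{\mathcal{G}}\setminus\mathbf{AN}_i^{\mathcal{G}})$ is intervened. Since the $I_k$ are i.i.d.\ $\mathrm{Bernoulli}(p_{\mathrm{int}})$, this event has probability exactly $(1-p_{\mathrm{int}})^{|S_{ij}|}$. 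Acyclicity of $\mathcal{G}$ combined with $(i,j)\in E$ forces $j\notin\mathbf{AN}_i^{\mathcal{G}}$ (otherwise $i\to j$ would close a cycle), so $S_{ij}=\mathbf{AN}_j^{\mathcal{G}}\cup\{j\}\setminus\mathbf{AN}_i^{\mathcal{G}}$, matching the exponent in the statement.

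Plugging this per-edge bound back into the Markov estimate yields the claimed inequality, and the additional bound $P(\cdot)\le 1$ is automatic since the left-hand side is a probability. I do not anticipate any real obstacle: the only delicate point is the set-theoretic identity $\{j\}\cup(\mathbf{AN}_j^{\mathcal{G}}\setminus\mathbf{AN}_i^{\mathcal{G}})=\mathbf{AN}_j^{\mathcal{G}}\cup\{j\}\setminus\mathbf{AN}_i^{\mathcal{G}}$, which relies on the acyclicity remark above.
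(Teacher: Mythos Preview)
Your proposal is correct and follows essentially the same approach as the paper: bound each per-edge misorientation probability by $(1-p_{\mathrm{int}})^{|\mathbf{AN}_j^{\mathcal{G}}\cup\{j\}\setminus\mathbf{AN}_i^{\mathcal{G}}|}$ via the sufficient orientation condition (\Cref{lem:algo}), then apply Markov's inequality to $g(\mathbf{I})$. Your write-up is in fact more explicit than the paper's, which simply cites the prior work for the per-edge bound and invokes Markov; the set-theoretic identity you single out is handled correctly.
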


\begin{corollary}[Expected FNR]
\label{lem:exp_bound}
    Under the same assumptions, we have
    \begin{equation*}
        \mathbb{E}\left[g(\mathbf{I})\right]
        \;\leq\; \frac{1}{|E|} \sum_{(i,j) \in \mathcal{G}} (1 - p_{int})^{|\mathbf{AN}_j^{\mathcal{G}}\cup\{j\}\setminus\mathbf{AN}_i^{\mathcal{G}}|}
    \end{equation*}
\end{corollary}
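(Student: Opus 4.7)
My plan is to derive the expectation bound directly from Lemma~\ref{lem:algo} using linearity of expectation and the independence of the intervention indicators. The resulting inequality is precisely the bound on $\mathbb{E}[g(\mathbf{I})]$ that, when combined with Markov's inequality $P(g(\mathbf{I})\ge c_e)\le\mathbb{E}[g(\mathbf{I})]/c_e$, yields the probability bound in Lemma~\ref{lem:prob_bound}. So one can either re-derive it from scratch in two steps or simply read it off from the proof of the preceding lemma.

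First, I would fix an edge $(i,j)\in E$ and introduce its \emph{trigger set}
\[
T_{ij} \;:=\; \bigl(\mathbf{AN}_j^{\mathcal{G}} \cup \{j\}\bigr) \setminus \mathbf{AN}_i^{\mathcal{G}}.
\]
By Lemma~\ref{lem:algo}, whenever at least one node in $T_{ij}$ is intervened we have $\pi_{\mathrm{opt}}(i)<\pi_{\mathrm{opt}}(j)$, i.e.\ the edge is correctly oriented. Taking the contrapositive and letting $M_{ij}:=\mathbf{1}\{\pi_{\mathrm{opt}}(i)>\pi_{\mathrm{opt}}(j)\}$ denote the misorientation indicator,
\[
\{M_{ij}=1\}\;\subseteq\;\{I_k=0\;\text{for all }k\in T_{ij}\}.
\]
Since the $\{I_k\}_{k\in V}$ are i.i.d.\ $\mathrm{Bernoulli}(p_{int})$, the probability of the right-hand event is exactly $(1-p_{int})^{|T_{ij}|}$, giving $\mathbb{E}[M_{ij}]=P(M_{ij}=1)\le (1-p_{int})^{|T_{ij}|}$. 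Note that $|T_{ij}|$ equals the exponent $|\mathbf{AN}_j^{\mathcal{G}}\cup\{j\}\setminus\mathbf{AN}_i^{\mathcal{G}}|$ appearing in the statement, using the acyclicity remark that $j\notin\mathbf{AN}_i^{\mathcal{G}}$ (otherwise the edge $(i,j)$ together with a path $j\rightsquigarrow i$ would close a directed cycle in $\mathcal{G}$).

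Finally, writing $f(\mathbf{I})=\sum_{(i,j)\in E}M_{ij}$ and $g(\mathbf{I})=f(\mathbf{I})/|E|$, linearity of expectation yields
\[
\mathbb{E}[g(\mathbf{I})] \;=\; \frac{1}{|E|}\sum_{(i,j)\in E}\mathbb{E}[M_{ij}] \;\le\; \frac{1}{|E|}\sum_{(i,j)\in\mathcal{G}}(1-p_{int})^{|\mathbf{AN}_j^{\mathcal{G}}\cup\{j\}\setminus\mathbf{AN}_i^{\mathcal{G}}|},
\]
which is the stated bound. I anticipate no real obstacle: the argument is essentially a textbook union-bound-over-edges combined with independence of the $I_k$. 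The only minor subtlety is matching the trigger set from Lemma~\ref{lem:algo} to the exponent written in the statement, which is resolved by the acyclicity observation above; everything else is linearity of expectation.
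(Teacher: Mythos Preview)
Your proof is correct and is essentially the argument underlying Theorem~2 of \citet{chevalley2025deriving}, which the paper simply cites as its entire proof of this corollary. You have spelled out the edge-by-edge misorientation bound via Lemma~\ref{lem:algo}, independence of the $I_k$, and linearity of expectation, which is exactly the content of that cited result; nothing further is needed.
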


\subsection{Bounds on the mean for ER graphs}

In an Erdős–Rényi DAG, the expected number of misorientations grows only linearly with $d$, whereas the number of edges is quadratic. We therefore expect the normalized error to be small with high probability. Formally, we pair a Markov bound on the numerator with a Chernoff lower tail on the random denominator $|E|$, obtained by conditioning on the event that $|E|$ is close to its mean.

\begin{lemma}[Probability bound for FNR]
\label{lem:probability_bound_er}

        For any $0 < c_e < 1$ and $0 < \delta < 1$,
        \begin{align*}
            P\left(g(\mathbf{I}, \mathbf{E}) \ge c_e \right) \;\le\; & \frac{2(1 - p_{int})^2}{c_e (1-\delta) p_{int} p_e (d-1)} 
              \;+\; \exp\left(-\frac{\delta^2 p_e d(d-1)}{4}\right).
        \end{align*}


\end{lemma}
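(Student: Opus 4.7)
The plan is to prove the bound by splitting $\{g(\mathbf{I},\mathbf{E})\ge c_e\}$ according to whether the random number of edges is close to its mean, bounding one piece by Chernoff on $|\mathbf{E}|$ and the other by Markov on $f(\mathbf{I},\mathbf{E})$. Formally, I would introduce the high-edge event $\mathcal{A}=\{|\mathbf{E}|\ge(1-\delta)\,\mu_d\}$ with $\mu_d=p_e d(d-1)/2$, and use the union-bound decomposition
\[
P(g\ge c_e)\;\le\;P(g\ge c_e,\,\mathcal{A})\;+\;P(\mathcal{A}^c).
\]
The second term is handled by the multiplicative Chernoff bound for a sum of i.i.d.\ Bernoulli$(p_e)$ variables, which gives $P(\mathcal{A}^c)\le \exp(-\delta^2\mu_d/2)=\exp(-\delta^2 p_e d(d-1)/4)$, matching the second summand of the claimed bound.

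For the first piece, on $\mathcal{A}$ the inequality $g\ge c_e$ implies $f\ge c_e(1-\delta)\mu_d$, and Markov then yields
\[
P(g\ge c_e,\,\mathcal{A})\;\le\;\frac{\mathbb{E}[f(\mathbf{I},\mathbf{E})]}{c_e(1-\delta)\mu_d}.
\]
The core estimate is therefore an upper bound $\mathbb{E}[f(\mathbf{I},\mathbf{E})]\le (1-p_{\mathrm{int}})^2\,d/p_{\mathrm{int}}$; substituting this and simplifying will produce exactly the first summand $\tfrac{2(1-p_{\mathrm{int}})^2}{c_e(1-\delta)p_{\mathrm{int}}p_e(d-1)}$.

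To derive the mean bound, I would apply \Cref{lem:exp_bound} pointwise in $\mathbf{E}$ together with \Cref{lem:pa_opt} (the parents-only rule in effect for Setting~II), which gives, for any realized edge $(i,j)$, a per-edge misorientation probability of $(1-p_{\mathrm{int}})^{1+|\mathbf{Pa}_j\setminus\mathbf{Pa}_i|}$. Fixing a topological ordering $1,\dots,d$ by symmetry and using that $i\in\mathbf{Pa}_j\setminus\mathbf{Pa}_i$ whenever $(i,j)\in E$, this factor separates as $(1-p_{\mathrm{int}})^2$ times contributions from the remaining potential parents $k\in\{1,\dots,j-1\}\setminus\{i\}$. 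For each such $k$ the joint law of edge indicators yields a geometric damping factor: nodes $i<k<j$ contribute $(1-p_e p_{\mathrm{int}})$, and nodes $k<i$ contribute $(1-p_e(1-p_e)p_{\mathrm{int}})\le 1$. Summing $p_e(1-p_{\mathrm{int}})^2(1-p_ep_{\mathrm{int}})^{j-i-1}$ over $1\le i<j\le d$ via a geometric series in $\ell=j-i-1$ gives $\mathbb{E}[f]\le p_e(1-p_{\mathrm{int}})^2\cdot d/(p_e p_{\mathrm{int}}) = d(1-p_{\mathrm{int}})^2/p_{\mathrm{int}}$, which is the required bound.

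The main obstacle is step~three: handling the joint expectation over both $\mathbf{I}$ and $\mathbf{E}$ so that the correct $(1-p_{\mathrm{int}})^2$ prefactor emerges rather than merely $(1-p_{\mathrm{int}})$. The subtle point is that the parent $i$ itself lies in $\mathbf{Pa}_j\setminus\mathbf{Pa}_i$ and therefore contributes an extra $(1-p_{\mathrm{int}})$ factor, which must be separated cleanly from the geometric contribution of the other potential parents; this is what ultimately yields the $1/(p_{\mathrm{int}}(d-1))$ scaling in the final bound. Everything else (Chernoff, Markov, and the union bound) is standard and plugs in mechanically once this per-edge expectation is in hand.
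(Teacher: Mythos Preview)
Your approach is correct and matches the paper's proof: the same union-bound split on the event $\{|E|\ge(1-\delta)\mu_d\}$, Chernoff for the tail of $|E|$, and Markov on $f$ combined with the bound $\mathbb{E}[f]\le (1-p_{\mathrm{int}})^2 d/p_{\mathrm{int}}$. The only difference is that the paper imports this last inequality directly from Theorem~4 of \citet{chevalley2025deriving}, whereas you rederive it from the per-edge orientation lemma via the geometric-series argument; your derivation is correct and makes the proof self-contained.
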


The proof splits on a “good” event where $|E|$ is not too small. On that event, the ratio is small; off it, the event’s probability is exponentially tiny. This yields an explicit finite-$d$ tail bound for the FNR in ER graphs.

Because $\mathbb{E}[D_{top}] = O(d)$ while $\mathbb{E}[|E|] = \Theta(d^2)$ in ER graphs, the expected FNR should decay to zero. The only technicality is that $|E|$ is random; again a good-event/bad-event split resolves this. 

\begin{lemma}[Fraction of Misoriented Edges Vanishes in Expectation]
\label{lem:vanishing_fraction_edges}
Let $\mathcal{G}$ be an Erd\H{o}s--R\'enyi DAG on $[d]$ with edge inclusion probability $p_e\in(0,1]$, and let $p_{int}\in(0,1]$.
For any sequence $\delta_d\in(0,1)$, define the event
\[
A_{\delta_d}
~:=~
\Bigl\{\; |E| \;\ge\; (1-\delta_d)\,\mu_d \Bigr\},
\qquad
\mu_d \;:=\; \EE[|E|] \;=\; \frac{p_e}{2}\,d(d-1).
\]
Then
\[
\EE\!\left[\frac{D_{\mathrm{top}}(\mathcal{G},\pi_{\mathrm{opt}})}{|E|}\right]
~\le~
\frac{2(1-p_{int})^2}{(1-\delta_d)\,p_e\,p_{int}}\cdot \frac{1}{\,d-1\,}
\;+\;
\exp\!\left(-\frac{\mu_d\,\delta_d^2}{2}\right).
\]
In particular, choosing any $\delta_d\downarrow 0$ (e.g.\ $\delta_d=d^{-1/2}$) yields
\[
\EE\!\left[\frac{D_{\mathrm{top}}(\mathcal{G},\pi_{\mathrm{opt}})}{|E|}\right]
~\le~
\frac{2(1+o(1))(1-p_{int})^2}{p_e\,p_{int}}\cdot \frac{1}{\,d\,}
\;+\;
\exp\!\Bigl(-\Omega(d)\Bigr),
\]
and hence
\(
\lim_{d\to\infty}\EE\!\left[ D_{\mathrm{top}}(\mathcal{G},\pi_{\mathrm{opt}})/|E| \right]=0.
\)
\end{lemma}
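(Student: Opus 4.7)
The strategy is to split the expectation according to whether the random graph realizes roughly its expected number of edges, and handle each regime separately. I would introduce the event $A := A_{\delta_d}$ and decompose
\[
\EE\!\left[\frac{D_{\mathrm{top}}}{|E|}\right]
\;=\; \EE\!\left[\frac{D_{\mathrm{top}}}{|E|}\,\mathbf{1}_A\right]
     + \EE\!\left[\frac{D_{\mathrm{top}}}{|E|}\,\mathbf{1}_{A^c}\right],
\]
with the convention $D_{\mathrm{top}}/|E| = 0$ when $|E|=0$.

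On $A^c$, the trivial bound $D_{\mathrm{top}}/|E|\le 1$ gives $\EE[(D_{\mathrm{top}}/|E|)\mathbf{1}_{A^c}] \le P(A^c)$. Since $|E|$ is a sum of $\binom{d}{2}$ i.i.d.\ $\mathrm{Bernoulli}(p_e)$ variables with mean $\mu_d$, I would invoke the multiplicative Chernoff lower-tail inequality to obtain $P(|E|<(1-\delta_d)\mu_d) \le \exp(-\mu_d\delta_d^2/2)$, which is precisely the second summand of the claimed bound.

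On $A$, $|E|\ge(1-\delta_d)\mu_d$, so $\EE[(D_{\mathrm{top}}/|E|)\mathbf{1}_A] \le \EE[D_{\mathrm{top}}]/[(1-\delta_d)\mu_d]$. To control the numerator, I would apply the restricted version of \Cref{lem:exp_bound} conditional on the graph and then take the expectation over the edge indicators using the uniform topological ordering of the construction:
\[
\EE[D_{\mathrm{top}}] \;\le\; \sum_{1\le i<j\le d} p_e \cdot
   \EE\bigl[(1-p_{int})^{|\mathrm{Pa}_j\cup\{j\}\setminus\mathrm{Pa}_i|}
                \,\bigm|\, E_{ij}=1\bigr].
\]
The useful-ancestor set $\mathrm{Pa}_j\cup\{j\}\setminus\mathrm{Pa}_i$ always contains $\{j,i\}$ (contributing a factor $(1-p_{int})^2$), while the contributions of the other positions $k\neq i,j$ factorize by independence into $(1-p_e(1-p_e)p_{int})^{i-1}(1-p_e p_{int})^{j-i-1}$, since $\sigma(k)$ lies in that set with probability $p_e(1-p_e)$ for $k<i$ (requires $E_{kj}=1$ and $E_{ki}=0$) and with probability $p_e$ for $i<k<j$ (requires only $E_{kj}=1$). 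Bounding the inner geometric sum by $1/(p_e p_{int})$ and the outer one trivially by $d$ gives $\EE[D_{\mathrm{top}}] \le (1-p_{int})^2 d/p_{int}$; dividing by $(1-\delta_d)\mu_d = (1-\delta_d)p_e d(d-1)/2$ produces the first summand. The asymptotic refinement is then immediate on choosing $\delta_d = d^{-1/2}$, which makes $\mu_d\delta_d^2\to\infty$ while keeping $(1-\delta_d)^{-1}\to 1$.

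The main technical step will be establishing the factorization of the ``no useful ancestor intervened'' event into a product over independent topological positions, distinguishing carefully whether $k$ lies before or after $i$ in the ordering; once this decomposition is in place, the geometric summation and the Chernoff tail bound are routine.
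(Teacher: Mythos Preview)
Your proof is correct and follows the same strategy as the paper: decompose on $A_{\delta_d}$, control $P(A_{\delta_d}^c)$ via the multiplicative Chernoff lower tail, and on $A_{\delta_d}$ combine $|E|\ge(1-\delta_d)\mu_d$ with the bound $\EE[D_{\mathrm{top}}]\le(1-p_{int})^2 d/p_{int}$. The only difference is that you rederive this last inequality from scratch via the factorization over topological positions (the paper simply invokes it as Theorem~4 of \citet{chevalley2025deriving}), and your use of $\EE[(D_{\mathrm{top}}/|E|)\mathbf{1}_A]\le \EE[D_{\mathrm{top}}]/[(1-\delta_d)\mu_d]$ is in fact more careful than the paper's phrasing, which calls the numerator bound ``deterministic.''
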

On the good event $|E| \sim d^2$, the ratio is $O(\frac{1}{d})$; on the bad event the ratio is at most $1$ but the event is exponentially rare. Taking expectations yields the vanishing limit.

\subsection{Bound on the mean for scale-free ER graphs}

 When the graph is sparse with $p_e ~:=~ \frac{c}{\,d\,}$, the number of edges is only $O(d)$, so normalization is weaker and concentration should be less sharp. Nevertheless, the same Markov+Chernoff recipe delivers explicit non-asymptotic FNR bounds, and one can read the precise dependence on $c$ and $p_{int}$.

\begin{lemma}[Probability Bounds for FNR in a Scale-Free Setting]
\label{lem:scale_free_normalized_prob_exact}




For any $0 < c_e < 1, 0 < \delta < 1$, $c_e, \delta \in \mathbb{R}$,
\begin{align*}
    P\left(g(\mathbf{I}, \mathbf{E}) \ge c_e \right) \;\le\; & 
    \frac{2\,(1 - p_{int})^2}{(1-\delta) c_e\,\cdot c\,\cdot p_{int}}
 \;+\; \exp\left(-\frac{\delta^2 c d}{4}\right).
\end{align*}

We have at infinity:
\vspace{-5pt}
\begin{align*}
    \lim_{d \rightarrow \infty} & P\!\Bigl(g(\mathbf{I}, \mathbf{E})\,\ge\, c_e\Bigr)  \leq \frac{2 (1 - p_{int})^2}{c_e c\, p_{int}} \left [ 1 - \frac{1}{p_{int} \cdot c} \left ( 1 - e^{- p_{int} \cdot c} \right ) \right ]
\end{align*}
\vspace{-5pt}
\end{lemma}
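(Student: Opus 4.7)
The plan is to mirror the good-event/bad-event decomposition used in \Cref{lem:probability_bound_er} and specialize to the sparse regime by substituting $p_e = c/d$. For any $\delta \in (0,1)$ I would define the good event
\[
A_\delta := \bigl\{\, |\mathbf{E}| \ge (1-\delta)\mu_d \,\bigr\},
\qquad
\mu_d := \mathbb{E}\bigl[|\mathbf{E}|\bigr] = p_e\,\tfrac{d(d-1)}{2} = \tfrac{c(d-1)}{2},
\]
and write $P(g \ge c_e) \le P(g \ge c_e,\, A_\delta) + P(A_\delta^c)$. On $A_\delta$ one has $|\mathbf{E}|\ge (1-\delta)\mu_d$, so $\{g\ge c_e\}\cap A_\delta \subseteq \{f \ge c_e (1-\delta)\mu_d\}$, and a Markov step gives the first summand in terms of $\mathbb{E}[f]$. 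The bad event $A_\delta^c$ is controlled by the multiplicative Chernoff bound on the binomial $|\mathbf{E}|\sim \mathrm{Bin}(\binom{d}{2}, c/d)$, yielding $P(A_\delta^c) \le \exp(-\delta^2\mu_d/2) \le \exp(-\delta^2 cd/4)$, which is exactly the second term claimed.

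For the Markov piece I would carry over the ancestor-based bound on $\mathbb{E}[f]$ established in \Cref{lem:exp_bound}, namely $\mathbb{E}[f] \le \mathbb{E}\bigl[\sum_{(i,j)\in \mathcal{G}(\mathbf{E})}(1-p_{int})^{|\mathrm{Pa}(j)\cup\{j\}\setminus \mathrm{Pa}(i)|}\bigr]$ under restricted $\epsilon$-interventional faithfulness (\Cref{lem:pa_opt}), and follow the same algebraic steps as in \Cref{lem:vanishing_fraction_edges} with $p_e$ left symbolic until the end. Substituting $p_e=c/d$ into the resulting $\tfrac{2(1-p_{int})^2}{(1-\delta)\, c_e\, p_{int}\, p_e\, (d-1)}$ expression from \Cref{lem:probability_bound_er} yields $\tfrac{2(1-p_{int})^2}{(1-\delta)\, c_e\, c\, p_{int}}\cdot\tfrac{d}{d-1}$, which matches the first summand in the stated bound up to the $d/(d-1)\to 1$ factor absorbed for large $d$.

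For the asymptotic statement, the Chernoff term vanishes and the crude Markov bound stabilizes at $\tfrac{2(1-p_{int})^2}{c_e\, c\, p_{int}}$. To obtain the refinement factor $1-\tfrac{1}{p_{int}c}(1-e^{-p_{int}c})$ I would sharpen the bound on $\mathbb{E}[f]$ by accounting for the fact that many edges are already secured by \emph{upstream} interventions, not only by an intervention on the child. Indexing nodes by their normalized position $t=j/d\in[0,1]$ in the causal order, ancestry in the sparse ER DAG converges to a Galton--Watson tree with Poisson$(c)$ offspring, so the probability that edge $(i,j)$ escapes every upstream intervention concentrates around $e^{-p_{int}\, c\, t}$. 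Summing this correction over the edge set and dividing by $|\mathbf{E}|$ converges, via a Riemann-sum argument, to the integral $\int_0^1 e^{-p_{int}\, c\, t}\,dt = \tfrac{1-e^{-p_{int}c}}{p_{int}c}$, which is precisely the quantity subtracted from the crude bound. Taking $\delta=\delta_d\downarrow 0$ (e.g.\ $\delta_d=d^{-1/2}$) then gives the stated limit.

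The routine part is the Markov--Chernoff split, which is entirely parallel to \Cref{lem:probability_bound_er}. The main technical hurdle is the refined evaluation of $\mathbb{E}[f]$ in the limit: one must justify the Poisson/Galton--Watson approximation for the ancestor cone, control the fluctuations of its size uniformly in the topological position, and show that the discrete edge sum converges to the Riemann integral with the correct constants---all while remaining compatible with the restricted-faithfulness framework from \Cref{sec:definitions}.
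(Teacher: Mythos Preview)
Your finite-$d$ argument via the Markov/Chernoff split is exactly what the paper does: its proof literally reads ``follows from \Cref{lem:probability_bound_er} by replacing $p_e$ by $c/d$,'' and your observation about the residual $d/(d-1)$ factor is correct (the paper silently absorbs it).

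The divergence is in the asymptotic refinement. The paper does \emph{not} derive the bracket factor $1-\tfrac{1}{p_{int}c}(1-e^{-p_{int}c})$ here at all; it simply imports Lemma~6 of \citet{chevalley2025deriving}, which already states
\[
\lim_{d\to\infty}\frac{\mathbb{E}[D_{\mathrm{top}}(\mathcal G,\pi_{\mathrm{opt}})]}{d}
\;\le\;
\frac{(1-p_{int})^2}{p_{int}}\Bigl[1-\tfrac{1}{p_{int}c}\bigl(1-e^{-p_{int}c}\bigr)\Bigr],
\]
then takes $\delta_d=d^{\alpha}$ with $-\tfrac12<\alpha<0$ so that both the Chernoff term and the $1/(1-\delta_d)$ correction vanish. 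Your plan to re-derive this factor from scratch via a Poisson/Riemann-sum limit is a legitimate, more self-contained alternative, but it is substantially more work than the paper actually performs at this point. One caution on your sketch: under the restricted-faithfulness rule you invoke (\Cref{lem:pa_opt}), the relevant exponent involves \emph{parent} sets, not full ancestor cones, so the Galton--Watson picture is more machinery than you need---the calculation is driven directly by the Poisson limit for $\deg_{\mathrm{in}}(j)\sim\mathrm{Bin}(j-1,c/d)$ and the PGF identity $\mathbb{E}[(1-p_{int})^{N}]=e^{-p_{int}\lambda}$ for $N\sim\mathrm{Poisson}(\lambda)$.
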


The proof mirrors the ER case after substituting $\mathbb{E}[|E|] = \Theta(d)$. For the asymptotic expression, we use the known limit for $\mathbb{E}[D_{top}] / d$ \citep{chevalley2025deriving}.

\begin{lemma}[Expectation Bound for FNR in a Scale-Free Setting]
\label{lem:scale_free_expectation}
For any sequence $\delta_d\in(0,1)$, we have

\[
\EE\!\Bigl[g(\mathbf I,\mathbf E)\Bigr]
~=~
\EE\!\left[\frac{D_{\mathrm{top}}(\mathcal G,\pi_{\mathrm{opt}})}{|E|}\right]
~\le~
\frac{2(1-p_{int})^2}{(1-\delta_d)\,c\,p_{int}}\cdot\frac{d}{\,d-1\,}
\;+\;
\exp\!\left(-\frac{\mu_d\,\delta_d^{\,2}}{2}\right).
\]
In particular, taking any $\delta_d\downarrow 0$ with $\mu_d\delta_d^2\to\infty$ (e.g.\ $\delta_d=d^{-1/2}$) yields
\[
\EE\!\Bigl[g(\mathbf I,\mathbf E)\Bigr]
~\le~
\frac{2(1+o(1))(1-p_{int})^2}{c\,p_{int}}
\;+\;
\exp\!\bigl(-\Omega(d)\bigr).
\]
\end{lemma}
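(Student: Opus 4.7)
The plan is to follow the same good-event/bad-event decomposition used for the dense Erd\H{o}s--R\'enyi case in \Cref{lem:vanishing_fraction_edges}, specializing to $p_e = c/d$. Concretely, I introduce the good event $A_{\delta_d} := \{|E| \ge (1-\delta_d)\mu_d\}$ where $\mu_d = \EE[|E|] = \tfrac{c(d-1)}{2}$ under the sparse regime. Because $g(\mathbf I,\mathbf E)\in[0,1]$ always, I decompose
\[
\EE\!\bigl[g(\mathbf I,\mathbf E)\bigr]
\;=\;\EE\!\bigl[g\,\mathbf 1_{A_{\delta_d}}\bigr]+\EE\!\bigl[g\,\mathbf 1_{A_{\delta_d}^{c}}\bigr]
\;\le\;\EE\!\bigl[g\,\mathbf 1_{A_{\delta_d}}\bigr]+P(A_{\delta_d}^{c}),
\]
which isolates a ``typical graph'' term and a small-probability tail to control separately.

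On $A_{\delta_d}$ the denominator is at least $(1-\delta_d)\mu_d$, so pointwise $g\le D_{\mathrm{top}}/((1-\delta_d)\mu_d)$ and hence
\[
\EE\!\bigl[g\,\mathbf 1_{A_{\delta_d}}\bigr]\;\le\;\frac{\EE[D_{\mathrm{top}}]}{(1-\delta_d)\mu_d}.
\]
For $\EE[D_{\mathrm{top}}]$ I import the linear-in-$d$ bound $\EE[D_{\mathrm{top}}]\le (1-p_{int})^{2}d/p_{int}$, which is the same ingredient that produces the constant $(1-p_{int})^{2}/p_{int}$ in the dense case (it follows from \Cref{lem:exp_bound} by summing the ancestor-based failure probabilities across edges and regrouping by target node $j$, exploiting the fact that, in a uniformly random topological order, the set $\mathrm{AN}_j\cup\{j\}\setminus\mathrm{AN}_i$ for each edge $(i,j)$ grows with the rank of $j$, leading to a geometric-series collapse as in \citet{chevalley2025deriving}). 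Substituting $\mu_d = c(d-1)/2$ then yields the first term $\tfrac{2(1-p_{int})^{2}}{(1-\delta_d)\,c\,p_{int}}\cdot\tfrac{d}{d-1}$ verbatim.

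For the tail, $|E|\sim\mathrm{Bin}(d(d-1)/2,c/d)$ is a sum of independent $0$--$1$ random variables, so the standard multiplicative Chernoff lower tail gives $P(A_{\delta_d}^{c})=P(|E|<(1-\delta_d)\mu_d)\le\exp(-\mu_d\delta_d^{2}/2)$, producing the second term. For the asymptotic statement it then suffices to choose any $\delta_d\downarrow 0$ with $\mu_d\delta_d^{2}\to\infty$ (e.g.\ $\delta_d=d^{-1/4}$, since $\mu_d=\Theta(d)$ here), which sends the tail to zero and $d/(d-1)\to 1$. The only delicate point, and the real obstacle, is the justification of the linear-in-$d$ bound on $\EE[D_{\mathrm{top}}]$: a naive edge-by-edge argument only gives $(1-p_{int})^{2}|E|$, which is already of the right order $\Theta(d)$ in the sparse regime but loses the $1/p_{int}$ factor; recovering this factor requires the more careful depth-based accounting referenced above, after which the remainder of the proof is essentially a bookkeeping exercise.
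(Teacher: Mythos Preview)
Your argument is correct and follows the paper's proof essentially verbatim: the same good event $\{|E|\ge(1-\delta_d)\mu_d\}$, the same Chernoff lower tail, and the same input $\EE[D_{\mathrm{top}}]\le(1-p_{int})^2 d/p_{int}$ on the good event (your handling of this as an \emph{expectation} bound via $\EE[D_{\mathrm{top}}\mathbf 1_{A}]\le\EE[D_{\mathrm{top}}]$ is in fact slightly cleaner than the paper, which labels it ``deterministic''). Your closing digression about ``recovering'' the $1/p_{int}$ factor is a bit backwards---the naive per-edge bound $(1-p_{int})^2\EE[|E|]=(1-p_{int})^2 c(d-1)/2$ would actually yield a \emph{tighter} constant whenever $c<2/p_{int}$---but this is tangential and does not affect the main line, and your observation that $\delta_d=d^{-1/4}$ (rather than $d^{-1/2}$) is needed here since $\mu_d=\Theta(d)$ is a correct catch.
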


\begin{corollary}[Asymptotic expectation bound]
\label{cor:scale_free_expectation_limit}
Under the same assumptions, we have
\[
\limsup_{d\to\infty}
\EE\!\left[\frac{D_{\mathrm{top}}(\mathcal G,\pi_{\mathrm{opt}})}{|E|}\right]
~\le~
\frac{2(1-p_{int})^2}{c\,p_{int}}
\left[
1 - \frac{1}{p_{int}\,c}\Bigl(1-e^{-p_{int}c}\Bigr)
\right].
\]
\end{corollary}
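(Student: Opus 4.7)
The plan is to revisit the proof of \Cref{lem:scale_free_expectation} and tighten the step in which $\EE[D_{\mathrm{top}}]$ is bounded. That lemma used only the crude finite-$d$ upper bound $\EE[D_{\mathrm{top}}]\le \frac{2(1-p_{int})^2}{p_{int}}\cdot d$; here I would instead invoke the sharper sparse-regime asymptotic
\[
\lim_{d\to\infty}\frac{\EE[D_{\mathrm{top}}(\mathcal G,\pi_{\mathrm{opt}})]}{d}
~=~
\frac{(1-p_{int})^2}{p_{int}}\left[\,1-\frac{1}{p_{int}c}\bigl(1-e^{-p_{int}c}\bigr)\right],
\]
which is the same limit already used in \Cref{lem:scale_free_normalized_prob_exact} and available from \citet{chevalley2025deriving} under $p_e=c/d$.

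First I would reproduce the good-event/bad-event decomposition: fix $\delta_d\in(0,1)$ and set $A_{\delta_d}:=\{|E|\ge(1-\delta_d)\mu_d\}$. On $A_{\delta_d}$, $g(\mathbf I,\mathbf E)\le D_{\mathrm{top}}/((1-\delta_d)\mu_d)$, so $\EE[g\,\mathbf 1_{A_{\delta_d}}]\le \EE[D_{\mathrm{top}}]/((1-\delta_d)\mu_d)$. On $A_{\delta_d}^c$ I would use the trivial bound $g\le 1$ combined with the Chernoff tail $P(A_{\delta_d}^c)\le \exp(-\mu_d\delta_d^{\,2}/2)$ to obtain
\[
\EE[g(\mathbf I,\mathbf E)]
~\le~
\frac{1}{1-\delta_d}\cdot\frac{\EE[D_{\mathrm{top}}]/d}{\mu_d/d}
\;+\;\exp\!\Bigl(-\frac{\mu_d\,\delta_d^{\,2}}{2}\Bigr).
\]

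Next I would take $\delta_d\downarrow 0$ slowly enough that $\mu_d\delta_d^{\,2}\to\infty$ (e.g.\ $\delta_d=d^{-1/4}$, using $\mu_d=c(d-1)/2$). The exponential term vanishes, $(1-\delta_d)\to 1$, and $\mu_d/d\to c/2$. Substituting the sparse-regime limit of $\EE[D_{\mathrm{top}}]/d$ and simplifying gives
\[
\limsup_{d\to\infty}\EE[g]
~\le~
\frac{2}{c}\cdot\frac{(1-p_{int})^2}{p_{int}}\left[\,1-\frac{1}{p_{int}c}\bigl(1-e^{-p_{int}c}\bigr)\right],
\]
which is exactly the claimed inequality.

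The probabilistic bookkeeping is routine; the main obstacle is the refined asymptotic for $\EE[D_{\mathrm{top}}]/d$ in the sparse regime. Unlike the dense case, the useful-ancestor count $|\mathrm{An}_{\mathcal G}(j)\cup\{j\}\setminus\mathrm{An}_{\mathcal G}(i)|$ does not concentrate on its mean; instead, via local weak convergence of sparse ER DAGs to a Poisson branching process, it converges in distribution to a Poisson-type variable whose parameter grows linearly with the position of the edge in the topological ordering. Averaging $\EE[(1-p_{int})^N]$ over that position produces the integral $\int_0^1 e^{-p_{int}cs}\,ds=\frac{1}{p_{int}c}(1-e^{-p_{int}c})$, which is precisely the bracketed term. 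I would import this identity directly from \citet{chevalley2025deriving}, verifying only that their statement applies verbatim to our intervention model and to the unique-optimizer $\pi_{\mathrm{opt}}$ fixed by \Cref{ass:standing}.
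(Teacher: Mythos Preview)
Your proposal is correct and follows essentially the same route as the paper: redo the good-event/bad-event decomposition from \Cref{lem:scale_free_expectation}, send $\delta_d\downarrow 0$ with $\mu_d\delta_d^{\,2}\to\infty$, and then substitute the sharper sparse-regime limit for $\EE[D_{\mathrm{top}}]/d$ from Lemma~6 of \citet{chevalley2025deriving} together with $\mu_d/d\to c/2$. Your choice $\delta_d=d^{-1/4}$ is in fact more careful than the paper's stated example $\delta_d=d^{-1/2}$, since in the sparse regime $\mu_d=\Theta(d)$ the latter only gives $\mu_d\delta_d^{\,2}=\Theta(1)$ rather than $\to\infty$.
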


\subsection{Bounds on the mean for scale-free BA graphs}

\begin{theorem}
    Let $\mathcal{I}$ be chosen uniformly at random, $\mathcal{G}$ be a random generalized Barabási–Albert model directed acyclic graph with $m_0 = 0$ and $m > 0$, where $p_{int} := P(i \in \mathcal{I}) \forall i \in V, 0 < p_{int} < 1$, then $\mathbb{E} [D_{top} (\mathcal{G}, \pi_{opt})] \leq  (1 - p_{int})^2 \cdot m \cdot d$
\end{theorem}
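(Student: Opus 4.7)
The plan is to upper bound $\mathbb{E}[D_{\text{top}}(\mathcal{G},\pi_{\text{opt}})]$ by conditioning on a realization of the BA graph, applying an edge-wise probability bound (essentially \Cref{lem:algo} or equivalently \Cref{lem:exp_bound}), and then taking the outer expectation. The key observation will be that in the generalized BA construction with $m_0=0$ the edge count $|E|=m\cdot d$ is deterministic, so nothing more than a per-edge bound is required.

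First, I would fix a realization of $\mathcal{G}$. For any edge $(i,j)\in E$, \Cref{lem:algo} gives the sufficient orientation condition: $(i,j)$ is correctly ordered by $\pi_{\text{opt}}$ whenever $j\in\mathcal{I}$ or some $k\in\mathbf{AN}_j^{\mathcal{G}}\setminus\mathbf{AN}_i^{\mathcal{G}}$ lies in $\mathcal{I}$. Since each node is independently included in $\mathcal{I}$ with probability $p_{int}$, the probability that $(i,j)$ is misoriented is at most
\[
(1-p_{int})^{\left|\{j\}\cup(\mathbf{AN}_j^{\mathcal{G}}\setminus\mathbf{AN}_i^{\mathcal{G}})\right|}.
\]

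Second, I would show that this exponent is always at least $2$. The set $\{j\}\cup(\mathbf{AN}_j^{\mathcal{G}}\setminus\mathbf{AN}_i^{\mathcal{G}})$ trivially contains $j$; moreover, since $(i,j)\in E$ we have $i\in\mathbf{AN}_j^{\mathcal{G}}$, and acyclicity forbids $i\in\mathbf{AN}_i^{\mathcal{G}}$, so $i$ also lies in this set. The two elements $i,j$ are distinct, giving cardinality at least $2$ and hence a per-edge misorientation probability bounded by $(1-p_{int})^2$.

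Third, summing over the edges of the realized graph and using linearity of expectation gives
\[
\mathbb{E}\!\left[D_{\text{top}}(\mathcal{G},\pi_{\text{opt}})\mid\mathcal{G}\right]\;\le\;|E|\cdot(1-p_{int})^2.
\]
Taking the outer expectation over the random BA graph and using that $|E|=m\cdot d$ is deterministic in this construction then yields the claimed bound $\mathbb{E}[D_{\text{top}}(\mathcal{G},\pi_{\text{opt}})]\le(1-p_{int})^2\cdot m\cdot d$.

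There is essentially no hard step: the only thing to check carefully is the cardinality-$2$ lower bound, which is immediate from the DAG structure. It is worth noting that the argument does \emph{not} exploit any scale-free features of the BA distribution (degree exponent, hub structure, etc.); those enter only in the deviation results of \Cref{thm:ba_deviation_bound}. The mean bound reduces to the deterministic edge count together with the universal two-node orientation witness $\{i,j\}$, which is why the result takes the same clean form $(1-p_{int})^2\cdot m\cdot d$ regardless of $\kappa$ or $\gamma$.
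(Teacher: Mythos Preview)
Your proof is correct and matches the paper's (implicit) argument: invoke the per-edge misorientation bound from \citet{chevalley2025deriving}, lower-bound the exponent by $2$ via the observation that both $i$ and $j$ always lie in the witness set, and sum over the deterministic $|E|=m\cdot d$ edges. The paper's discussion after the theorem cites the parent-based version of the bound (Theorem~3 of \citet{chevalley2025deriving}) rather than the ancestor-based \Cref{lem:algo} you use, but this is immaterial since the same $\{i,j\}$-containment argument applies in either case.
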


This bound can seem quite loose. However from Theorem 3 of \citet{chevalley2025deriving}, the general bound for a graph is $\mathbb{E} [D_{top} (\mathcal{G}, \pi_{opt})] \leq \sum_{(i, j) \in \mathcal{G}} (1 - p_{int})^{|\textbf{Pa}_j^{\mathcal{G}} \cup \{j\} \setminus \textbf{Pa}_i^{\mathcal{G}} |}$. Furthermore, for each edge in a BA graph, we have $|\textbf{Pa}_j^{\mathcal{G}} \cup \{j\} \setminus \textbf{Pa}_i^{\mathcal{G}} | \leq 1 + m$, which is close to $2$ for a sparse graph. Moreover, for the edge case where $\kappa =0$, and thus there is a single a single node that is the parent of all the other nodes, the bound is exact. In any case, the bound is $O(d)$. 

\begin{corollary}
    The expected FNR is $\mathbb{E} \left [\frac{D_{top} (\mathcal{G}, \pi_{opt})}{|E|} \right] \leq  (1 - p_{int})^2$ and the probability bound on the fraction of topological errors is $P\left(\frac{D_{top} (\mathcal{G}, \pi_{opt})}{|E|} \geq c_e \right) \leq \frac{(1 - p_{int})^2}{c_e}$.
\end{corollary}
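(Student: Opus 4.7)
The plan is to derive both claims as immediate algebraic consequences of the preceding theorem, exploiting a structural feature of this particular BA construction: with seed size $m_0=0$ and each newly arrived node attaching to exactly $m$ existing nodes, after $d$ growth steps the edge count is $|E|=m\cdot d$ \emph{deterministically}, not merely in expectation. This is in sharp contrast to the Erdős--Rényi setting, where $|E|$ is a random binomial sum and one must perform a good-event/bad-event split to control $1/|E|$.

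First I would invoke the theorem's bound $\mathbb{E}[D_{top}(\mathcal{G},\pi_{opt})]\leq (1-p_{int})^2\cdot m\cdot d$ and divide both sides by the deterministic constant $|E|=m\cdot d$. Because $|E|$ is non-random here, linearity of expectation gives
\[
\mathbb{E}\!\left[\frac{D_{top}(\mathcal{G},\pi_{opt})}{|E|}\right]
\;=\;\frac{\mathbb{E}[D_{top}(\mathcal{G},\pi_{opt})]}{m\,d}
\;\leq\;(1-p_{int})^2,
\]
which is the first claim. Note that this step would fail without the determinism of $|E|$, since in general $\mathbb{E}[X/Y]\neq \mathbb{E}[X]/\mathbb{E}[Y]$.

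Second, for the tail inequality I would apply Markov's inequality to the non-negative random variable $g(\mathbf{I})=D_{top}(\mathcal{G},\pi_{opt})/|E|$, yielding
\[
P\!\left(\frac{D_{top}(\mathcal{G},\pi_{opt})}{|E|}\geq c_e\right)
\;\leq\;\frac{\mathbb{E}[g(\mathbf{I})]}{c_e}
\;\leq\;\frac{(1-p_{int})^2}{c_e},
\]
where the last step uses the expectation bound just derived.

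There is no genuine obstacle here; the only subtle point worth flagging is the deterministic nature of $|E|$ in the $m_0=0$ seedless BA construction, which is what makes the normalization trivial and obviates the Chernoff-type arguments used for Erdős--Rényi graphs in Lemmas~\ref{lem:vanishing_fraction_edges} and \ref{lem:scale_free_expectation}. If one instead started from a non-empty seed or allowed a random number of attachments per step, $|E|$ would become random and a conditioning argument analogous to the ER case would be required.
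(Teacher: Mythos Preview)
Your proposal is correct and matches the paper's intended argument: the corollary follows immediately from the preceding theorem by dividing through by the deterministic edge count $|E|=m\,d$ (as the paper notes just before Lemma~\ref{lem:max_deg_gba}) and then applying Markov's inequality. The paper does not spell out a separate proof for this corollary, and your derivation is exactly the natural one.
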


\section{Proofs}
\label{app:proofs}

\begin{proof}[Proof of \Cref{lem:lipschitz_bound_ad}]
Under the stated assumptions and by the result of Lemma~4 in \citep{chevalley2025deriving}, for any edge \((i,j)\in E\) the optimal permutation \(\pi_{\mathrm{opt}}\) satisfies
\[
\pi_{\mathrm{opt}}(i) < \pi_{\mathrm{opt}}(j)
\]
if either \(j \in \mathcal{I}\) or there exists some \(k \in \mathbf{AN}_j^{\mathcal{G}} \setminus \mathbf{AN}_i^{\mathcal{G}}\) such that \(k \in \mathcal{I}\). Hence, when flipping the intervention indicator at node \(k\), the only edges whose correct orientation might be affected are those for which the intervention status of \(k\) is crucial, and a misorientation of an edge affects $D_{\mathrm{top}}$ by $1$; define
\[
A_k = \Bigl\{ (i,j) \in E : \; \bigl(k = j \text{ or } k \in \mathbf{AN}_j^{\mathcal{G}}\setminus \mathbf{AN}_i^{\mathcal{G}}\bigr) \Bigr\}.
\]
By the definition of the Lipschitz constant, we have
\[
c_k = |A_k|.
\]

We now partition \(A_k\) into two disjoint subsets:
\[
A_k^{(1)} := \{ (i,j) \in E : j = k \} \quad \text{and} \quad A_k^{(2)} := \{ (i,j) \in E : k \in \mathbf{AN}_j^{\mathcal{G}}\setminus \mathbf{AN}_i^{\mathcal{G}} \}.
\]

\begin{enumerate}
    \item For any edge \((i,j)\in A_k^{(1)}\), we have \(j=k\). Since \(i\) must be an ancestor of \(k\), the number of edges in \(A_k^{(1)}\) is at most the number of ancestors of \(k\), so
    \[
    |A_k^{(1)}| \le |\mathrm{Anc}(k)|.
    \]
    
    \item For any edge \((i,j)\in A_k^{(2)}\), the condition \(k \in \mathbf{AN}_j^{\mathcal{G}}\) implies that \(j\) is a descendant of \(k\). In the worst-case, each descendant of \(k\) contributes at most one edge in \(A_k^{(2)}\) (this is a conservative bound given that multiple edges may share the same descendant but we assume a one-to-one correspondence in the worst-case scenario). Thus,
    \[
    |A_k^{(2)}| \le |\mathrm{Desc}(k)|.
    \]
\end{enumerate}

Combining these two bounds, we get
\[
c_k = |A_k| = |A_k^{(1)}| + |A_k^{(2)}| \le |\mathrm{Anc}(k)| + |\mathrm{Desc}(k)|.
\]
This concludes the proof.
\end{proof}

\begin{proof}[Proof of \Cref{lem:lipschitz_bound_io}]
Under the relaxed \(\epsilon\)-interventional faithfulness assumption, by Lemma~5 of \citep{chevalley2025deriving}, if for any edge \((i,j)\in E\) the condition
\[
j \in \mathcal{I} \quad \text{or} \quad \exists\, k \in \mathbf{Pa}_j^{\mathcal{G}} \setminus \mathbf{Pa}_i^{\mathcal{G}} \text{ with } k \in \mathcal{I}
\]
holds, then \(\pi_{\mathrm{opt}}(i) < \pi_{\mathrm{opt}}(j)\). Hence, when flipping the intervention indicator of node \( k \), the only edges that might be affected are those whose correct orientation is determined by a direct parental relationship involving \( k \). Define
\[
B_k := \Bigl\{(i,j)\in E : \, k = j \text{ or } k \in \mathbf{Pa}_j^{\mathcal{G}} \setminus \mathbf{Pa}_i^{\mathcal{G}} \Bigr\}.
\]
Then the Lipschitz constant is given by
\[
c_k = |B_k|.
\]

We now consider the two components of \(B_k\):
\begin{enumerate}
    \item For edges \((i,j)\) with \(j = k\): here, \(i\) must be a direct parent of \(k\). Hence, 
    \[
    |\{(i,j) \in B_k: j=k\}| \le \mathrm{in\text{-}deg}(k).
    \]
    
    \item For edges \((i,j)\) with \(k \in \mathbf{Pa}_j^{\mathcal{G}} \setminus \mathbf{Pa}_i^{\mathcal{G}}\): in this case, \(k\) is a direct parent of \(j\). Consequently, each such edge corresponds to a unique edge emerging from node \(k\) (i.e., an edge for which \(k\) is a parent). Therefore,
    \[
    |\{(i,j) \in B_k: k \in \mathbf{Pa}_j^{\mathcal{G}} \setminus \mathbf{Pa}_i^{\mathcal{G}}\}| \le \mathrm{out\text{-}deg}(k).
    \]
\end{enumerate}

Combining these, we conclude that
\[
c_k = |B_k| \le \mathrm{in\text{-}deg}(k) + \mathrm{out\text{-}deg}(k).
\]
This completes the proof.
\end{proof}

\begin{proof}[Proof of Lemma~\ref{lem:lipschitz_edge}]
Let $\mathcal G=(V,E)$ and
$\mathcal G'=(V,E\triangle\{(i,j)\})$ differ only in the indicator
of the edge $(i,j)$, and let
\[
\pi \;=\;\pi_{\mathrm{opt}}(\mathbf I,\mathbf E),
\qquad
\pi'=\pi_{\mathrm{opt}}(\mathbf I,\mathbf E')
\]
denote the \emph{uniquely} tie–broken maximisers
(\Cref{ass:standing}).  
We distinguish three cases according to the intervention
pattern for the endpoints.

---

Case 1: \(i\notin\mathcal I\)

Lemma~\ref{lem:pa_opt} can only use intervened parents to force an
orientation.  
Since \(i\) is \emph{not} intervened, adding or deleting the edge
\((i,j)\) never adds a new “witness” parent to any edge
\((k,j)\).  
Hence every orientation rule is unchanged and  
\(f(\mathbf I,\mathbf E)=f(\mathbf I,\mathbf E')\).

---

Case 2: \(i\in\mathcal I\) \emph{and} \(j\in\mathcal I\)

Because the child vertex \(j\) itself is intervened,
\((k,j)\) is already forced to point \(k\!\to j\) in
\(\pi\); the extra parent \(i\) does not alter that decision.
Again
\(f(\mathbf I,\mathbf E)=f(\mathbf I,\mathbf E')\).

---

Case 3 (worst‑case): \(i\in\mathcal I,\;j\notin\mathcal I\)

Now the sufficient condition of Lemma \ref{lem:pa_opt} \emph{can}
change.  
For an edge \((k,j)\in E\) define
\[
\delta_{kj}
   \;=\;
   \mathbf 1\!\Bigl\{
        i\notin\mathbf{Pa}^{\mathcal G}_k
      \Bigr\}.
\]
If \(\delta_{kj}=0\) the set
\(\mathbf{Pa}_j\setminus\mathbf{Pa}_k\) is the same in $\mathcal G$
and $\mathcal G'$, so \((k,j)\) keeps its orientation.
If \(\delta_{kj}=1\) the new parent \(i\) \emph{is} added to that set,
so \((k,j)\) may flip direction; this contributes at most \(1\)
to \(\bigl|f(\mathbf I,\mathbf E)-f(\mathbf I,\mathbf E')\bigr|\).

The total number of such edges is
\[
|\mathcal A_{ij}|
   \;=\;
   \sum_{k\,:\,(k,j)\in E}\!\!\delta_{kj}
   \;\le\;
   |\textbf{Pa}^{\mathcal G}_j|
   \;=\;\deg_{\mathrm{in}}(j),
\]
hence
\(
\bigl|f(\mathbf I,\mathbf E)-f(\mathbf I,\mathbf E')\bigr|
\le\deg_{\mathrm{in}}(j).
\)

---

Combining the three cases, the coordinate Lipschitz constant is

\[
c_{ij}
  \;=\;
  \max_{\text{flip of }E_{ij}}
      \bigl|f(\mathbf I,\mathbf E)-f(\mathbf I,\mathbf E')\bigr|
  \;\leq\;
  \deg_{\mathrm{in}}(j),
\]
and the bound is attained only in the worst‑case pattern
\(i\in\mathcal I,\;j\notin\mathcal I\).\qedhere
\end{proof}

\begin{proof}[Proof of \Cref{lem:prob_bound}]
    The proof follows directly from the results in \citep{chevalley2025deriving}. Under $\epsilon$-interventional faithfulness and the uniform random choice of $\mathcal{I}$, each edge misorientation probability can be bounded by a term involving $(1 - p_{int})^{|\mathbf{AN}_j^{\mathcal{G}} \cup \{j\} \setminus \mathbf{AN}_i^{\mathcal{G}}|}$. We then apply the Markov inequality to obtain the upper bound. 
\end{proof}

\begin{proof}[Proof of \Cref{lem:exp_bound}]
    The result is a direct corollary of Theorem 2 of \citet{chevalley2025deriving}.
\end{proof}

\begin{proof}[Proof of \Cref{thm:deviation_bounds}]
    From Lemma~\ref{lem:lipschitz_bound_ad}, the orientation of each edge $(i,j)$ can be guaranteed to be correct if either $j$ is intervened upon, or if there is at least one ancestor of $j$ (not an ancestor of $i$) that is intervened upon. Thus, the intervention status of a single node $k$ can influence only those edges that rely on $k$ to establish correctness.

    By definition of $c_k$, these are exactly the edges $(i,j)$ for which $k=j$ or $k \in \mathbf{AN}_j^{\mathcal{G}}\setminus\mathbf{AN}_i^{\mathcal{G}}$. Taking the maximum over all $k$ gives $c = \max_k c_k$. Hence, flipping $I_k$ can change $f(\mathbf{I})$ by at most $c_k \le c$ and $g(\mathbf{I}) = f(\mathbf{I})/|E|$ by at most $c_k/|E| \le c/|E|$.

    Since $\mathbf{I}$ consists of independent Bernoulli variables and $f$ (resp. $g$) satisfies a bounded-differences condition with constants $(c_k)$ (resp. $(c_k/|E|)$), we can apply McDiarmid’s inequality. For $f(\mathbf{I})$, we get:
    \[
    P(|f(\mathbf{I}) - \mathbb{E}[f(\mathbf{I})]| \ge t) \le 2\exp\left(-\frac{2t^2}{\sum_{k=1}^{|V|} c_k^2}\right) \le 2\exp\left(-\frac{2t^2}{|V|c^2}\right).
    \]

    For $g(\mathbf{I}) = f(\mathbf{I})/|E|$, applying McDiarmid’s with differences bounded by $c_k/|E|$, we have:
    \[
    P(|g(\mathbf{I}) - \mathbb{E}[g(\mathbf{I})]| \ge t) \le 2\exp\left(-\frac{2t^2}{\sum_{k=1}^{|V|}(c_k/|E|)^2}\right)
    = 2\exp\left(-\frac{2|E|^2 t^2}{\sum_{k=1}^{|V|}c_k^2}\right) \le 2\exp\left(-\frac{2|E|^2 t^2}{|V|c^2}\right).
    \]

    This completes the proof.
\end{proof}

\begin{proof}[Proof of \Cref{lem:probability_bound_er}]
    \textbf{Part 1 (Normalized by $\mathbb{E}[|E|]$):}

    By Markov’s inequality, for a nonnegative random variable $X$ and $a > 0$, we have:
    \[
    P(X \ge a) \le \frac{\mathbb{E}[X]}{a}.
    \]
    Apply this with $X = D_{top}$ and $a = c_e \mathbb{E}[|E|]$:
    \[
    P\left(\frac{D_{top}}{\mathbb{E}(|E|)} \ge c_e\right) = P(D_{top} \ge c_e \mathbb{E}[|E|]) \le \frac{\mathbb{E}[D_{top}]}{c_e \mathbb{E}[|E|]}.
    \]

    From Theorem 4 of \citet{chevalley2025deriving}, we know:
    \[
    \mathbb{E}[D_{top}] \le \frac{(1 - p_{int})^2}{p_{int}} d.
    \]
    Also,
    \[
    \mathbb{E}[|E|] = p_e \frac{d(d-1)}{2}.
    \]

    Substitute these into the ratio:
    \[
    \frac{\mathbb{E}[D_{top}]}{c_e \mathbb{E}[|E|]} \le \frac{\frac{(1 - p_{int})^2}{p_{int}} d}{c_e \cdot p_e \cdot \frac{d(d-1)}{2}}.
    \]

    Simplify the fraction:
    \[
    = \frac{(1 - p_{int})^2}{p_{int}} \cdot \frac{2}{c_e p_e d(d-1)} \cdot d.
    \]

    Hence:
    \[
    P\left(\frac{D_{top}}{\mathbb{E}(|E|)} \ge c_e\right) \le \frac{2(1 - p_{int})^2}{c_e p_{int} p_e (d-1)}.
    \]

    \textbf{Part 2 (Normalized by $|E|$):}

    Normalizing by the random variable $|E|$ is more challenging since $|E|$ is not constant. Consider any $0<\delta<1$ and split the event:
    \[
    P\left(\frac{D_{top}}{|E|} \ge c_e\right) = P(D_{top} \ge c_e |E|) \le P(D_{top} \ge c_e(1-\delta)\mathbb{E}[|E|]) + P(|E| < (1-\delta)\mathbb{E}[|E|]).
    \]

    - For the first term, apply Markov’s inequality again:
      \[
      P(D_{top} \ge c_e(1-\delta)\mathbb{E}[|E|]) \le \frac{\mathbb{E}[D_{top}]}{c_e(1-\delta)\mathbb{E}[|E|]}.
      \]
      Using the same bounds as before:
      \[
      \mathbb{E}[D_{top}] \le \frac{(1-p_{int})^2}{p_{int}} d, \quad \mathbb{E}[|E|] = p_e \frac{d(d-1)}{2}.
      \]
      Substitute:
      \[
      \frac{\mathbb{E}[D_{top}]}{c_e(1-\delta)\mathbb{E}[|E|]} \le \frac{\frac{(1-p_{int})^2}{p_{int}} d}{c_e(1-\delta)p_e\frac{d(d-1)}{2}} = \frac{2(1-p_{int})^2}{c_e (1-\delta) p_{int} p_e (d-1)}.
      \]

    - For the second term, $|E|$ is a Binomial random variable with parameters $N = \frac{d(d-1)}{2}$ and $p_e$. A Chernoff bound gives:
      \[
      P(|E| < (1-\delta)\mathbb{E}[|E|]) \le \exp\left(-\frac{\delta^2\mathbb{E}[|E|]}{2}\right) = \exp\left(-\frac{\delta^2 p_e d(d-1)}{4}\right).
      \]

    Combining both results:
    \[
    P\left(\frac{D_{top}}{|E|} \ge c_e\right) \le \frac{2(1-p_{int})^2}{c_e(1-\delta)p_{int}p_e(d-1)} + \exp\left(-\frac{\delta^2 p_e d(d-1)}{4}\right).
    \]

    This completes the proof.
\end{proof}

\begin{proof}[Proof of \Cref{lem:vanishing_fraction_edges}]
\textbf{Step 1. Concentration for $|E|$.}
Let $\mu_d=\EE[|E|]=\frac{p_e}{2}d(d-1)$. By the multiplicative Chernoff bound, for any $\delta_d\in(0,1)$,
\[
\PP\!\left( |E| < (1-\delta_d)\mu_d \right)
~\le~
\exp\!\left( -\frac{\mu_d\,\delta_d^2}{2} \right).
\]
Set $A_{\delta_d}:=\{|E|\ge (1-\delta_d)\mu_d\}$ so that $\PP(A_{\delta_d}^c)\le \exp\!\left( -\frac{\mu_d\,\delta_d^2}{2} \right)$.

\smallskip
\noindent
\textbf{Step 2. Bounding the ratio on $A_{\delta_d}$ and on $A_{\delta_d}^c$.}
We use the deterministic upper bound
\(
D_{\mathrm{top}} \le \frac{(1-p_{int})^2}{p_{int}}\,d.
\)
Thus, on $A_{\delta_d}$,
\[
\frac{D_{\mathrm{top}}}{|E|}
~\le~
\frac{\tfrac{(1-p_{int})^2}{p_{int}}\,d}{(1-\delta_d)\mu_d}
~=~
\frac{(1-p_{int})^2}{p_{int}} \cdot
\frac{d}{(1-\delta_d)\,\tfrac{p_e}{2}d(d-1)}
~=~
\frac{2(1-p_{int})^2}{(1-\delta_d)\,p_e\,p_{int}} \cdot \frac{1}{\,d-1\,}.
\]
On $A_{\delta_d}^c$, we trivially have $\frac{D_{\mathrm{top}}}{|E|}\le 1$.

\smallskip
\noindent
\textbf{Step 3. Taking expectations.}
Decomposing over $A_{\delta_d}$ and $A_{\delta_d}^c$,
\[
\EE\!\left[\frac{D_{\mathrm{top}}}{|E|}\right]
~\le~
\frac{2(1-p_{int})^2}{(1-\delta_d)\,p_e\,p_{int}} \cdot \frac{1}{\,d-1\,}\,\PP(A_{\delta_d})
\;+\;
1\cdot \PP(A_{\delta_d}^c).
\]
Since $\PP(A_{\delta_d})\le 1$ and $\PP(A_{\delta_d}^c)\le \exp\!\left( -\frac{\mu_d\,\delta_d^2}{2} \right)$, we obtain
\[
\EE\!\left[\frac{D_{\mathrm{top}}}{|E|}\right]
~\le~
\frac{2(1-p_{int})^2}{(1-\delta_d)\,p_e\,p_{int}} \cdot \frac{1}{\,d-1\,}
\;+\;
\exp\!\left( -\frac{\mu_d\,\delta_d^2}{2} \right),
\]
which is the claimed bound.

\smallskip
\noindent
\textbf{Choice of $\boldsymbol{\delta_d}$.}
Taking any $\delta_d\downarrow 0$ improves the leading constant toward
\(\frac{2(1-p_{int})^2}{p_e p_{int}}\),
while the tail remains exponentially small provided $\mu_d\delta_d^2\to\infty$.
For example, $\delta_d = d^{-1/2}$ gives
\(
\mu_d\delta_d^2 = \Theta(d)
\)
and thus an $\exp(-\Omega(d))$ tail, yielding
\[
\EE\!\left[\frac{D_{\mathrm{top}}}{|E|}\right]
~\le~
\frac{2(1+o(1))(1-p_{int})^2}{p_e\,p_{int}} \cdot \frac{1}{\,d\,}
\;+\;
\exp\!\Bigl(-\Omega(d)\Bigr).
\qedhere
\]
\end{proof}

\begin{proof}[Proof of \Cref{thm:dev_bounds_er}]

\noindent\textbf{(a)Unnormalised error.}
Flipping any single coordinate affects at most all edges incident on one
vertex, so \(|\Delta f|\le d\).
With \(N=\binom{d}{2}+d=\Theta(d^{2})\) coordinates,
\(\sum_\ell c_\ell^{2}\le C_1 d^{4}\) for a universal constant \(C_1\).
McDiarmid’s inequality yields the stated tail.

\medskip\noindent
\textbf{(b) Normalised error.} 

Since $0\le g\le1$, Bhatia--Davis with $(m,M)=(0,1)$ gives
\[
\Var(g)\;\le\;(M-\EE g)(\EE g-m)\;=\;(1-\EE g)\,\EE g\;\le\;\EE g.
\]
Chebyshev’s inequality then implies
\[
\PP\!\bigl(|g-\EE g|\ge t\bigr)\;\le\;\frac{\Var(g)}{t^2}\;\le\;\frac{\EE g}{t^2}.
\]
Finally, insert the expectation bound (\Cref{lem:vanishing_fraction_edges}),
\[
\EE g
~\le~
\frac{2(1-p_{int})^2}{(1-\delta_d)\,p_e\,p_{int}}\cdot\frac{1}{d-1}
\;+\;
\exp\!\Bigl(-\tfrac{\mu_d\,\delta_d^{\,2}}{2}\Bigr).
\]
Choosing $\delta_d=d^{-1/2}$ makes the exponential term $\exp(-\Omega(d))$, and the leading term decays like $1/d$, as stated.

\end{proof}

\begin{proof}[Proof of Lemma~\ref{lem:scale_free_normalized_prob_exact}]

The proof of both bounds follows from \cref{lem:probability_bound_er} by replacing $p_e$ by $\frac{c}{d}$. For the limits at infinity, we use Lemma 6 of \citet{chevalley2025deriving}, which states that:

\begin{equation*}
    \lim_{d \rightarrow \infty} \frac{\mathbb{E} [D_{top} (\mathcal{G}, \pi_{opt})]}{d} \leq \frac{ (1 - p_{int})^2}{p_{int} } \left [ 1 - \frac{1}{p_{int} \cdot c} \left ( 1 - e^{- p_{int} \cdot c} \right ) \right ].
\end{equation*}

The bound at infinity for the first part, i.e. normalized by the expected number of edges, follows directly by applying markov and using Lemma 6. For the second bound, where we normalized by the actual number of edges, we do the same, with the extra step of taking $\delta(d) = d^{\alpha}$, where $-\frac{1}{2} < \alpha < 0$. Then taking the Taylor expansion of $\frac{1}{1 - \delta(d)}$ around 0, only the first term, which is the same as in the first part, does not vanish to $0$.
\end{proof}

\begin{proof}[Proof of \Cref{lem:scale_free_expectation}]
\textbf{Step 1. Lower-tail concentration for $|E|$.}
By multiplicative Chernoff, for any $\delta_d\in(0,1)$,
\[
\PP\!\left(|E|<(1-\delta_d)\mu_d\right)
~\le~
\exp\!\left(-\frac{\mu_d\,\delta_d^2}{2}\right),
\qquad
\mu_d=\frac{c}{2}(d-1).
\]
Let $A_{\delta_d}=\{|E|\ge(1-\delta_d)\mu_d\}$, so $\PP(A_{\delta_d}^c)\le \exp(-\mu_d\delta_d^2/2)$.

\smallskip
\noindent
\textbf{Step 2. Bounding the ratio on $A_{\delta_d}$ and $A_{\delta_d}^c$.}
We use the deterministic bound
\(
D_{\mathrm{top}}\le \frac{(1-p_{int})^2}{p_{int}}\,d.
\)
On $A_{\delta_d}$,
\[
\frac{D_{\mathrm{top}}}{|E|}
~\le~
\frac{\tfrac{(1-p_{int})^2}{p_{int}}\,d}{(1-\delta_d)\mu_d}
~=~
\frac{(1-p_{int})^2}{p_{int}}
\cdot
\frac{d}{(1-\delta_d)\,\tfrac{c}{2}(d-1)}
~=~
\frac{2(1-p_{int})^2}{(1-\delta_d)\,c\,p_{int}}\cdot\frac{d}{d-1}.
\]
On $A_{\delta_d}^c$, trivially $\frac{D_{\mathrm{top}}}{|E|}\le 1$.

\smallskip
\noindent
\textbf{Step 3. Taking expectations.}
Decomposing over $A_{\delta_d}$ and $A_{\delta_d}^c$ gives
\[
\EE\!\left[\frac{D_{\mathrm{top}}}{|E|}\right]
~\le~
\frac{2(1-p_{int})^2}{(1-\delta_d)\,c\,p_{int}}\cdot\frac{d}{d-1}\,\PP(A_{\delta_d})
\;+\;
\PP(A_{\delta_d}^c),
\]
and the claim follows since $\PP(A_{\delta_d})\le 1$ and
$\PP(A_{\delta_d}^c)\le \exp(-\mu_d\delta_d^2/2)$.
\end{proof}

\begin{proof}[Proof of \Cref{cor:scale_free_expectation_limit}]
Pick any $\delta_d\downarrow 0$ with $\mu_d\delta_d^2\to\infty$ (e.g.\ $\delta_d=d^{-1/2}$). 
By \Cref{lem:scale_free_expectation},
\[
\EE\!\Bigl[g(\mathbf I,\mathbf E)\Bigr]
~\le~
\frac{2(1-p_{int})^2}{(1-\delta_d)c\,p_{int}}\cdot\frac{d}{d-1}
\;+\;
\exp\!\bigl(-\mu_d\delta_d^2/2\bigr).
\]
Letting $d\to\infty$ sends the exponential term to $0$, $\frac{d}{d-1}\to 1$, and
$\frac{1}{1-\delta_d}\to 1$. 
Finally, plug in the scale-free limit for the per-node misorientations (Lemma~6 of \citet{chevalley2025deriving}), namely
\[
\lim_{d\to\infty}\frac{\EE[D_{\mathrm{top}}(\mathcal G,\pi_{\mathrm{opt}})]}{d}
\;\le\;
\frac{(1-p_{int})^2}{p_{int}}
\left[
1 - \frac{1}{p_{int}c}\Bigl(1-e^{-p_{int}c}\Bigr)
\right],
\]
and combine with $\EE[|E|]\sim \mu_d=\frac{c}{2}(d-1)$ as in the proof of \Cref{lem:scale_free_expectation} to obtain the stated bound.
\end{proof}

\begin{proof}[Proof of \Cref{thm:sparse-er-deviation}]
We apply Warnke’s $0$–$1$ \emph{typical bounded differences}
inequality (\Cref{thm:warnke-0-1}) to the families
$\{E_{ij}\}_{i<j}$ and $\{I_k\}_{k=1}^{d}$ jointly.

\paragraph{1.  High–probability event $\Gamma$.}
Define
\[
  \Gamma
    :=\Bigl\{\deg_{\max}\le K_0\log d\Bigr\}
       \cap
       \Bigl\{|E|\le 2c d\Bigr\},
\]
where $K_0>0$ is chosen so that
\(
P(\neg\Gamma)\le d^{-(K+4)}, K > 0.
\)  
The two parts of $\Gamma$ follow from Chernoff bounds and a union bound
over vertices.

\paragraph{2.  Typical and worst–case Lipschitz constants.}

\[
\begin{aligned}
\text{edge }(i,j):&
 &c_{ij}=K_0\log d\;\mathbf 1_{\Gamma},\qquad d_{ij}=d;
\\[3pt]
\text{intervention }I_k:&
 &c_k     =K_0\log d\;\mathbf 1_{\Gamma},\qquad d_k    =d.
\end{aligned}
\]

Indeed, on $\Gamma$ flipping \emph{any} coordinate changes $f$ by at most
$\deg_{\max}\le K_0\log d$, whereas in the worst case it may change as
many as $d$ edges.

\paragraph{3.  Parameters $\gamma_k$, compensations $e_k$, and $y_k$.}
Choose the \emph{uniform} parameter
\[
  \gamma_k:=d^{-2}\quad(\forall k),
  \qquad
  e_k    :=\gamma_k(d_k-c_k)
          =\frac{d-K_0\log d\;\mathbf 1_\Gamma}{d^{2}}.
\]

\paragraph{4.  Bad event $\cB$.}
Warnke’s bound gives  
\[
  P(\cB)
   \le\sum_{k=1}^{N}\frac{P(\neg\Gamma)}{\gamma_k}
   \le\bigl(\Theta(d^{2})\bigr)\,d^{2}\,d^{-(K+4)}
   \le d^{-K}.
\]

\paragraph{5.  Constants $C$ and $V$ for the two functionals.}

\medskip\noindent
\emph{(i)  Unnormalised error $f$.}

\begin{itemize}
\item  \(
        C:=\max_k(c_k+e_k)
          \le K_0\log d+(d-K_0\log d)/d^{2}
          \le c_2\log d
        \)
      for some $c_2>0$.
\item  Edge variables: $p_e=c/d$, one–bit variance
      \((1-p_e)p_e(c_k+e_k)^2=O\!\bigl(\frac{c}{d}(\log d)^2\bigr)\).
      With $\binom{d}{2}$ edges this sums to
      $\Theta(d\log^{2}d)$.
\item  Intervention variables:
      $p_{\mathrm{int}}$ constant,
      $d$ terms of order $(\log d)^2$, yielding the same order.
      Hence \(V=\Theta(d\log^{2}d)\).
\end{itemize}

\noindent
Warnke’s inequality gives the tail in part (a).

\medskip\noindent
\emph{(ii)  Normalised error $g=f/|E|$.}

On~$\Gamma$, $|E|\le2cd$,
dividing the $c_k$ and $d_k = 1$.
$C$ and every summand of $V$ by $|E|^{-2}=O(d^{-2})$:

\begin{itemize}
\item  \(C=O\!\bigl(\frac{\log d}{d}\bigr)=c_4\tfrac{\log d}{d}\).
\item  Edge contribution to $V$:
      $O\bigl(\tfrac{c}{d}\cdot\tfrac{\log^{2}d}{d^{2}}\bigr)$
      times $\binom{d}{2}$, i.e.\ $O(\tfrac{\log^{2}d}{d})$.
\item  Intervention contribution:
      $d$ terms, each $O(\tfrac{\log^{2}d}{d^{2}})$, same order.
\end{itemize}

Thus $V=O(\tfrac{\log^{2}d}{d})$ and Warnke’s
0–1 inequality yields the bound in part (b).

\end{proof}

\begin{proof}[Proof of \Cref{thm:ba_deviation_bound}]
\textbf{Step 1: High-Probability Degree Bound.}  
By construction in the generalized Barabási--Albert model, every node (beyond the initial seed) has in-degree exactly \(m\). Existing results show that with high probability the out-degree of every node is at most \(C\,d^\beta\), where 
\[
\beta = \frac{1}{\gamma - 1} \quad \text{and} \quad \gamma = 2 + \frac{\kappa}{m}.
\]
That is, with high probability, the entire graph satisfies
\[
\mathrm{in\mbox{-}deg}(i) + \mathrm{out\mbox{-}deg}(i) \le m + C\,d^{\beta},\qquad \forall\, i\in V.
\]
We denote this event by \(\mathcal{E}_{BA}\) and its support by \(E_{BA}\).

\medskip
\textbf{Step 2: Bounded Differences (Conditional on \(\mathbf{E}\in E_{BA}\)).}  
Fix any graph \(\mathbf{E}\in E_{BA}\). Then the graph is deterministic, and only the intervention vector \(\mathbf{I}=(I_1,\dots,I_d)\) is random, with each \(I_i\) being an independent Bernoulli\((p_{int})\) variable. Define
\[
f(\mathbf{I}) \;:=\; D_{\mathrm{top}}\bigl(\mathcal{G}, \pi_{\mathrm{opt}}(\mathbf{I})\bigr),
\]
where \(\mathcal{G}\) is the DAG determined by \(\mathbf{E}\).  
For each node \(i\), let
\[
c_i \;:=\; \max_{\substack{\mathbf{I},\mathbf{I}'\in\{0,1\}^d \\ I_j= I'_j\text{ for all } j\neq i}} \Bigl|\,f(\mathbf{I}) - f(\mathbf{I}')\Bigr|.
\]
Under the “parents-only local influence” assumption—that is, flipping \(I_i\) can only affect the orientations of edges incident on node \(i\)—we have for any fixed \(\mathbf{E}\in E_{BA}\)
\[
c_i \le \mathrm{in\mbox{-}deg}(i) + \mathrm{out\mbox{-}deg}(i) \le m + C\,d^\beta.
\]
Thus, for all \(i\) we may take
\[
c_i \le m + C\,d^\beta.
\]
Consequently, the sum of squares is bounded by
\[
\sum_{i=1}^d c_i^2 \le d\,\bigl(m + C\,d^\beta\bigr)^2.
\]

\medskip
\textbf{Step 3: Application of McDiarmid’s Inequality (Unnormalized Error).}  
Conditional on the fixed graph \(\mathbf{E}\in E_{BA}\), the intervention vector \(\mathbf{I}\) consists of independent random variables. Hence, by McDiarmid’s inequality,
\[
P\Bigl(\,\bigl|f(\mathbf{I}) - \mathbb{E}[f(\mathbf{I}) \mid \mathbf{E}]\bigr|\ge t \,\Big|\, \mathbf{E}\Bigr)
~\le~
2\,\exp\!\Bigl(-\frac{2t^2}{\sum_{i=1}^d c_i^2}\Bigr)
~\le~
2\,\exp\!\Bigl(-\frac{2t^2}{\,d\,\bigl(m + C\,d^\beta\bigr)^2}\Bigr).
\]
Thus, for every fixed graph \(\mathbf{E}\in E_{BA}\), \(f(\mathbf{I})\) concentrates around its conditional mean \(\mathbb{E}[f(\mathbf{I}) \mid \mathbf{E}]\) with deviation scale \(O\bigl(d^{\beta+\tfrac12}\bigr)\).

\medskip
\textbf{Step 4: Application for the Normalized Error.}  
Define the normalized topological error by
\[
g(\mathbf{I}) \;:=\; \frac{f(\mathbf{I})}{|E|} \;=\; \frac{f(\mathbf{I})}{m\,d},
\]
since by construction, \(|E|=m\,d\) (when starting from an empty seed).  
For any fixed \(\mathbf{E}\in E_{BA}\), flipping the intervention \(I_i\) changes \(f(\mathbf{I})\) by at most \(m + C\,d^\beta\); hence, the corresponding change in \(g(\mathbf{I})\) is at most
\[
\frac{m + C\,d^\beta}{m\,d}.
\]
Since this bound holds for each \(i\), the sum of squares of these differences is
\[
\sum_{i=1}^d \left(\frac{m + C\,d^\beta}{m\,d}\right)^2 = d\,\left(\frac{m + C\,d^\beta}{m\,d}\right)^2 = O\Bigl(d^{\,2\beta-1}\Bigr).
\]
Thus, applying McDiarmid’s inequality conditionally on \(\mathbf{E}\) yields, for any \(t>0\),
\[
P\Bigl(\,\bigl|g(\mathbf{I}) - \mathbb{E}[g(\mathbf{I}) \mid \mathbf{E}]\bigr|\ge t \,\Big|\,\mathbf{E}\Bigr)
~\le~
2\,\exp\!\Bigl(-\frac{2t^2}{\,O\bigl(d^{\,2\beta-1}\bigr)}\Bigr).
\]
That is, \(g(\mathbf{I})\) concentrates around its conditional mean \(\mathbb{E}[g(\mathbf{I}) \mid \mathbf{E}]\) at the scale \(O\bigl(d^{\beta-\tfrac12}\bigr)\). Notably, if \(\gamma=3\) (so that \(\beta=\tfrac12\)), the concentration is at an \(O(1)\) level; if \(\gamma>3\) (so that \(\beta<\tfrac12\)), the normalized error becomes even tighter.

\medskip
\textbf{Conclusion.}  
The proof shows that for any fixed graph \(\mathbf{E}\in E_{BA}\) (i.e., in the high-probability event \(\mathcal{E}_{BA}\)), the deviation of the topological error \(f(\mathbf{I})\) (and hence \(g(\mathbf{I})\)) from its conditional mean is bounded by the stated exponential terms. The overall statement then holds with high probability / for almost every graph (over the random graph generation) for every such fixed graph.

This completes the proof.
\end{proof}

\section{Additional Empirical Results}

We provide supplementary plots corresponding to the parameter sweep described in \Cref{sec:experiments}. 

\subsection{Deviation plots using the standard deviation of FNR}

\Cref{fig:empirical_deviation_std} replicates the analysis of \Cref{fig:empirical_deviation_iqr}, but reports the standard deviation instead of the interquartile range (IQR). The qualitative behavior is the same: variability decreases with $d$ for all graph families, consistent with our theoretical predictions.

\begin{figure}[t]
    \centering
    \subfigure[ER]{
        \includegraphics[width=0.55\columnwidth]{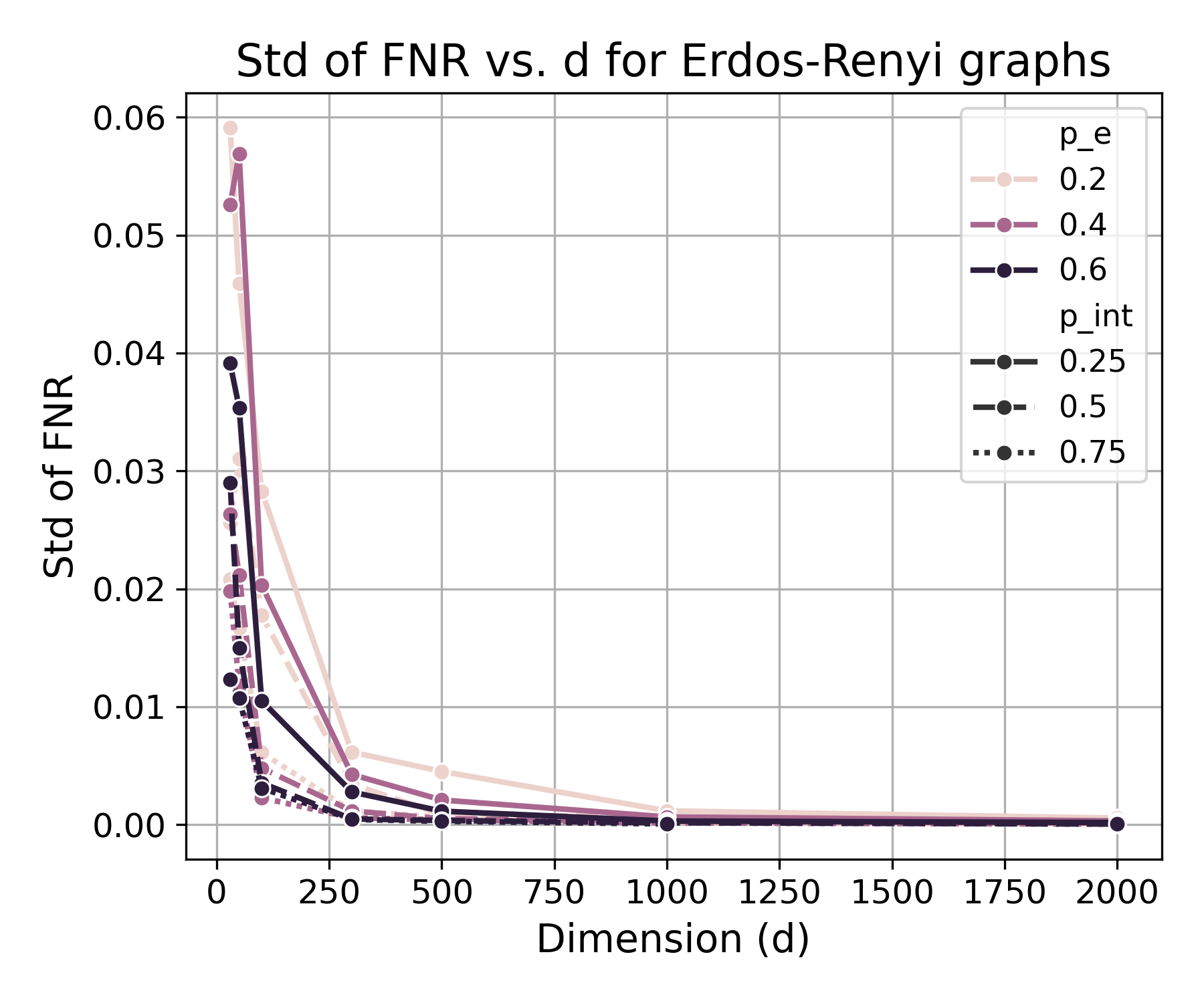}
    }
    \subfigure[Scale-free ER]{
        \includegraphics[width=0.55\columnwidth]{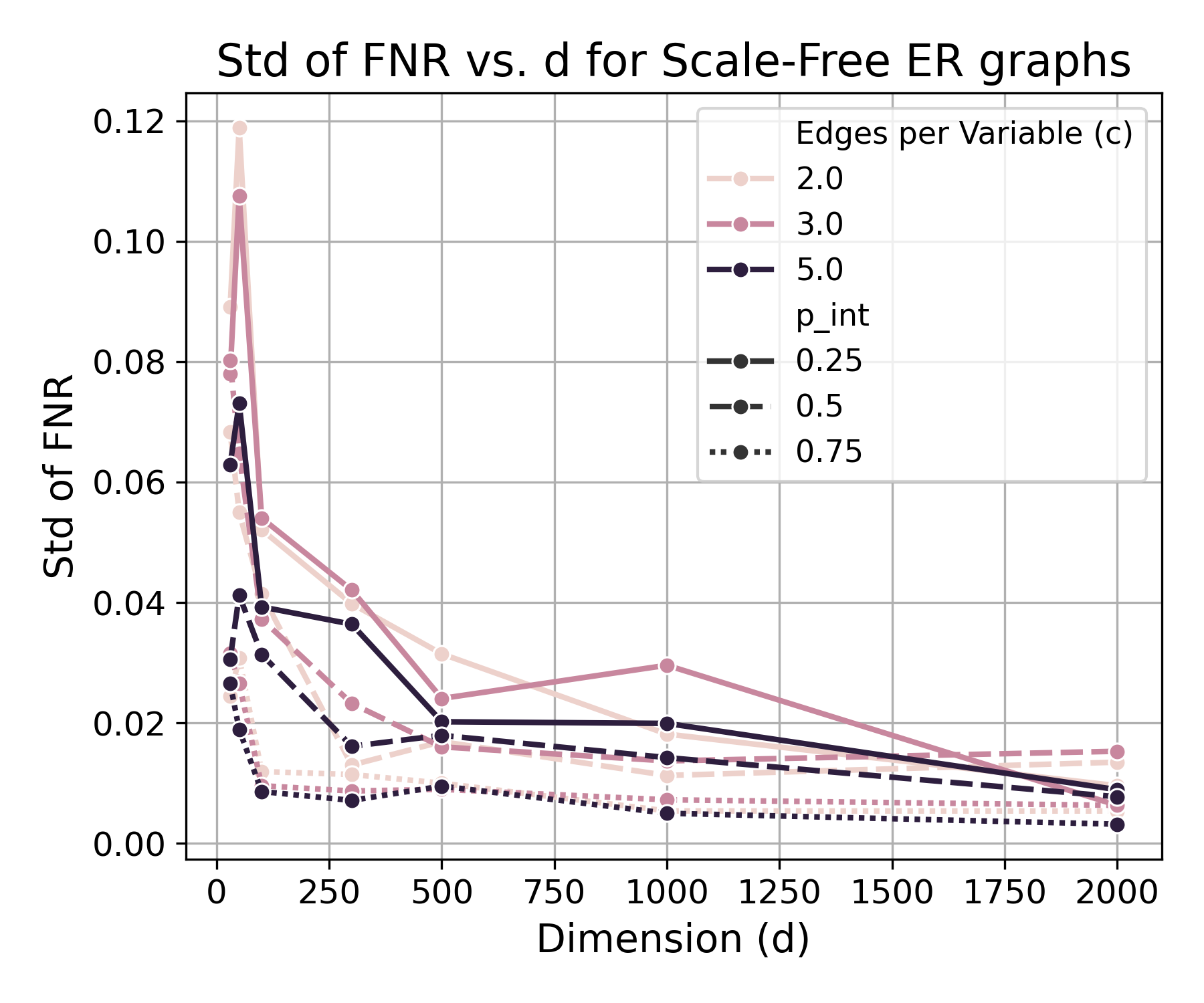}
    }
    \subfigure[Scale-free BA]{
        \includegraphics[width=0.55\columnwidth]{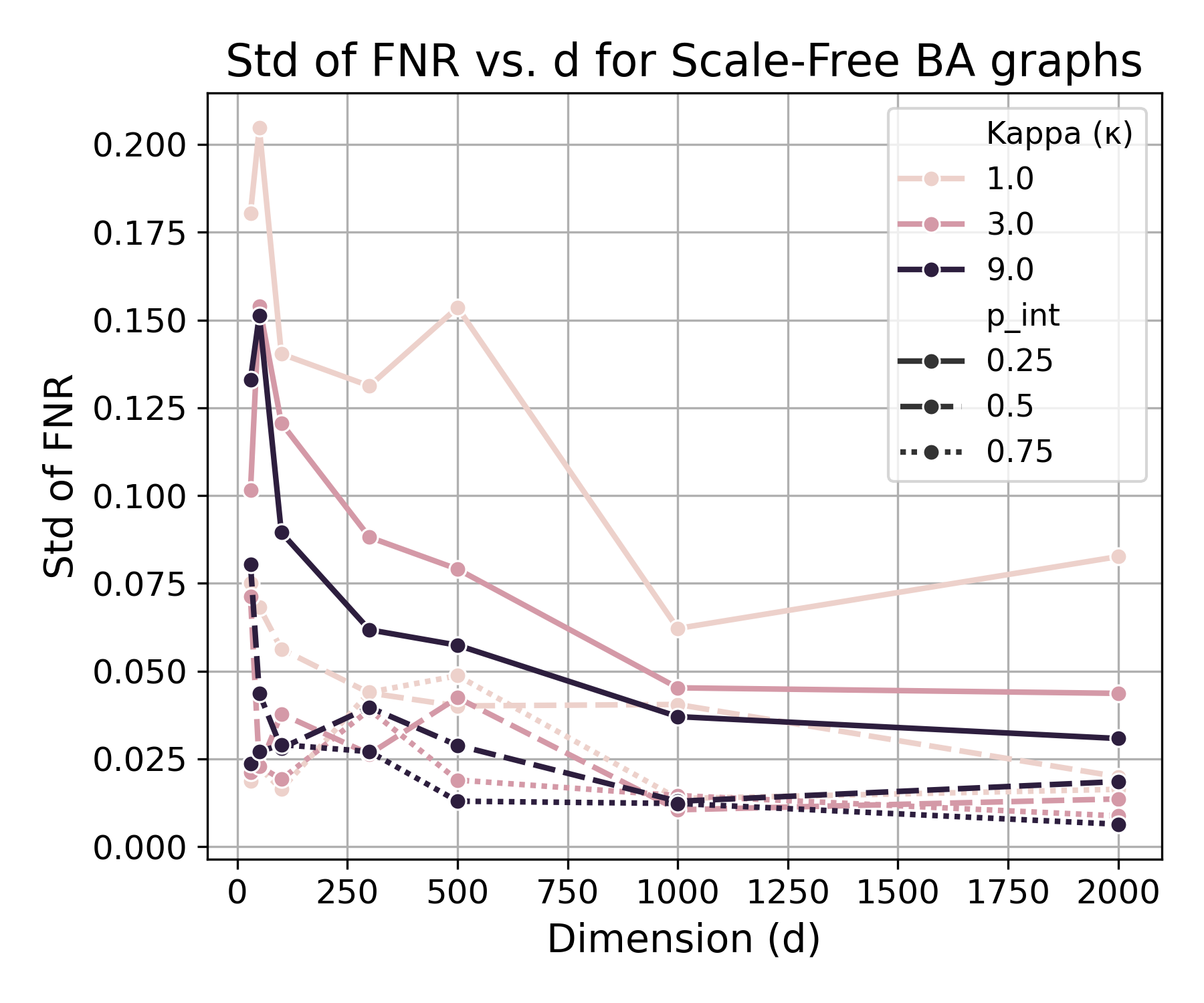}
    }
    \caption{Standard deviation of the FNR as a function of graph size $d$. For each graph family, results are shown across three density parameters and three values of intervention coverage $p_{\mathrm{int}}$. The deviation vanishes with growing $d$, in line with theory, except for scale-free BA graphs with $\kappa=1$, corresponding to a heavy-tailed regime with exponent $\gamma=\tfrac{7}{3}<3$.}
    \label{fig:empirical_deviation_std}
\end{figure}

\subsection{Empirical evaluation of the scaling of the mean error of FNR}
\label{app:mean_sim}

Using the same parameter sweep as in \Cref{sec:experiments}, \Cref{fig:empirical_mean} plots the mean false negative rate (FNR) as a function of graph size $d$, alongside the theoretical upper bounds derived in \Cref{app:add_theorems}. Across all graph families and parameter settings, the empirical means remain below the theoretical bounds, confirming their validity. The only visible exception occurs at high intervention coverage ($p_{\mathrm{int}}=0.75$), where the empirical error occasionally exceeds the bound. One possible explanation is that in this regime the distance matrix becomes denser, which increases the difficulty of the optimization and may cause DiffIntersort to return orderings farther from the true optimum. This highlights that our results analyze \emph{theoretical properties of the score function}, while the empirical performance also depends on the optimization dynamics of the chosen algorithm. 

\begin{figure}[t]
    \centering
    \subfigure[ER]{
        \includegraphics[width=0.55\columnwidth]{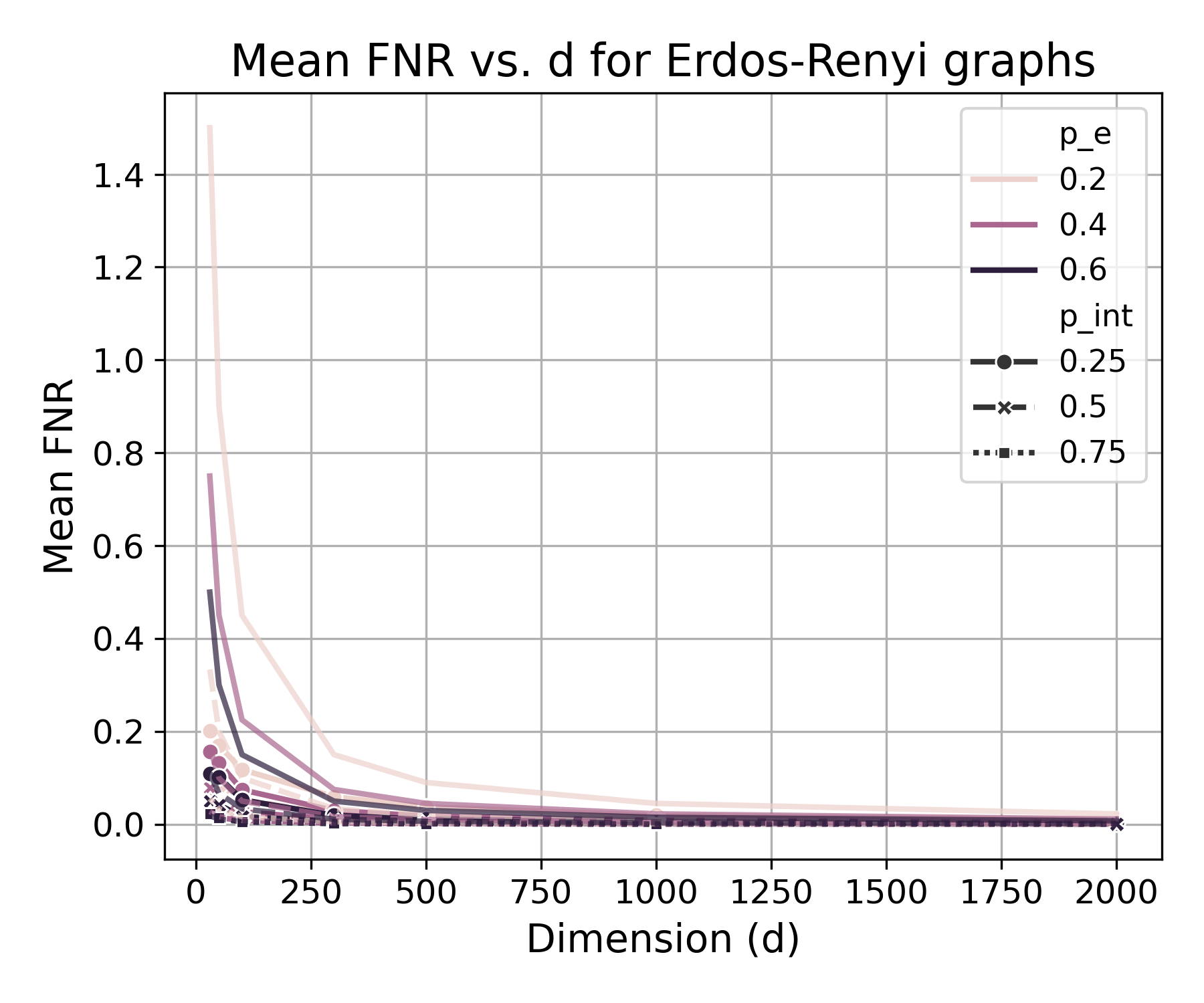}
    }
    \subfigure[Scale-free ER]{
        \includegraphics[width=0.55\columnwidth]{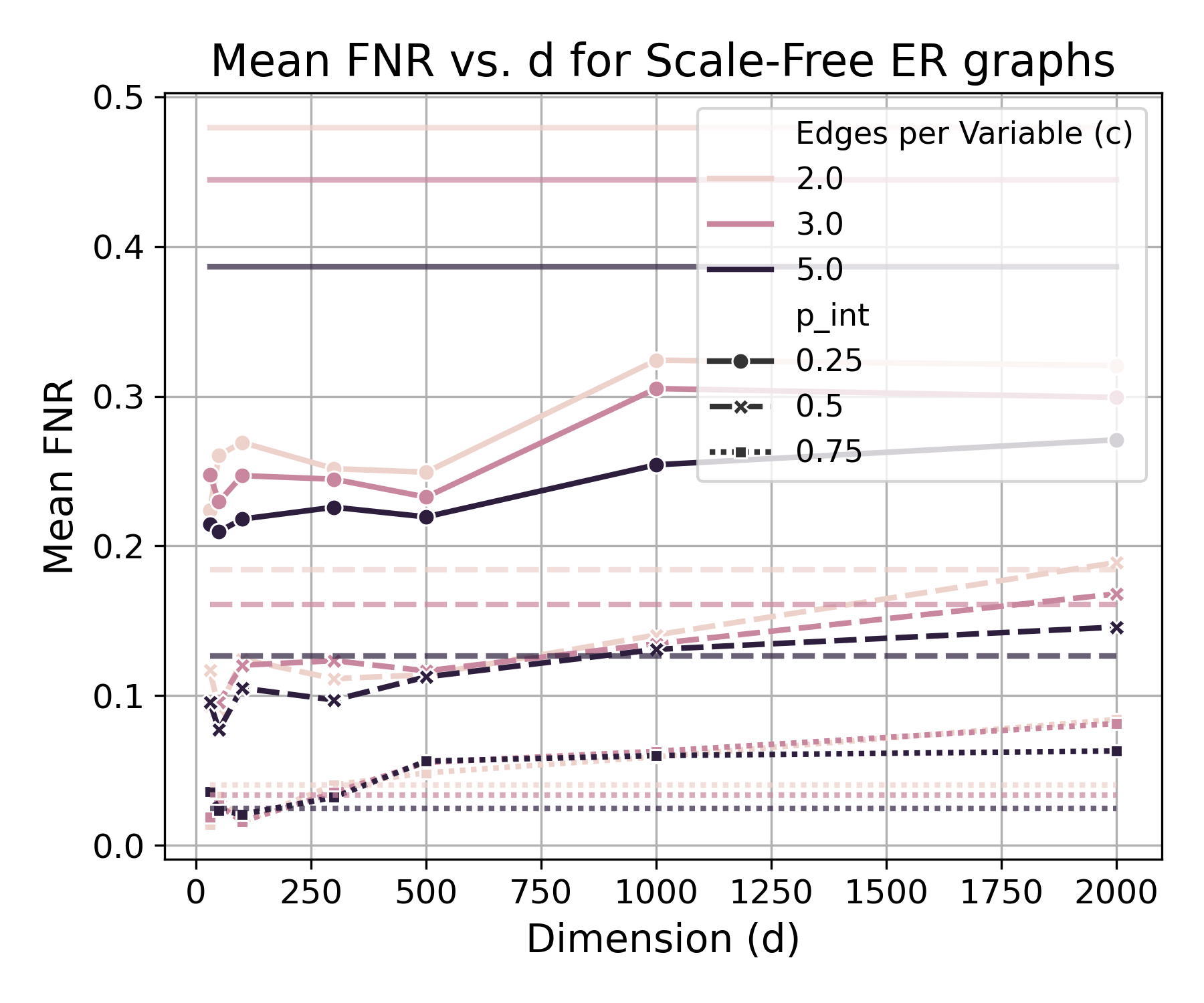}
    }
    \subfigure[Scale-free BA]{
        \includegraphics[width=0.55\columnwidth]{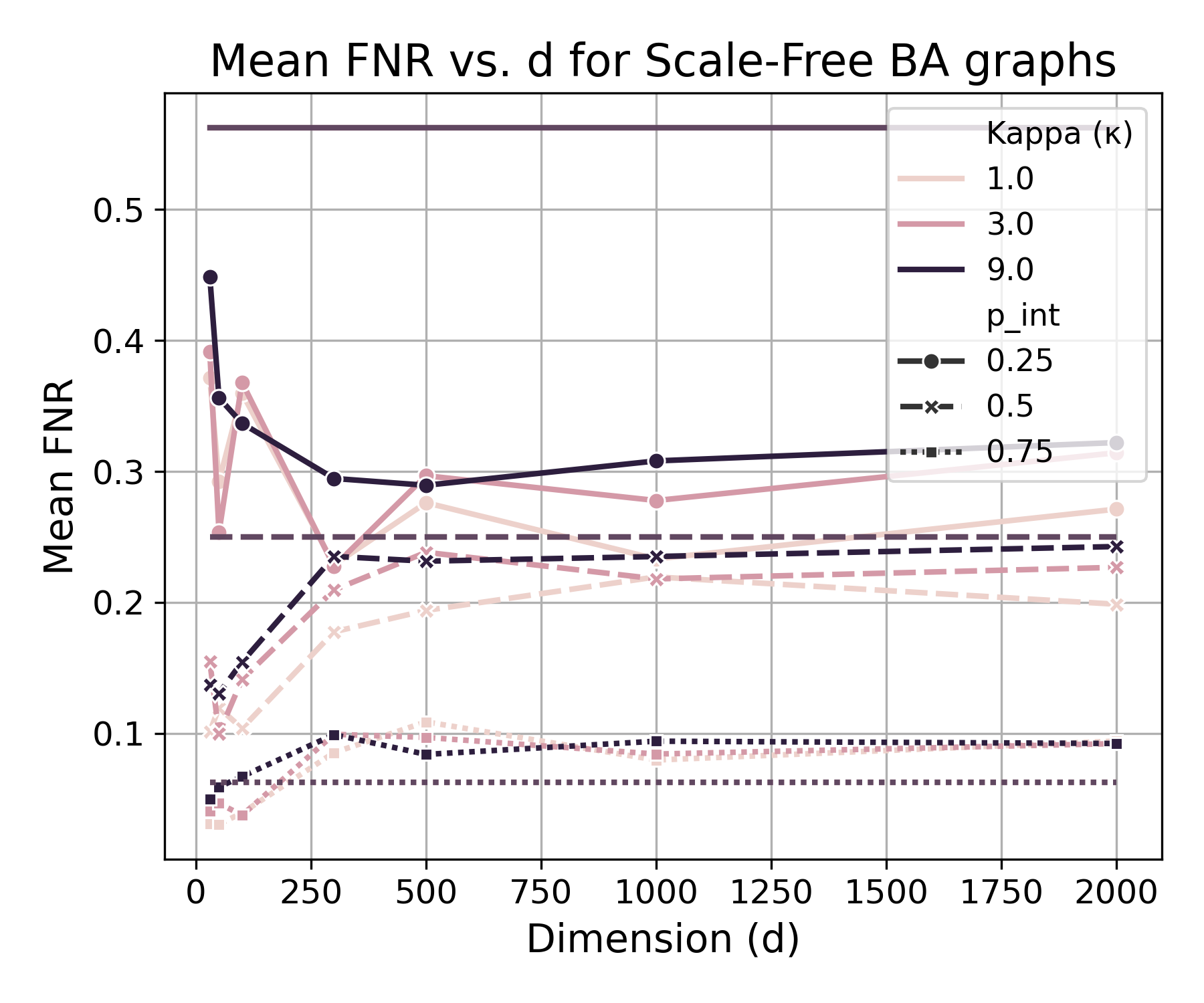}
    }
    \caption{Mean false negative rate (FNR) versus graph size $d$ across 
    Erdős–Rényi (ER), scale-free ER, and Barabási–Albert (BA) graphs. 
    The solid lines with points denote empirical averages; lines without points show 
    theoretical upper bounds from \Cref{app:add_theorems}. 
    The bounds hold across all settings, with a slight mismatch at 
    high intervention coverage ($p_{\mathrm{int}}=0.75$), likely due to 
    optimization difficulties in DiffIntersort.}
    \label{fig:empirical_mean}
\end{figure}

\subsection{Additional results for the unnormalized error $D_{\text{top}}$}
\label{app:unnormalized}

For completeness, we report the same set of empirical results as in 
\Cref{sec:experiments}, but using the unnormalized topological error 
$D_{\text{top}}(\mathcal{G},\pi_{\mathrm{opt}})$ instead of the normalized 
false negative rate (FNR).  
\Cref{fig:empirical_mean_dtop,fig:empirical_deviation_iqr_dtop,fig:empirical_deviation_std_dtop} 
show the mean values and deviation widths of $D_{\text{top}}$ across the 
three graph families (ER, scale-free ER, BA) under the same parameter sweep.   

\begin{figure}[t]
    \centering
    \subfigure[ER]{
        \includegraphics[width=0.55\columnwidth]{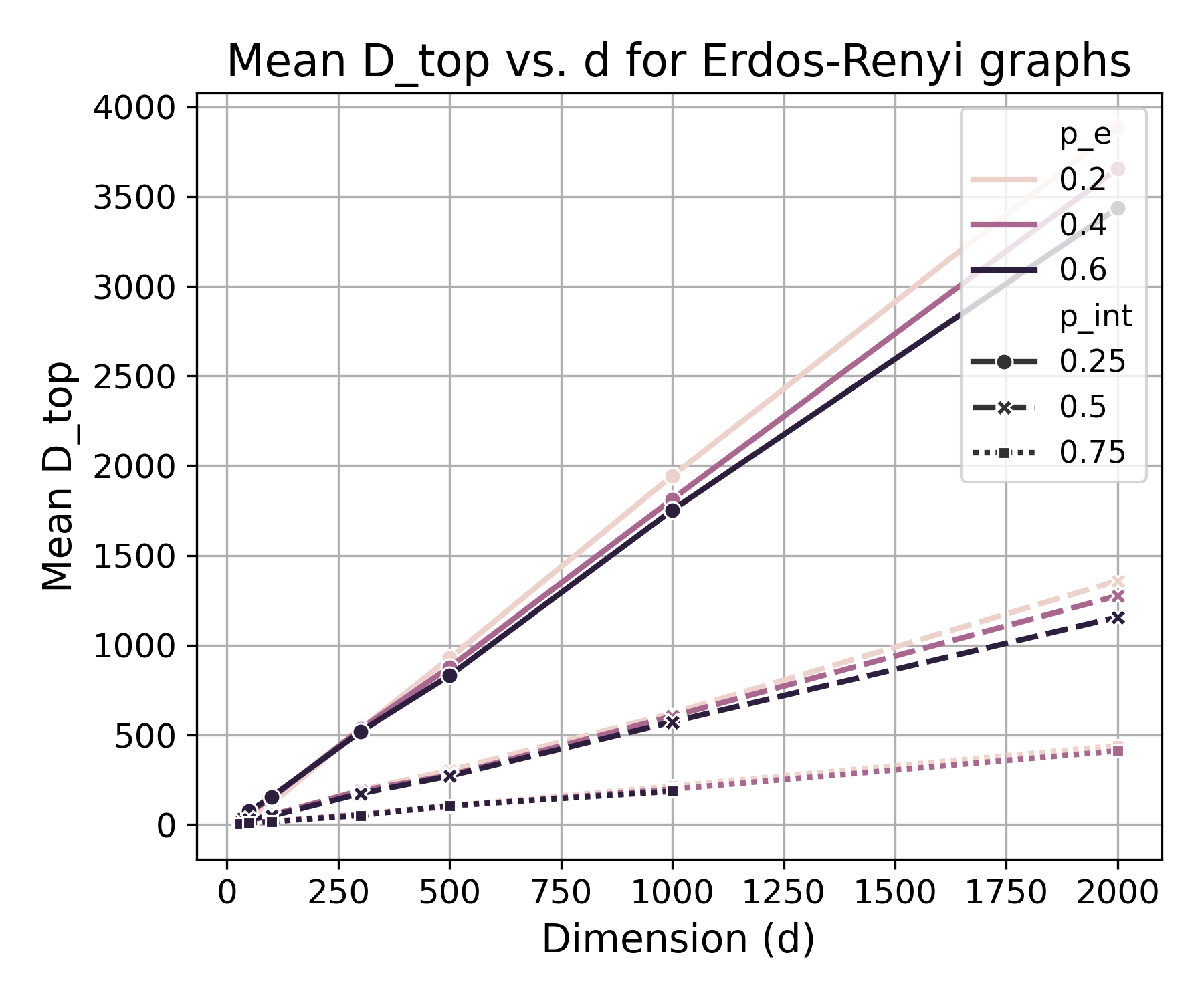}
    }
    \subfigure[Scale-free ER]{
        \includegraphics[width=0.55\columnwidth]{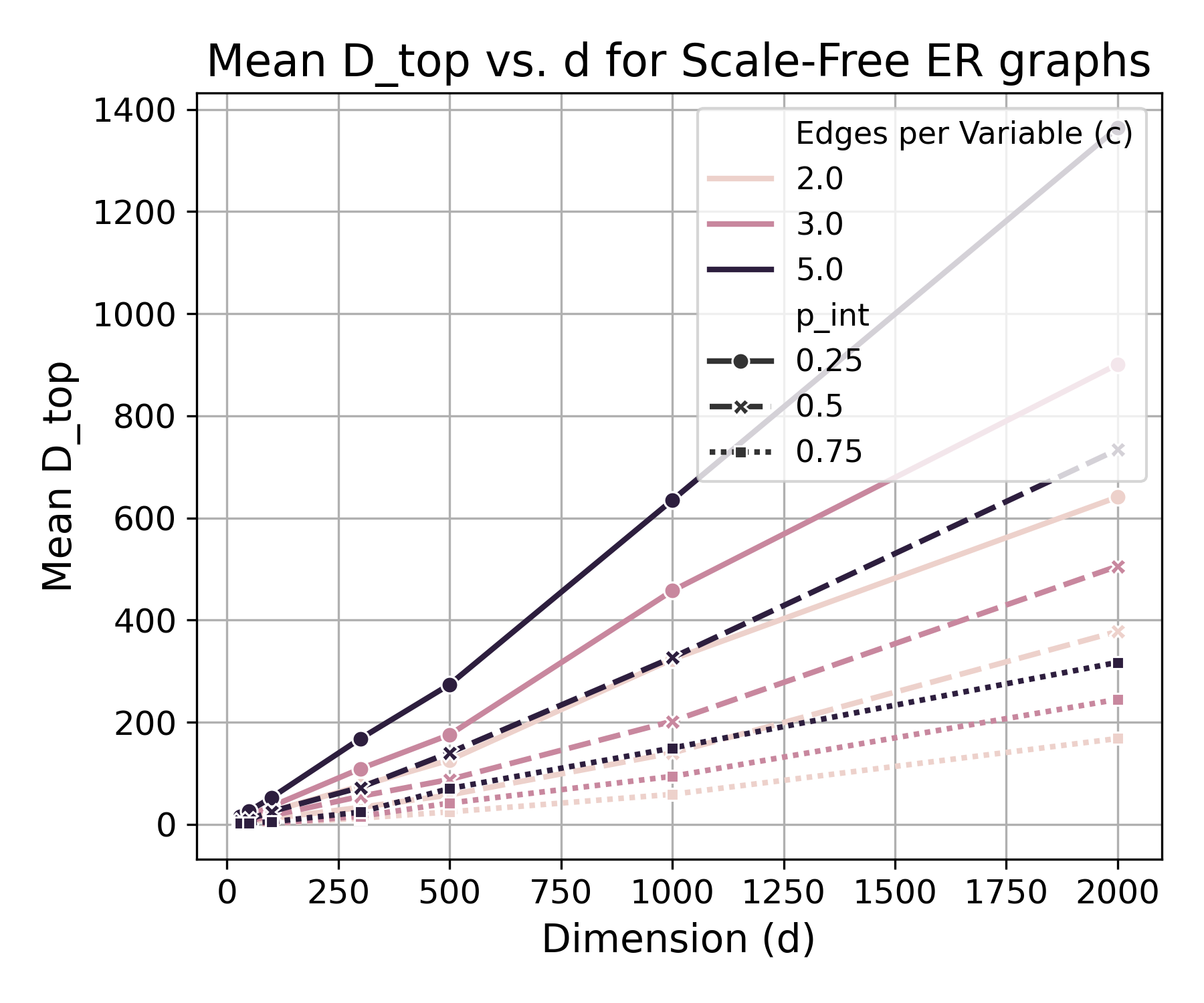}
    }
    \subfigure[Scale-free BA]{
        \includegraphics[width=0.55\columnwidth]{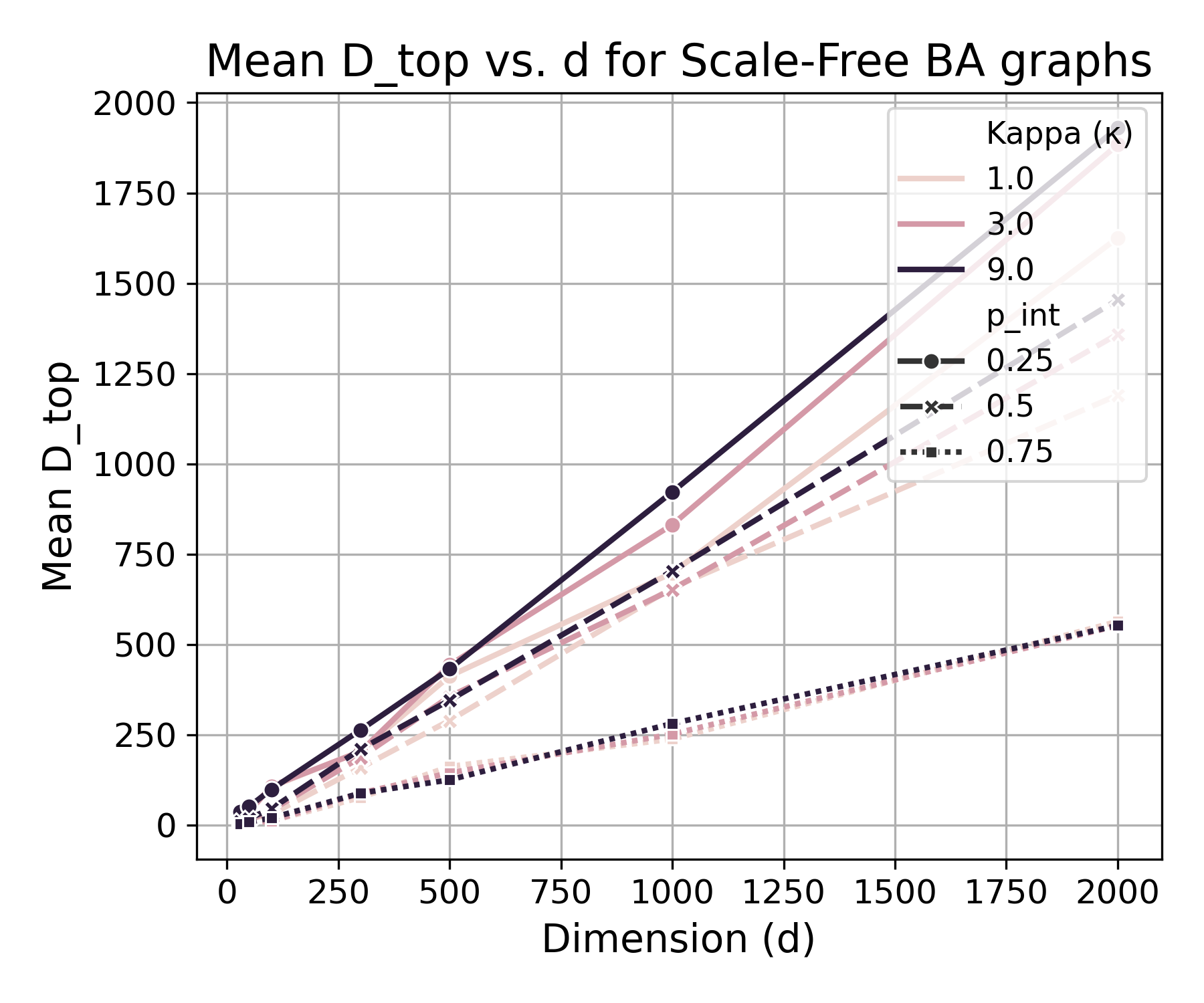}
    }
    \caption{Mean unnormalized error $D_{\text{top}}$ versus graph size $d$ 
    across Erdős–Rényi (ER), scale-free ER, and Barabási–Albert (BA) graphs. 
    }
    \label{fig:empirical_mean_dtop}
\end{figure}

\begin{figure}[t]
    \centering
    \subfigure[ER]{
        \includegraphics[width=0.55\columnwidth]{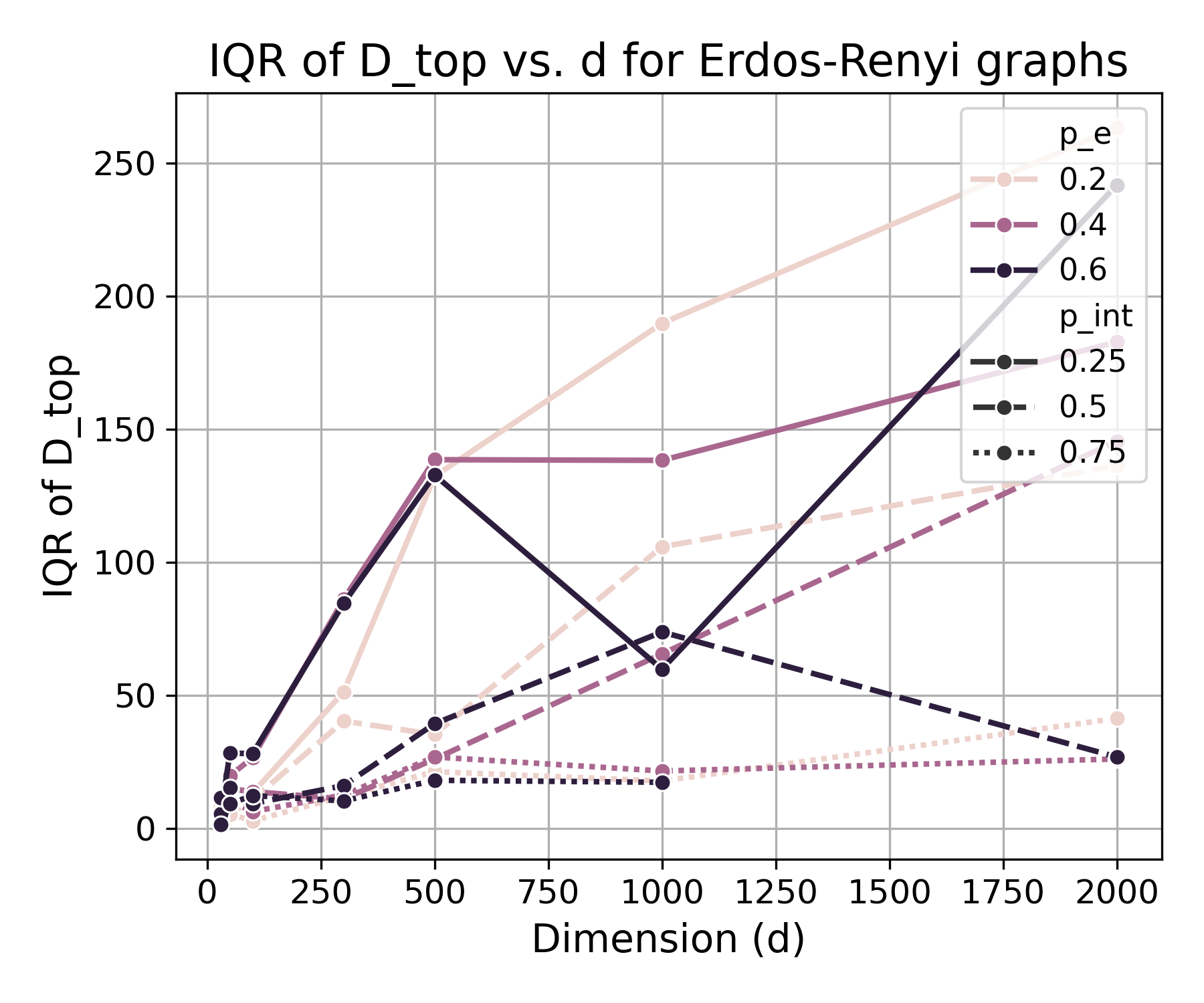}
    }
    \subfigure[Scale-free ER]{
        \includegraphics[width=0.55\columnwidth]{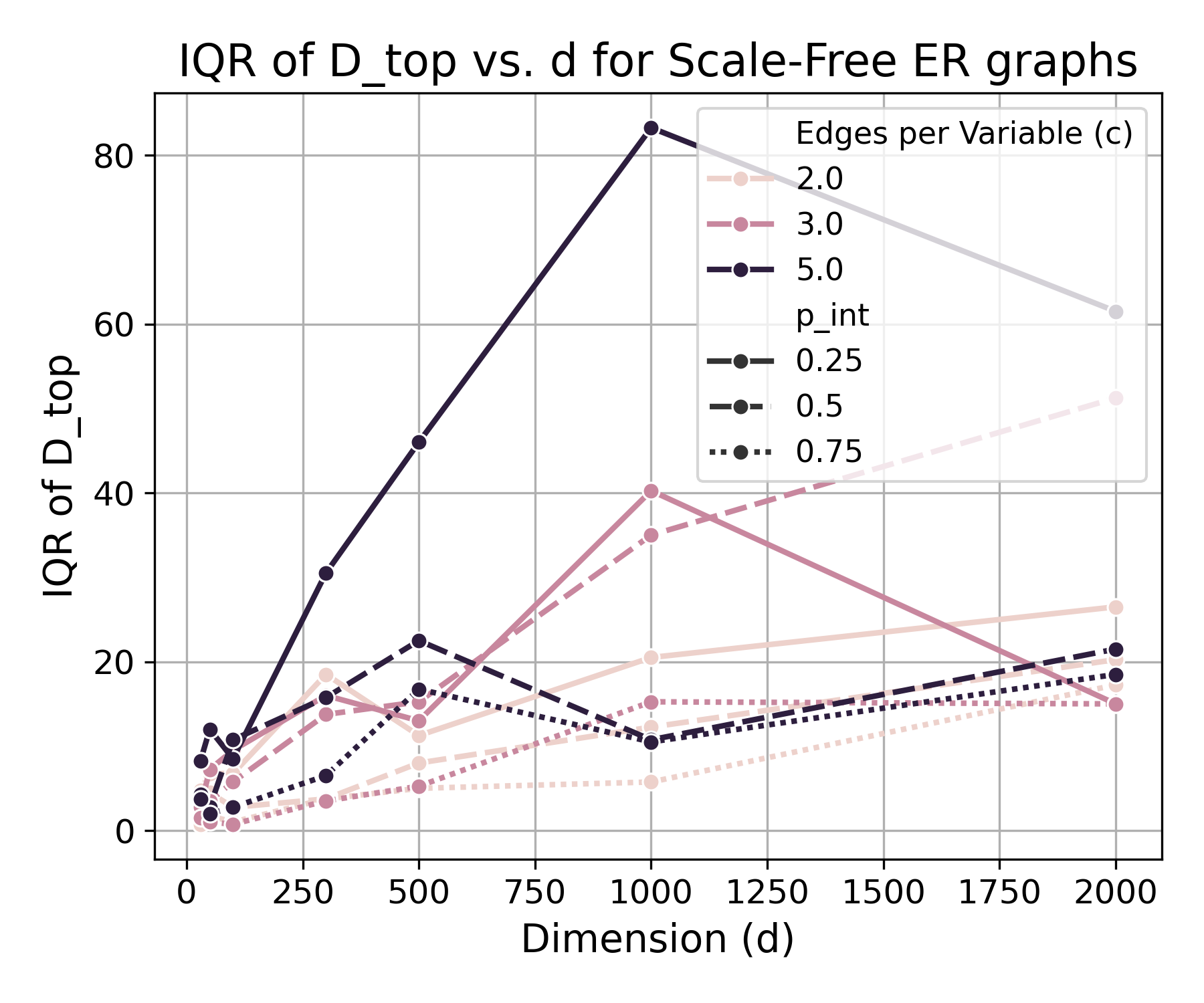}
    }
    \subfigure[Scale-free BA]{
        \includegraphics[width=0.55\columnwidth]{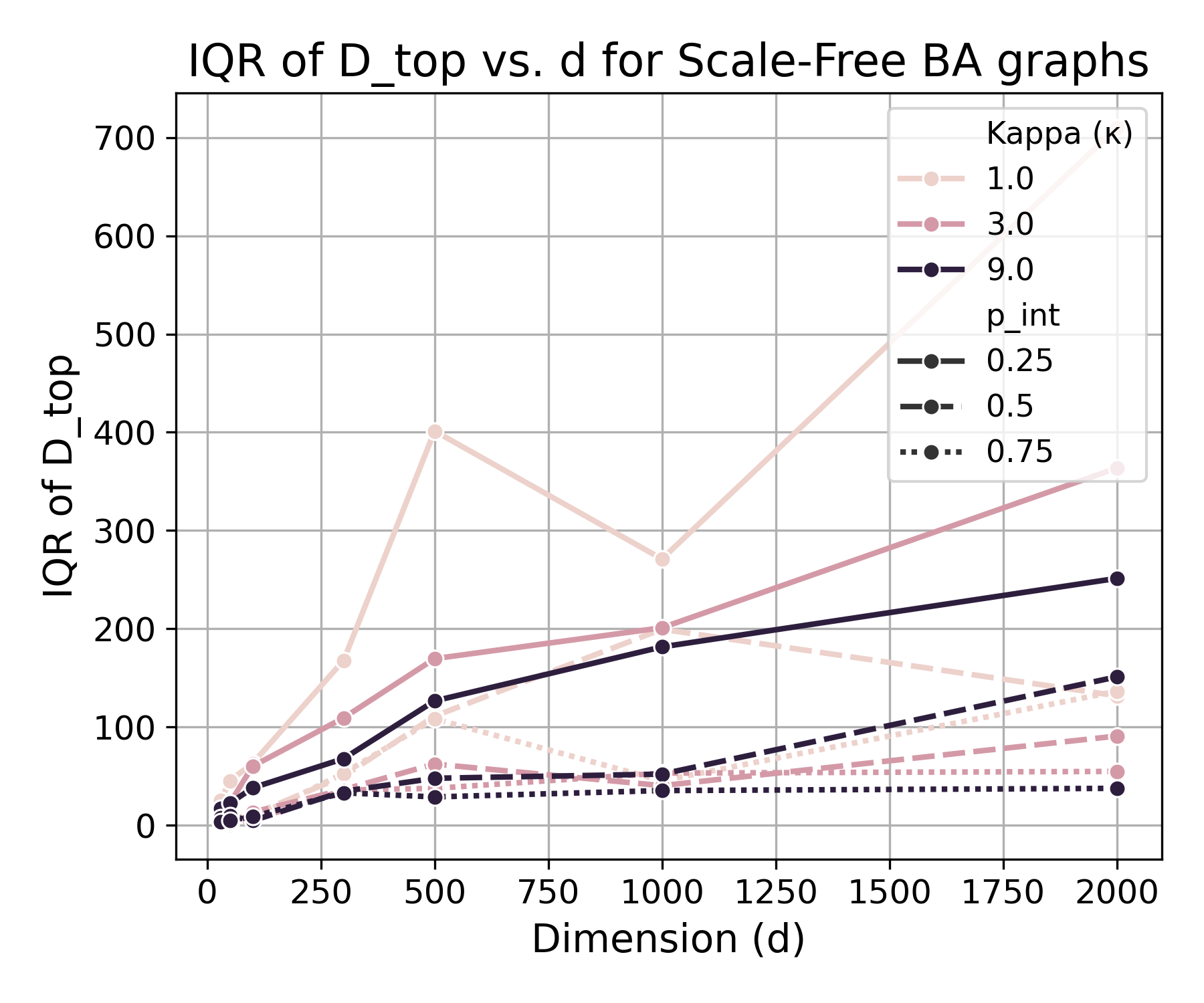}
    }
    \caption{IQR of unnormalized error $D_{\text{top}}$ versus graph size $d$ 
    across Erdős–Rényi (ER), scale-free ER, and Barabási–Albert (BA) graphs. 
    }
    \label{fig:empirical_deviation_iqr_dtop}
\end{figure}

\begin{figure}[t]
    \centering
    \subfigure[ER]{
        \includegraphics[width=0.55\columnwidth]{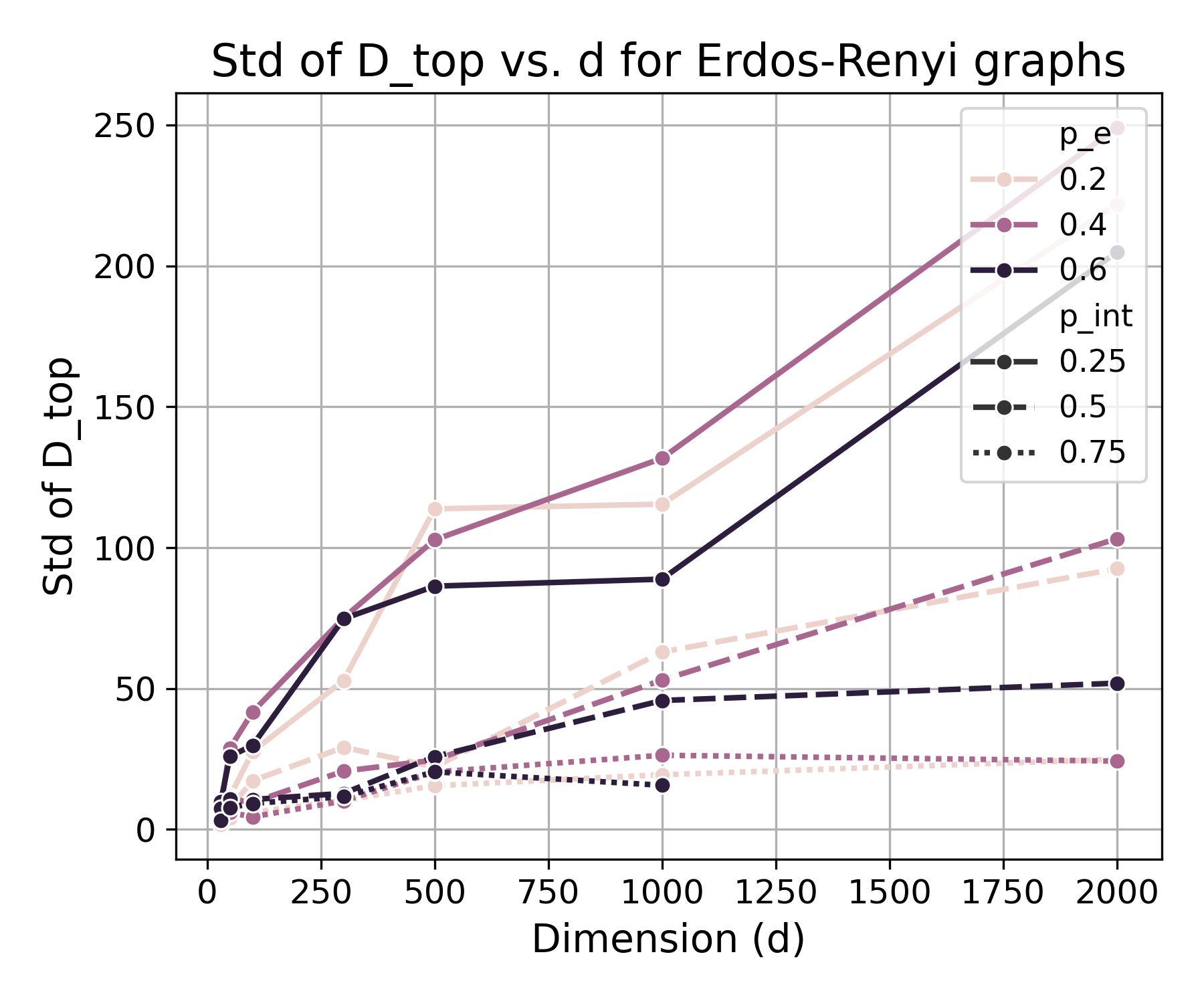}
    }
    \subfigure[Scale-free ER]{
        \includegraphics[width=0.55\columnwidth]{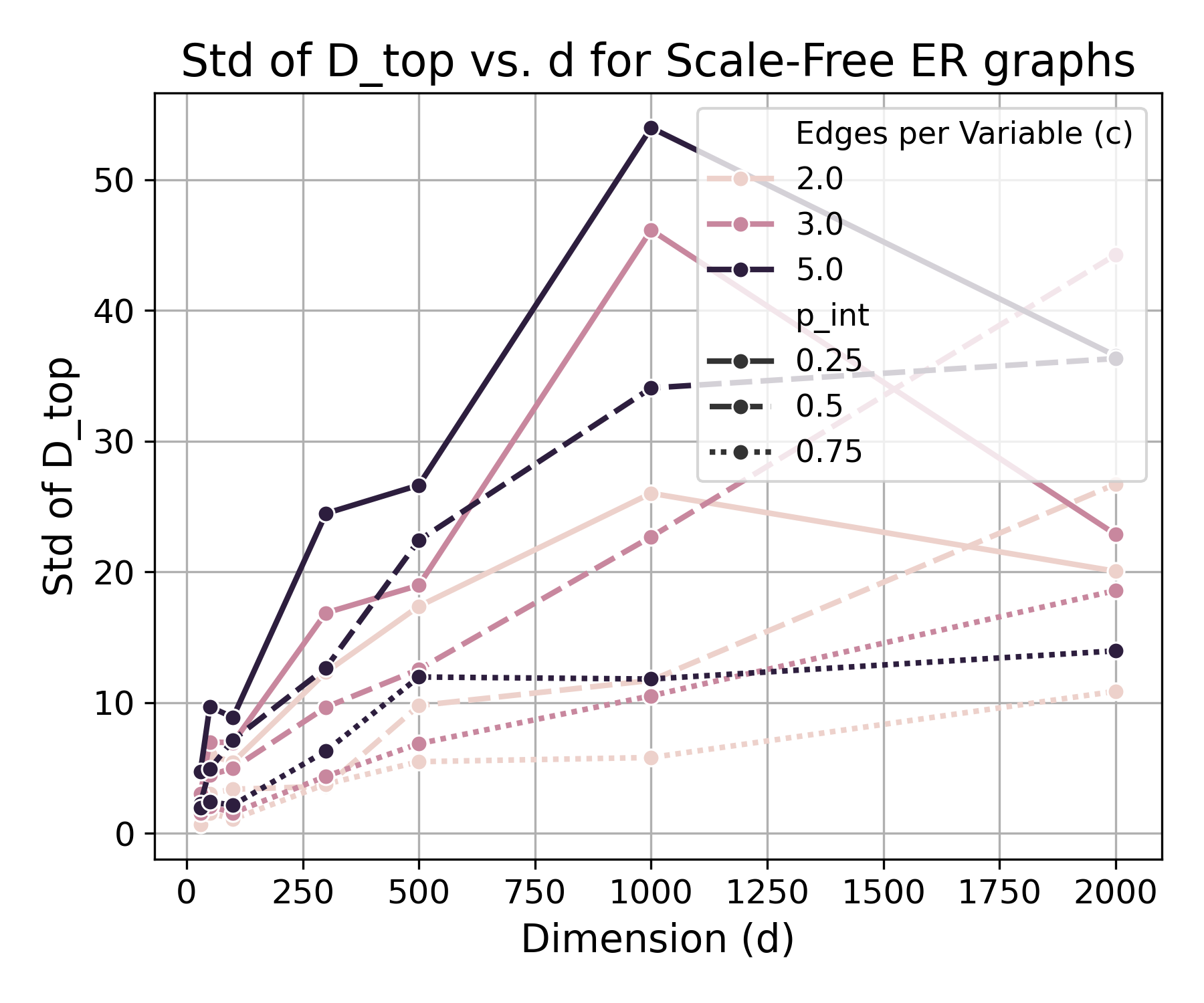}
    }
    \subfigure[Scale-free BA]{
        \includegraphics[width=0.55\columnwidth]{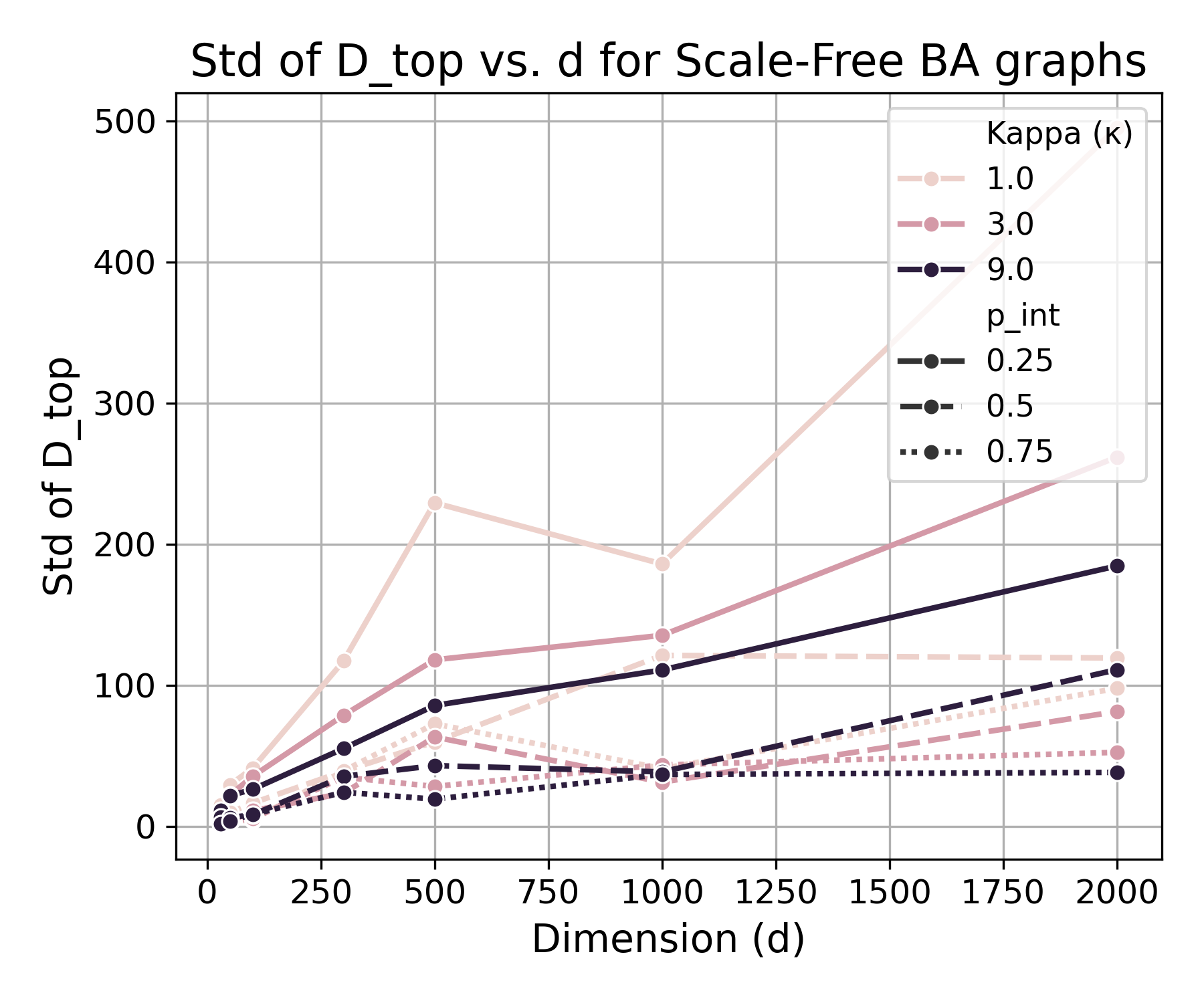}
    }
    \caption{Mean unnormalized error $D_{\text{top}}$ versus graph size $d$ 
    across Erdős–Rényi (ER), scale-free ER, and Barabási–Albert (BA) graphs. 
    The solid lines denote empirical averages; dashed lines show theoretical 
    upper bounds. Results are consistent with those for the normalized FNR.}
    \label{fig:empirical_deviation_std_dtop}
\end{figure}


\subsection{Scaling of maximum degree in BA graphs}

\Cref{fig:scaling_ba} examines how the empirical scaling of the maximum degree in Barabási–Albert (BA) graphs approaches the theoretical prediction. We generate graphs with $m=3$ edges per node and consider $\kappa \in \{1.0, 3.0, 9.0\}$. Theory predicts degree exponents $\gamma = \tfrac{7}{3},\,3,\,5$, while the empirical fits yield $\hat{\gamma} = 2.37,\,2.93,\,4.11$, respectively. The estimates align closely with the theoretical values, with only a mild underestimation for $\kappa=9.0$, suggesting slower convergence in this regime. Overall, these results validate that our generated BA graphs faithfully reproduce the expected heavy-tailed behavior.

\begin{figure}
    \centering
    \includegraphics[width=0.8\linewidth]{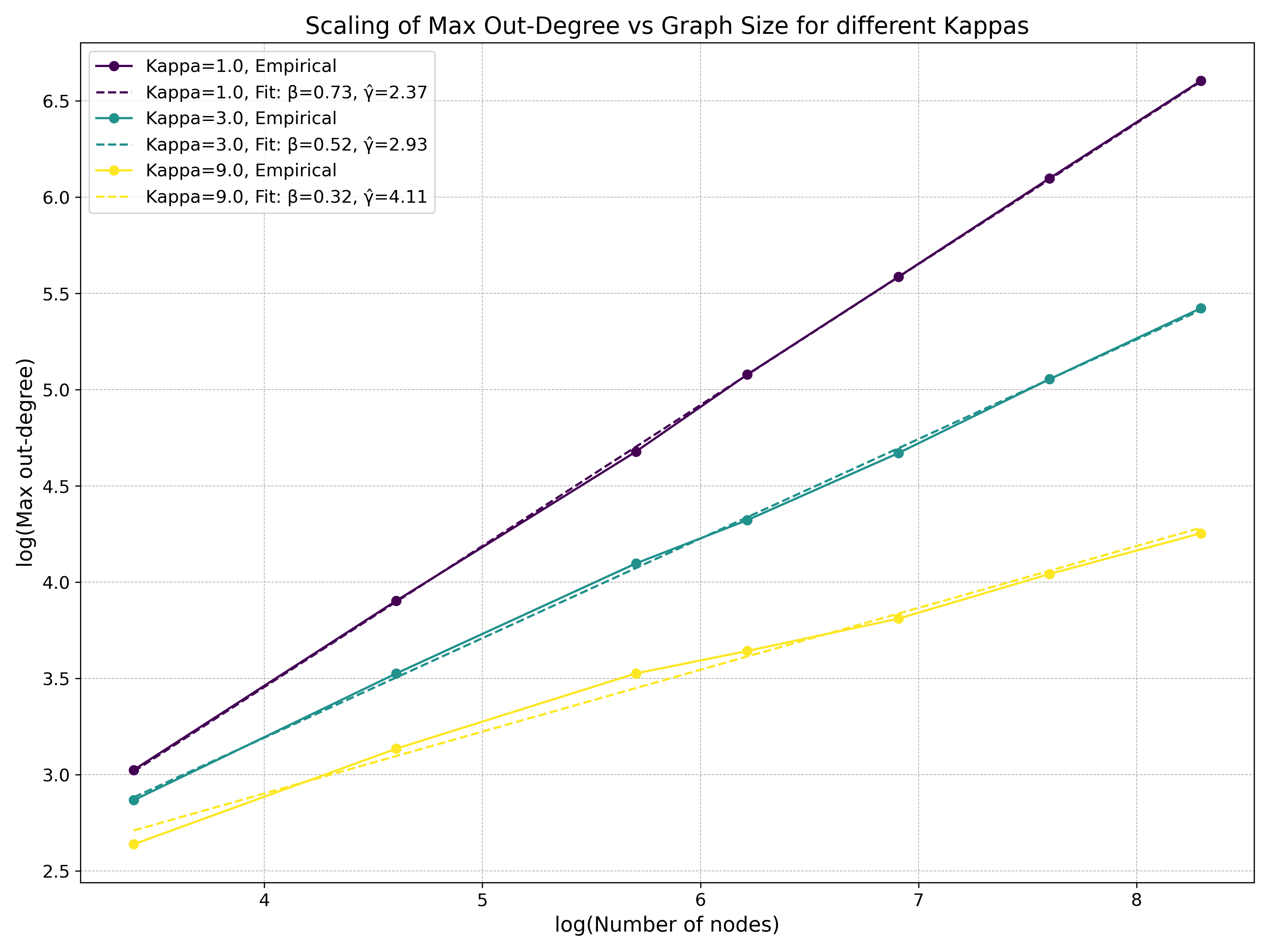}
    \caption{Empirical scaling of the maximum degree in BA graphs with $m=3$ edges per node for different values of $\kappa$. Fitted lines correspond to estimated exponents $\hat{\gamma}$, compared against theoretical predictions. Graph sizes range from $d=30$ to $d=4000$. The close match confirms that the generated graphs reproduce the expected heavy-tailed scaling.}
    \label{fig:scaling_ba}
\end{figure}



\end{document}